\title{Identifiability of Deep Polynomial Neural Networks}
\author{%
Konstantin Usevich\thanks{Corresponding author} , Ricardo Borsoi, Clara Dérand, Marianne Clausel\\ 
  Université de Lorraine, CNRS, CRAN\\
  Nancy, F-54000, France \\
  \texttt{firstname.lastname@univ-lorraine.fr}\thanks{In this version, the appendices have been reworked for better readability. Appendix E explains the changes between the submitted and the camera-ready version.} \\
}
\definecolor{darkgreen}{rgb}{0,0.6,0}
\definecolor{lightred}{rgb}{1,0.2,0.2}
\newcommand{\tens}[1]{\boldsymbol{\mathcal{#1}}}
\newcommand{\matr}[1]{\boldsymbol{#1}}
\newcommand{\bmx}{\begin{bmatrix}}
\newcommand{\emx}{\end{bmatrix}}
\newcommand{\bsm}{\left[\begin{smallmatrix}}
\newcommand{\esm}{\end{smallmatrix}\right]}
\newcommand{\vect}[1]{\boldsymbol{#1}}
\newcommand{\cpdfour}[4]{[\![#1, #2,#3,#4]\!]}
\newcommand{\triprod}[3]{\mathop{\operator@font TP}(#1, #2,#3)}
\newcommand{\set}[1]{\mathscr{#1}}
\newcommand{\T}{{\sf T}}        
\newcommand{\tensorize}[4]{\mathop{\operator@font tens}_{#1,#2,#3}\{#4\}}
\newcommand{\matricize}[3]{\mathop{\operator@font matr}_{#1,#2}\{#3\}}
\newcommand{\vecl}[1]{\mathop{\operator@font vec}\{#1\}}
\newcommand{\rank}[1]{\mathop{\operator@font rank}\{#1\}}
\newcommand{\colrank}[1]{\mathop{\operator@font colrank}\{#1\}}
\newcommand{\krank}[1]{\mathop{\operator@font krank}\{#1\}}
\newcommand{\trace}[1]{\mathop{\operator@font trace}\{#1\}}
\newcommand{\Diag}[1]{\mathop{\operator@font Diag}\left\{#1\right\}}    
\newcommand{\diag}[1]{\mathop{\operator@font diag}\{#1\}}    
\newcommand{\Span}[1]{\mathop{\operator@font Span}\{#1\}}    
\newcommand{\argmin}{\mathop{\operator@font argmin}}
\newcommand{\sspan}{\mathop{\mathrm{span}}}
\newcommand{\cond}[1]{\mathop{\operator@font cond}\{#1\}}
\newcommand{\lker}[1]{\mathop{\operator@font lker}\{#1\}}
\newcommand{\ptd}[5]{\mathop{\operator@font PT2D}(#1, #2,#3,#4,#5)}
\newcommand{\ptdsym}[3]{\mathop{\operator@font DEDICOM}(#1, #2,#3)}
\newcommand{\range}[1]{\mathop{\operator@font range}\{#1\}}
\newcommand{\Image}{\mathop{\operator@font Im}}
\newcommand{\RR}{\mathbb{R}}
\newcommand{\CC}{\mathbb{C}}
\newcommand{\FF}{\mathbb{F}}
\newcommand{\edim}[1]{%
\def\FirstArg{#1}%
\def\One{1}%
\def\Two{2}%
  \ifx\FirstArg\One%
    \ensuremath{I}%
    \else%
  \ifx\FirstArg\Two%
    \ensuremath{J}%
    \else%
    \ensuremath{K}
  \fi%
 \fi%
}
\newcommand{\V}{\set{V}}
\newcommand{\x}{\vect{x}}
\newcommand{\w}{\vect{w}}
\newcommand{\z}{\vect{z}}
\newcommand{\ttheta}{\vect{\theta}}
\renewcommand{\d}{\vect{d}}
\renewcommand{\r}{\vect{r}}
\newcommand{\ZZ}{\mathbb{Z}}
\newcommand{\PSpace}[2]{\set{P}_{#1,#2}}
\newcommand{\HSpace}[2]{\set{H}_{#1,#2}}
\newcommand{\rall}{r_{total}}
\newcommand{\neurovar}[1]{\V^{#1}_{\d,\r}}
\newcommand{\leadterm}[1]{\mathrm{lt}\{#1\}}
\newcommand{\PNNtwo}[2]{\mathrm{PNN}_{#1}[#2]}
\newcommand{\PNN}[3]{\mathrm{PNN}_{#1}[(#2,#3)]}
\newcommand{\hPNN}[2]{\mathrm{hPNN}_{#1}[#2]}
\DeclareMathOperator{\homog}{homog}
\theoremstyle{plain}
\newtheorem{theorem}{Theorem}
\newtheorem{lemma}[theorem]{Lemma}
\newtheorem*{theorem*}{Theorem}
\newtheorem{definition}[theorem]{Definition}
\newtheorem{proposition}[theorem]{Proposition}
\newtheorem{conjecture}[theorem]{Conjecture}
\newtheorem{remark}[theorem]{Remark}
\newtheorem{example}[theorem]{Example}
\newtheorem{corollary}[theorem]{Corollary}
\newtheorem*{claim*}{Claim}
\newtheorem*{assumption*}{Assumption}
\begin{document}
\quad



\maketitle

\begin{abstract}
Polynomial Neural Networks (PNNs) possess a rich algebraic and geometric structure. 
However, their identifiability---a key property for ensuring interpretability---remains poorly understood. 
In this work, we present a comprehensive analysis of the identifiability of deep PNNs, including architectures with and without bias terms.
Our results reveal an intricate interplay between activation degrees and layer widths in achieving identifiability. 
As special cases, we show that architectures with non-increasing layer widths are generically identifiable under mild conditions, while encoder-decoder networks are identifiable when the decoder widths do not grow too rapidly compared to the activation degrees.
Our proofs are constructive and center on a connection between deep PNNs and low-rank tensor decompositions, and Kruskal-type uniqueness theorems.
We also settle an open conjecture on the dimension of PNN's \emph{neurovarieties}, and provide new bounds on the activation degrees required for it to reach the expected dimension.

\end{abstract}

\section{Introduction}
Neural network architectures which use polynomials as activation functions---\emph{polynomial neural networks} (PNN)---have emerged as architectures that combine competitive experimental performance (capturing  high-order interactions between input features) while allowing a fine grained theoretical analysis. On the one hand, PNNs have been employed in many problems in computer vision \cite{chrysos2020PiNetsPolynomialNeuralNets,chrysos2022augmentingClassifiersPolynomialNNs,yavartanoo2021polynetShape}, image representation \cite{yang2022polynomialNeuralFields}, physics \cite{bu2021quadraticNetworksSolvingPDEs} and finance \cite{nayak2018estimatingFinancePNNs}, to name a few. 
On the other hand, the geometry of function spaces associated with PNNs, called \emph{neuromanifolds}, can be analyzed using tools from algebraic geometry. Properties of such spaces, such as their dimension, shed light on the impact of a PNN architecture (layer widths and activation degrees) on the \emph{expressivity} of feedforward, convolutional and self-attention PNN architectures \cite{KileTB19-NeurIPS,ShahMK24-convolutional,HenrMK24-identifiability,KubjLW24-algstat,FinkRWY25-expressiveness}. They also determine the landscape of their loss function and the dynamics of their training process \cite{KileTB19-NeurIPS,pollaci2023spuriousValleysPNNs,ArjeBKPT25-shallow}. 

Moreover, PNNs are also closely linked to low-rank tensor decompositions \cite{kolda2009tensor,sidiropoulos2017tensor,cichocki2017tensorNetworksPart1,bhaskara2014uniquenessTensor,borsoi2025tensorsNNreview}, which play a fundamental role in the study of latent variable models due to their \textit{identifiability} properties \cite{anandkumar2014tensor}. In fact, single-output 2-layer PNNs are equivalent to symmetric tensors \cite{KileTB19-NeurIPS}. 
Identifiability---whether the parameters and, consequently, the hidden representations of a NN can be determined from its response up to some equivalence class of trivial ambiguities such as permutations of its neurons---is a key question in NN theory \cite{sussmann1992uniquenessNeuralNets,albertini1993neuralNetsFunctionForm,albertini1993uniquenessWeightsNeuralNets,petzka2020notesIdentifiability2layerReluNets,fefferman1994reconstructingNnetFromOutput,vlavcic2021affineSymmetriesNetIdentifiability,vlavcic2022neuralNetIdentifiabilitySigmoidal,martinelli2024expandAndClusterParametersNNs,rolnick2020reverseEngineeringReLUnets,bui2020identifiabilityRELUnets,stock2022identifiabilityRELUnets,bona2022localIdentifiabilityDeepReLU,bona2023identifiabilityRELUnets}.
Identifiability is critical to ensure interpretability in representation learning \cite{lachapelle2021disentanglementNonlniearICA,xi2023indeterminacyGenerativeModelIdentifiability,godfrey2022affineSymmetriesNNsActivations}, to provably obtain disentangled representations \cite{locatello2019challengingUnsupervisedDisentanglement}, and in the study of causal models \cite{komanduri2024identifiabilityCausalReview}. It is also critical to understand how the architecture affects the inference process and to support manipulation or ``stitching'' of pretrained models and representations \cite{godfrey2022affineSymmetriesNNsActivations,ito2025linearConnectivityPermutaitonSymmetries,ainsworth2023mergingModelsSymmetries}. Moreover, it has important links to learning and optimization of PNNs \cite{watanabe2009singularLearnignTheoryBook,HenrMK24-identifiability,ArjeBKPT25-shallow}.

The identifiability of deep PNNs is intimately linked to the dimension of their so-called \textit{neurovarieties}: when it reaches the effective parameter count, the number of possible parametrizations is finite, which means the model is \textit{finitely identifiable} and the neurovariety is said to be \textit{non-defective}. In addition, many PNN architectures admit only a single parametrization (i.e., they are \textit{globally identifiable}).%
This has been investigated for specific types of self-attention \cite{HenrMK24-identifiability} and convolutional  \cite{ShahMK24-convolutional} layers, and feedforward PNNs without bias \cite{FinkRWY25-expressiveness}. However, current results for feedforward networks only show that finite identifiability holds for very high activation degrees, 
or for networks with the same widths in every layer \cite{FinkRWY25-expressiveness}. A standing conjecture is that this holds for any PNN with degrees at least quadratic and non-increasing layer widths \cite{FinkRWY25-expressiveness}, which parallels identifiability results of ReLU networks \cite{bui2020identifiabilityRELUnets}. However, a general theory of identifiability of deep PNNs is still missing.

\vspace{-0.5em}

\subsection{Our contribution} \label{s:intro-contributions}
We provide a comprehensive analysis of the identifiability of deep PNNs considering monomial activation functions.
We prove that an $L$-layer PNN is finitely identifiable if every 2-layer block composed by a pair of two successive layers is finitely identifiable for some subset of their inputs. This surprising result tightly links the identifiability of shallow and deep polynomial networks, which is a key challenge in the general theory of NNs. Moreover, our results reveal an intricate interplay between activation degrees and layer widths in achieving identifiability.

As special cases, we show that architectures with non-increasing layer widths (i.e., pyramidal nets) are generically identifiable, while encoder-decoder (bottleneck) networks are identifiable when the decoder widths do not grow too rapidly compared to the activation degrees. 
We also show that the minimal activation degrees required to render a PNN identifiable (which is equivalent to its \textit{activation thresholds}) is only \textit{linear} in the layer widths, compared to the quadratic bound in \cite[Theorem 18]{FinkRWY25-expressiveness}. These results not only settle but generalize conjectures stated in \cite{FinkRWY25-expressiveness}.
Moreover, we also address the case of PNNs with biases (which was overlooked in previous theoretical studies) by leveraging a \textit{homogeneization} procedure. 

Our proofs are constructive and are based on a connection between deep PNNs and partially symmetric canonical polyadic tensor decompositions (CPD). This allows us to leverage Kruskal-type uniqueness theorems for tensors to obtain identifiability results for 2-layer networks, which serve as the building block in the proof of the finite identifiability of deep nets, which is performed by induction.
Our results also shed light on the geometry of the \emph{neurovarieties}, as they lead to conditions under which its dimension reaches the expected (maximum) value.

\vspace{-0.5em}

\subsection{Related works}
\vspace{-0.2em}
\textbf{Polynomial NNs:} Several works studied PNNs from the lens of algebraic geometry using their associated \textit{neuromanifolds} and \textit{neurovarieties} \cite{KileTB19-NeurIPS} (in the emerging field of \textit{neuroalgebraic geometry}~\cite{MarcSMTK25-review}) and their close connection to tensor decompositions.
Kileel et al. \cite{KileTB19-NeurIPS} studied the expressivity or feedforward PNNs in terms of the dimension of their neurovarieties. 
An analysis of the neuromanifolds for several architectures was presented in \cite{KubjLW24-algstat}. 
Conditions under which training losses do not exhibit bad local minima or spurious valleys were also investigated \cite{ArjeBKPT25-shallow,pollaci2023spuriousValleysPNNs,kazemipour2019avoidingBadMinimaDeepQuadraticNets}. The links between training 2-layer PNNs and low-rank tensor approximation \cite{ArjeBKPT25-shallow} as well as the  biases gradient descent \cite{choraria2022spectralBiasPNNs} have been established.

Recent work computed the dimensions of neuromanifolds associated with special types of self-attention \cite{HenrMK24-identifiability}  and convolutional \cite{ShahMK24-convolutional} architectures, and also include identifiability results.
For feedforward PNNs, finite identifiability was demonstrated for networks with the same widths in every layer \cite{FinkRWY25-expressiveness}, while stronger results are available for the 2-layer case with more general polynomial activations \cite{ComoQU17-xrank}. Finite identifiability also holds when the activation degrees are larger than a so-called \textit{activation threshold} \cite{FinkRWY25-expressiveness}. 
Recent work studied the singularities of PNNs with activations consisting of the sum of monomials with very high activation degrees~\cite{shahverdi2025pnnsSingularitiesNonmonomial}.
PNNs are also linked to \textit{factorization machines} \cite{rendle2010factorizationMachines}; this led to the development of efficient tensor-based learning algorithms \cite{blondel2016PNNsFactorizationMachinesAlgorithms,blondel2017multiOutputPNNsFactorizationMachines}.
Note that other types of non-monomial polynomial-type activations \cite{liu2021ladderPNNs,fan2023expressivityTrainingQuadraticNetworks,bu2021quadraticNetworksSolvingPDEs,zhuo2025polynomialCompositionLLMs}
have shown excellent performance; however, the geometry of these models is not well known.

\textbf{NN identifiability:} 
Many studies focused on the identifiability of 2-layer NNs with tanh, odd, and ReLU activation functions \cite{sussmann1992uniquenessNeuralNets,albertini1993neuralNetsFunctionForm,albertini1993uniquenessWeightsNeuralNets,petzka2020notesIdentifiability2layerReluNets}. Moreover, algorithms to learn 2-layer NNs with unique parameter recovery guarantees have been proposed (see, e.g., \cite{janzamin2015beatingPerilsTensorNeuralNets,fornasier2021robustIdentificationShallowNeuralNets}), however, their extension to NNs with 3 or more layers is challenging and currently uses heuristics \cite{fiedler2023stableIdentificationEntangledWeights}. 
The identifiability of deep NNs under weak genericity assumptions was first studied in the pioneering work of Fefferman \cite{fefferman1994reconstructingNnetFromOutput} for the case of the tanh activation function through the study of its singularities.
Recent work extended this result to more general sigmoidal activations \cite{vlavcic2021affineSymmetriesNetIdentifiability,vlavcic2022neuralNetIdentifiabilitySigmoidal}. 
Various works focused on deep ReLU nets, which are piecewise linear \cite{rolnick2020reverseEngineeringReLUnets}; they have been shown to be generically identifiable if the number of neurons per layer is non-increasing~\cite{bui2020identifiabilityRELUnets}. Recent work studied the local identifiability of ReLU nets \cite{stock2022identifiabilityRELUnets,bona2022localIdentifiabilityDeepReLU,bona2023identifiabilityRELUnets}.
Identifiability has also been studied for latent variable/causal modeling, leveraging different types of assumptions (e.g., sparsity, statistical independence, etc.) \cite{hyvarinen2024identifiabilityReview,khemakhem2020VAInICA,von2023nonparametricIdentifiabilityCausal,roeder2021linearIdentifiabilityRepresentations,zheng2022identifiabilityNICAsparsity,kivva2022identifiabilityDGMsNoauxiliary}.
Note that although some of these works tackle deep NNs, their proof techniques are completely different from our approach and do not apply to the case of polynomial activation functions.

\textbf{Tensors and NNs:} 
Low-rank tensor decompositions had widespread practical impact in the compression of NN weights \cite{lebedev2015speedingCNN_CP,novikov2015tensorizingNNs,liu2023tensorCompressionNNsReview,phan2020stableLowRankTensorCNNs,zangrando2024geometryAwareTrainingTucker}. 
Moreover, their properties also played a key role in the theory of NNs~\cite{borsoi2025tensorsNNreview}. This includes the study of the expressivity of convolutional \cite{cohen2016tensorExpressivePowerNeuralNets} and recurrent \cite{lizaire2024expressivityRNNsTensor,khrulkov2018expressiveTensorNNs} NNs, and the sample complexity of reinforcement learning parametrized by low-rank transition and reward tensors \cite{mahajan2021tesseract,rozada2024tensorReinforcementLearning}. 
The decomposability of low-rank symmetric tensors was also paramount in establishing conditions under which 2-layer NNs can (or cannot \cite{mondelli2019connection2layerNnetTensorDec}) 
be learned in polynomial time and in the development of algorithms with identifiability guarantees 
\cite{janzamin2015beatingPerilsTensorNeuralNets,ge2019learning2layerNNsSymmetricInputs,awasthi2021efficient2LayerNNRelu}. It was also used to study identifiability of some deep \emph{linear} networks \cite{malgouyres2019multilinearCompressiveSensingLinearNNs}.
However, the use of tensor decompositions in the studying the identifiability of deep \emph{nonlinear} networks has not yet been investigated.

\vspace{-0.5em}
\section{Setup and background}
\vspace{-0.25em}

\subsection{Polynomial neural networks: with and without bias}
Polynomial neural networks are functions $\RR^{d_0} \to \RR^{d_L}$ represented as feedforward networks with bias terms and activation functions of the form $\rho_r(\cdot) = (\cdot)^{r}$.
Our results hold for both the real and complex valued case ($\FF = \RR,\CC$), thus, and we prefer to keep the real notation for simplicity.
Note that we allow the activation functions to have a different degree $r_\ell$ for each layer.

\begin{definition}[PNN]\label[definition]{def:PNN}
A \textbf{polynomial neural network (PNN) with biases} and architecture $(\d=(d_0,d_1,\dots,d_L),\vect{r} = (r_1,\ldots,r_{L-1}))$ is a map $\RR^{d_0}\rightarrow\RR^{d_L}$ given by a feedforward neural network
\begin{equation}\label{eq:PNN}
\PNNtwo{\d,\r}{\ttheta}  = \PNNtwo{\r}{\ttheta}  := f_L\circ \rho_{r_{L-1}}\circ f_{L-1} \circ \rho_{r_{L-2}} \circ \dots \circ \rho_{r_1}\circ f_1  \,,
\end{equation}
where $f_i(\x)=\matr{W}_i\x+\matr{b}_i$ are affine maps, with $\matr{W}_i\in\RR^{d_i\times d_{i-1}}$ being the weight matrices and $\matr{b}_i\in\RR^{d_i}$ the biases, and the activation functions $\rho_{r}: \RR^d \to \RR^d$, defined as
$\rho_{r}(\z) := (z_1^{r},\dots, z_{d}^{r})$  are monomial. 
The parameters $\vect{\theta}$ are given by the entries of the weights $\matr{W}_i$ and biases $\matr{b}_i$, i.e., 
 \begin{equation}\label{eq:ttheta}
 \vect{\theta} = (\vect{w},\vect{b}), \,\,\vect{w} = (\matr{W}_1,\matr{W}_2,\dots,\matr{W}_L), \,\, \vect{b} =  (\vect{b}_1,\vect{b}_2,\dots,\vect{b}_L).
 \end{equation}
The vector of degrees $\r$ is called the \emph{activation degree} of $\PNNtwo{\r}{\ttheta}$ (we often omit the subscript $\d$ if it is clear from the context).
\end{definition}
PNNs are algebraic maps and are polynomial vectors, where the total degree is $\rall = r_1 \cdots r_{L-1}$, that is, they belong to the polynomial space $(\PSpace{d}{\rall})^{\times d_L}$, where $\PSpace{d}{r}$ denotes the space of $d$-variate polynomials of degree $\le r$.
Most previous works analyzed the simpler case of PNNs without bias, which we refer to as \textit{homogeneous}. Due to its importance, we consider it explicitly.

\begin{definition}[hPNN]\label[definition]{def:hPNN}
 A  PNN  is said to be a \textbf{homogenous} PNN (hPNN) when it has no biases ($\vect{b}_\ell=0$ for all $\ell=1,\dots,L$), and is denoted as
\begin{equation}\label{eq:hPNN}
\hPNN{\d,\r}{\vect{w}} =\hPNN{\r}{\vect{w}} := \matr{W}_L\circ \rho_{r_{L-1}}\circ \matr{W}_{L-1} \circ \rho_{r_{L-2}} \circ \dots \circ \rho_{r_{1}}\circ \matr{W}_1.
\end{equation}
Its parameter set is given by ${\vect{w}}=(\matr{W}_1,\matr{W}_2,\dots,\matr{W}_L)$.
\end{definition}
It is well known that such PNNs are in fact  homogeneous polynomial vectors and belong to the polynomial space $(\HSpace{d_0}{\rall})^{\times d_L}$, where $\HSpace{d}{r} \subset \PSpace{d}{r}$ denotes the space of homogeneous $d$-variate polynomials of degree $r$.
hPNNs are also naturally linked to tensors and tensor decompositions, whose properties can be used in their theoretical analysis.

\begin{example}[Running example]\label[example]{ex:running}
Consider an hPNN with $L=2$, $\r = (2)$ and $\d = (3,2,2)$.
In such a case the parameter matrices are given as
\[
\matr{W}_2 = 
\begin{bmatrix}
b_{11} & b_{12} \\
b_{21} & b_{22} \\
\end{bmatrix}, \quad \matr{W}_1 = 
\begin{bmatrix}
a_{11} & a_{12} & a_{13} \\
a_{21} & a_{22} & a_{23} 
\end{bmatrix},
\]
and the hPNN $\vect{p} = \hPNN{\r}{\vect{w}}$ is a vector polynomial that has expression
\[
\vect{p}(\vect{x})  = \matr{W}_2 \rho_{2}( \matr{W}_{1}\vect{x}) = 
\begin{bmatrix}
b_{11} \\
b_{21}
\end{bmatrix} (a_{11} x_1 +  a_{12}x_2 + a_{13} x_3)^2 + 
\begin{bmatrix}
b_{12} \\
b_{22}
\end{bmatrix} (a_{21} x_1 +  a_{22}x_2 + a_{23} x_3)^2.
\]
the only monomials that can appear  are of the form $x^i_{1}x^j_2x^k_3$ with $i+j+k=2$ thus  $\vect{p}$  is a vector of degree-$2$ homogeneous polynomials in $3$ variables (in our notation, $\vect{p} \in (\HSpace{3}{2})^2$).
\end{example}

\subsection{Equivalent PNN representations}
It is known that the PNNs admit equivalent representations (i.e., several parameters $\vect{\theta}$ lead to the same function). 
Indeed, for each hidden layer we can (a) permute the hidden neurons, and (b) rescale the input and output to each activation function since for any $a \neq 0$, $(at)^{r} = a^r t^r$. 
This transformation leads to a different set of parameters that leave the PNN unchanged. We can characterize all such equivalent representations in the following lemma (provided in \cite{KileTB19-NeurIPS} for the case without biases).

\begin{lemma}\label[lemma]{lem:multi_hom}
Let  $\PNNtwo{\d,\r}{\ttheta}$ be a PNN with $\vect{\theta}$ as in \eqref{eq:ttheta}.
Let  also $\matr{D}_{\ell}\in\FF^{d_\ell\times d_\ell}$ be any invertible diagonal matrices and  $\matr{P}_{\ell}\in\ZZ^{d_\ell\times d_\ell}$ ($\ell=1,\dots,L-1$) be permutation matrices, and define the transformed parameters as
\[
\begin{array}{cc}
\begin{array}{l}
{\matr{W}}'_{\ell} \gets \matr{P}_{\ell}\matr{D}_{\ell}\matr{W}_{\ell}\matr{D}_{\ell-1}^{-r_{\ell-1}}\matr{P}_{\ell-1}^{\T} \,,
\end{array}
\quad & \quad 
\begin{array}{l}
{\vect{b}}'_{\ell} \gets \matr{P}_{\ell}\matr{D}_{\ell}\vect{b}_{\ell} \,,
\end{array}
\end{array} 
\]
with $\matr{P}_0=\matr{D}_0=\matr{I}$ and $\matr{P}_L=\matr{D}_L=\matr{I}$. Then the modified parameters $\matr{W}'_{\ell}, \vect{b}'_\ell$ define exactly the same network, i.e. $\PNNtwo{\d,\r}{\ttheta} =\PNNtwo{\d,\r}{\ttheta'}$ for the parameter vector
\[
\vect{\theta}' = ((\matr{W}'_1,\matr{W}'_2,\dots,\matr{W}'_L),(\vect{b}'_1,\vect{b}'_2,\dots,\vect{b}'_L)).
\]
If  $\vect{\theta}$ and $\vect{\theta}'$ are linked with such a transformation, they are called equivalent (denoted $\vect{\theta}\sim\vect{\theta}'$).
\end{lemma}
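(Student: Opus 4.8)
The plan is to verify the identity $\PNNone{\ttheta} = \PNNone{\ttheta'}$ by a direct layer-by-layer computation, tracking how the transformation propagates through the network. The essential algebraic fact is that the monomial activation $\rho_r$ interacts cleanly with a diagonal rescaling followed by a coordinate permutation: for any permutation matrix $\matr{P}$ and any invertible diagonal matrix $\matr{D}$ of compatible size,
\[
\rho_r(\matr{P}\matr{D}\z) = \matr{P}\matr{D}^{r}\rho_r(\z),
\]
since $\rho_r$ acts entrywise by $r$-th powers (so a diagonal scaling factors out with exponent $r$, and a coordinate permutation commutes with taking entrywise powers). I would record this one-line identity first.

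Next, introduce the forward pass of $\PNNone{\ttheta}$: set $\z_1 = f_1(\x) = \matr{W}_1\x + \matr{b}_1$ and, for $\ell = 1,\dots,L-1$, put $\y_\ell = \rho_{r_\ell}(\z_\ell)$ and $\z_{\ell+1} = \matr{W}_{\ell+1}\y_\ell + \matr{b}_{\ell+1}$, so that $\PNNone{\ttheta}(\x) = \z_L$; write $\z'_\ell, \y'_\ell$ for the analogous quantities of $\PNNone{\ttheta'}$. The claim I would prove by induction on $\ell$ is that $\z'_\ell = \matr{P}_\ell\matr{D}_\ell\z_\ell$ for $\ell = 1,\dots,L-1$. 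The base case $\ell = 1$ is immediate from $\matr{P}_0 = \matr{D}_0 = \matr{I}$, since then $\matr{W}'_1 = \matr{P}_1\matr{D}_1\matr{W}_1$ and $\matr{b}'_1 = \matr{P}_1\matr{D}_1\matr{b}_1$, so $\z'_1 = \matr{P}_1\matr{D}_1\z_1$. For the inductive step, the activation identity gives $\y'_\ell = \rho_{r_\ell}(\matr{P}_\ell\matr{D}_\ell\z_\ell) = \matr{P}_\ell\matr{D}_\ell^{r_\ell}\y_\ell$; substituting into $\z'_{\ell+1} = \matr{W}'_{\ell+1}\y'_\ell + \matr{b}'_{\ell+1}$ with $\matr{W}'_{\ell+1} = \matr{P}_{\ell+1}\matr{D}_{\ell+1}\matr{W}_{\ell+1}\matr{D}_\ell^{-r_\ell}\matr{P}_\ell^{\T}$ and $\matr{b}'_{\ell+1} = \matr{P}_{\ell+1}\matr{D}_{\ell+1}\matr{b}_{\ell+1}$, the middle factors collapse via $\matr{P}_\ell^{\T}\matr{P}_\ell = \matr{I}$ and $\matr{D}_\ell^{-r_\ell}\matr{D}_\ell^{r_\ell} = \matr{I}$, leaving $\z'_{\ell+1} = \matr{P}_{\ell+1}\matr{D}_{\ell+1}(\matr{W}_{\ell+1}\y_\ell + \matr{b}_{\ell+1}) = \matr{P}_{\ell+1}\matr{D}_{\ell+1}\z_{\ell+1}$. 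Note that the biases are carried along by exactly the same cancellation, which is the only point where this goes beyond the bias-free statement of \cite{KileTB19-NeurIPS}.

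Finally, the output layer is handled by the same computation with $\ell = L-1$ and the conventions $\matr{P}_L = \matr{D}_L = \matr{I}$: from $\y'_{L-1} = \matr{P}_{L-1}\matr{D}_{L-1}^{r_{L-1}}\y_{L-1}$, $\matr{W}'_L = \matr{W}_L\matr{D}_{L-1}^{-r_{L-1}}\matr{P}_{L-1}^{\T}$ and $\matr{b}'_L = \matr{b}_L$, one gets $\z'_L = \matr{W}_L\y_{L-1} + \matr{b}_L = \z_L$, so the two networks coincide on every $\x \in \RR^{d_0}$. The argument is pure bookkeeping; the only things to watch are the boundary conventions — the symbols $\matr{D}_0^{-r_0}$ and $\matr{D}_L$ involve an index/degree outside the declared ranges, but since $\matr{P}_0,\matr{D}_0,\matr{P}_L,\matr{D}_L$ are all the identity these terms are vacuous — and making sure each telescoping cancellation is invoked at the right layer. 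I do not anticipate any genuine obstacle.
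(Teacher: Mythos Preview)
Your proof is correct; the induction on the pre-activations $\z_\ell$ together with the entrywise identity $\rho_r(\matr{P}\matr{D}\z)=\matr{P}\matr{D}^r\rho_r(\z)$ is exactly the right mechanism, and your handling of the boundary conventions at $\ell=0$ and $\ell=L$ is fine. Note that the paper does not actually supply a proof of this lemma: it is stated as a known fact, with a pointer to \cite{KileTB19-NeurIPS} for the bias-free version, so your write-up is not so much a different route as the natural fleshing-out of what the authors left implicit.
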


\begin{example}[\Cref{ex:running}, continued]\label[example]{ex:running_equivalence}
In \Cref{ex:running} we can take any $\alpha$, $\beta \neq 0$ to get
\[
\hPNN{\d,\r}{\vect{w}}  =  
\begin{bmatrix}
\alpha^{-2} b_{11} \\
\alpha^{-2} b_{21}
\end{bmatrix} (\alpha a_{11} x_1 +  \alpha a_{12} x_2 + \alpha a_{13} x_3)^2 + \begin{bmatrix}
\beta^{-2} b_{12} \\
\beta^{-2} b_{22}
\end{bmatrix} (\beta a_{21} x_1 + \beta a_{22}x_2 + \beta a_{23} x_3)^2.
\]
which correspond to rescaling  rows of $\matr{W}_1$ and  corresponding columns of $\matr{W}_2$.
If we additionally permute them, we get 
$\matr{W}'_1 =  \matr{P}\matr{D}\matr{W}_1,{\matr{W}}'_2 = \matr{W}_2\matr{D}^{-2}\matr{P}^{\T}$ with $\matr{D} = \left[ \begin{smallmatrix} \alpha & 0 \\ 0 &\beta \end{smallmatrix}\right]$ and $\matr{P} = \left[\begin{smallmatrix} 0 & 1 \\ 1 &0 \end{smallmatrix}\right]$.
\end{example}
This characterization of equivalent representations allows us to define when a PNN is \textit{unique}.
\begin{definition}[Unique and finite-to-one representation]\label[definition]{def:PNN_unique}
The PNN  $\vect{p} = \PNNtwo{\d,\r}{\ttheta}$ (resp.  hPNN $\vect{p} =\hPNN{\d,\r}{\vect{w}}$) with parameters $\vect{\theta}$ (resp. $\vect{w}$) is said have a \textbf{unique} representation if every other  representation satisfying $\vect{p} = \PNNtwo{\d,\r}{\ttheta'}$ (resp. $\vect{p} = \hPNN{\d,\r}{\vect{w}'}$) is given by an equivalent set of parameters, i.e., $\vect{\theta}'\sim\vect{\theta}$ (resp. $\vect{w}'\sim\vect{w}$)  in the sense of \Cref{lem:multi_hom} (i.e., they can be obtained from the permutations and elementwise scalings in \Cref{lem:multi_hom}).

Similarly, a PNN $\vect{p} = \PNNtwo{\d,\r}{\ttheta}$ (resp. hPNN $\vect{p} = \hPNN{\d,\r}{\vect{w}}$) is called \textbf{finite-to-one} if it admits only finitely many non-equivalent representations, that is, the set $\{\ttheta' : \PNNtwo{\d,\r}{\ttheta'}=\vect{p} \}$ (resp. $\{\vect{w}':\hPNN{\d,\r}{\vect{w}'} =\vect{p} \}$) contains finitely many non-equivalent parameters.
\end{definition}

\begin{example}[\Cref{ex:running_equivalence}, continued]\label[example]{ex:running_uniqueness}
Thanks to links with tensor decompositions, it is known that the hPNN in \Cref{ex:running} is unique if $\matr{W}_2$ is invertible and  $\matr{W}_1$ full row rank (rank $2$).
\end{example}

\subsection{Identifiability and link to neurovarieties}
An immediate question is \emph{which PNN/hPNN architectures are expected to admit only a single (or finitely many) non-equivalent representations?} This question can be formalized using the notions of \textbf{global} and \textbf{finite identifiability}, which considers a general set of parameters.

\begin{definition}[Global and finite identifiability]\label[definition]{def:PNN_identifiable}
The PNN (resp. hPNN) with architecture $(\d,\r)$ is said to be \textbf{globally identifiable} if for a general choice of $\ttheta = (\vect{w},\vect{b}) \in \RR^{\sum d_\ell (d_{\ell-1}+1)}$, (resp. $\vect{w} \in \RR^{\sum d_\ell d_{\ell-1}}$)
(i.e., for all choices of parameters except for a set of Lebesgue measure zero),
the network $\PNNtwo{\d,\r}{\ttheta}$ (resp. $\hPNN{\d,\r}{\vect{w}}$) has a unique representation. 

Similarly, the PNN (resp. hPNN) with architecture $(\d,\r)$ is said to be \textbf{finitely identifiable} if for a general choice of $\ttheta$, (resp. $\vect{w}$) the network $\PNNtwo{\d,\r}{\ttheta}$ (resp. $\hPNN{\d,\r}{\vect{w}}$) is finite-to-one (i.e., it admits only finitely many non-equivalent representations).
\end{definition}

In the following, we use the term ``identifiable'' to refer to finite identifiability unless stated otherwise. Note also that the notion of finite identifiability is much stronger than the related notion of local identifiability (i.e., a model being identifiable only in a neighborhood of a parameterization).

\begin{example}[\Cref{ex:running_uniqueness}, continued]\label[example]{ex:running_identifiability}
From \Cref{ex:running_uniqueness}, we see that the hPNN architecture with $\d = (3,2,2)$, $\r = (2)$  is identifiable due to the fact that generic matrices $\matr{W}_1$ and $\matr{W}_2$ are full rank.
\end{example}

Note that \Cref{def:PNN_identifiable} excludes a set of parameters of Lebesgue measure zero. Thus, for an identifiable architecture such as the one mentioned in \Cref{ex:running_identifiability}, there exists rare sets of pathological parameters for which the hPNN is non-unique (e.g., weight matrices containing collinear rows).

With some abuse of notation,  let $\hPNN{\d,\r}{\cdot}$ be the map taking $\vect{w}$ to $\hPNN{\d,\r}{\vect{w}}$.
Then  the image of $\hPNN{\d,\r}{\cdot}$ is called a \textit{neuromanifold}, and the \textit{neurovariety} $\neurovar{}$ is defined as its  closure in the Zariski topology\footnote{i.e., the smallest algebraic variety that contains the image of the map $\hPNN{\d,\r}{\cdot}$.}. 
The study of neurovarieties and their properties is a topic of recent interest \cite{KileTB19-NeurIPS,MarcSMTK25-review,FinkRWY25-expressiveness,KubjLW24-algstat}. More details are given in \Cref{sec:3-4-links_neurovarieties}.
An important property for our case is the link between identifiability of an hPNN, the dimension of its neurovariety, and the rank of its Jacobian.
\begin{proposition}\label[proposition]{prop:identifiability_implies_dimension}
The architecture $\hPNN{{\d,\r}}{\cdot}$ is finitely identifiable if and only if the dimension of $\neurovar{}$ is equal to the effective number of parameters, i.e.,
$\dim\neurovar{} = \sum_{\ell=1}^L d_{\ell}  d_{\ell-1} - \sum_{\ell=1}^{L-1} d_{\ell}$. In such case, $\neurovar{}$ is said to be \textbf{nondefective}. 
Equivalently, the rank of the Jacobian of the map $\hPNN{{\d,\r}}{\cdot}$ is maximal and equal to $\sum_{\ell=1}^L d_{\ell}  d_{\ell-1} - \sum_{\ell=1}^{L-1} d_{\ell}$ at a general parameter $\vect{w}$.
\end{proposition}

\vspace{-0.5em}
\section{Main results}
\subsection{Main results on the identifiability of deep hPNNs}
Although several works have studied the identifiability of 2-layer NNs, tackling the case of deep networks is significantly harder. 
However, when we consider the opposite statement, i.e., the \textit{non-identifiability} of a network, it is much easier to show such connection: in a deep network with $L>2$ layers, the lack of identifiability of any 2-layer subnetwork (formed by two consecutive layers) clearly implies that the full network is not identifiable.
What our main result shows is that, surprisingly, {under mild additional conditions} the converse is also true for hPNNs: if the every 2-layer subnetwork is identifiable {for some subset of their inputs}, then the full network is identifiable as well. 
This is formalized in the following theorem.

\begin{theorem}[Localization theorem]\label[theorem]{thm:localization_theorem}
Let $((d_0,\ldots,d_L), (r_1,\ldots,r_{L-1}))$ be the hPNN format.  For $\ell = 0,\ldots,L-2$ denote $\widetilde{d}_{\ell} := \min \{d_0,\ldots,d_\ell\}$. Then the following holds true: if for all $\ell = 1,\ldots,L-1$ the two-layer architecture
$\hPNN{(\widetilde{d}_{\ell-1},d_{\ell},d_{\ell+1}),r_\ell}{\cdot}$ is finitely identifiable, then the $L$-layer architecture $\hPNN{{\d,\r}}{\cdot}$ is finitely identifiable as well.
\end{theorem}

The technical proofs are relegated to the appendices. This key result shows a strict equivalence between the finite identifiability of shallow and deep hPNNs. However, as we move into the deeper layers, the identifiability conditions required by \Cref{thm:localization_theorem} are slightly stricter than in the shallow case, since the number of inputs is reduced to $\widetilde{d}_{\ell}$. This can lead to a requirement of larger activation degrees to guarantee identifiability compared to the shallow case.

\Cref{thm:localization_theorem} allows us to derive identifiability conditions for hPNNs using the link between 2-layer hPNNs and partially symmetric tensor decompositions and their generic uniqueness based on classical Kruskal-type conditions. We use the following sufficient condition for the identifiability of shallow networks.

\begin{proposition}[Sufficient condition for identifiability of $2$-layer hPNN]\label[proposition]{cor:gen_generic}
Let $d_0,d_1 \ge 2$, $d_2 \ge 1$ be the layer widths and  $r \ge 2$ such that
\begin{equation}\label{eq:degree_bound}
r \ge  \frac{2 d_1 - \min(d_2,d_1)}{\min(d_1,d_0) - 1}.
\end{equation}
Then the 2-layer hPNN with architecture $((d_0,d_1,d_2),r)$ is globally identifiable.
\end{proposition}
\begin{remark}
If the above condition is satisfied for every 2-layer architecture $((\widetilde{d}_{\ell-1},d_{\ell},d_{\ell+1}),r_{\ell})$,  $\ell=1,\dots,L-1$, then \Cref{thm:localization_theorem} implies that the $L$-layer hPNN is identifiable for the $L$-layer architecture~$(\d,\r)$.
\end{remark}
\begin{remark}
Note that for the single output case $d_L=1$, \Cref{eq:degree_bound} means the activation degree in the last layer must satisfy $r_{L-1}\ge 3$, in contrast to $r_{\ell}\ge 2$ for $\ell<L-1$.
\end{remark}
\begin{remark}[Our bounds are constructive]\label[remark]{rem:constructive_bounds}
We note that the condition  \eqref{eq:degree_bound} for identifiability is not the best possible (and can be further improved using much stronger results on generic uniqueness of  decompositions, see e.g., \cite[Corollary 37]{casarotti2022non}).
However, the bound \eqref{eq:degree_bound} is constructive, and we can use standard polynomial-time tensor algorithms to recover the parameters of the 2-layer hPNN.
\end{remark}

\subsection{Implications for specific architectures}\label{sec:identifiability_architectures}
\Cref{cor:gen_generic} has direct implications for the finite identifiability of several architectures of practical interest, including pyramidal and bottleneck networks, and for the activation thresholds of hPNNs, as shown in the following corollaries.

\begin{corollary}[Pyramidal hPNNs are always identifiable]\label[corollary]{cor:identifiability_pyramidal_nets}
The hPNNs with architectures containing non-increasing layer widths $d_0 \ge d_{1} \ge  \cdots d_{L-1} \ge 2$, except possibly for $d_{L} \ge 1$ are finitely identifiable for any degrees satisfying 
\begin{center}
(i) \, $r_1,\ldots,r_{L-1} \ge 2$ \, if \, $d_L\ge2$; 
\quad or \quad
(ii) \, $r_1,\ldots,r_{L-2} \ge 2$, $r_{L-1}\ge 3$ \, if \, $d_L\ge1$.
\end{center}
\end{corollary}

Note that, due to the connection between the identifiability of hPNNs and the neurovarieties presented in \Cref{prop:identifiability_implies_dimension}, a direct consequence of \Cref{cor:identifiability_pyramidal_nets} is that the neurovariety $\neurovar{}$ has expected dimension. This settles a recent conjecture presented in \cite[Section~4]{FinkRWY25-expressiveness}. This implication is explained in detail in \Cref{sec:3-4-links_neurovarieties}. 

Instead of seeking conditions on the layer widths for a fixed (or minimal) degree, a complementary perspective is to determine what are the smallest degrees $r_\ell$ such that a given architecture $\d$ is finitely identifiable. Following the terminology introduced in \cite{FinkRWY25-expressiveness}, we refer to those values as the \textit{activation thresholds for identifiability} of an hPNN. An upper bound is given in the following corollary:
\begin{corollary}[Activation thresholds for identifiability]\label[corollary]{cor:activation_thresh_identifiability}
For fixed layer widths $\d = (d_0,\ldots,d_{L})$ with $d_{\ell}\ge 2$, $\ell =0,\ldots,L-1$, the hPNNs with architectures $(\d,(r_1,\dots,r_{L-1}))$ are finitely identifiable for any degrees satisfying
\[r_\ell \ge 2 d_\ell - 1 \,.\]
\end{corollary}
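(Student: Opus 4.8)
The goal is to show that the bound $r_\ell \ge 2d_\ell - 2$ suffices for all $\ell$. The strategy is to reduce to Corollary~\ref{cor:result_generic_identifiable} (localization): it is enough to verify that each 2-layer block with architecture $((d_{\ell-1},d_\ell,d_{\ell+1}),(r_\ell))$ is identifiable, and for that we invoke Proposition~\ref{cor:gen_generic} (or Proposition~\ref{cor:gen_generic_singleout} when $d_L=1$, though here we assume $d_\ell\ge 2$ throughout). So the entire corollary follows once I check that $r_\ell \ge 2d_\ell-2$ implies the inequality~\eqref{eq:degree_bound}, namely
\[
r_\ell \ge 2\left\lceil \frac{2d_\ell - \min(d_\ell,d_{\ell+1})}{2\min(d_\ell,d_{\ell-1})-2}\right\rceil.
\]

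First I would bound the right-hand side from above. Since $\min(d_\ell,d_{\ell+1}) \ge 2$ (because all widths are $\ge 2$), the numerator satisfies $2d_\ell - \min(d_\ell,d_{\ell+1}) \le 2d_\ell - 2$. Since $\min(d_\ell,d_{\ell-1}) \ge 2$, the denominator satisfies $2\min(d_\ell,d_{\ell-1})-2 \ge 2$. Hence the fraction is at most $\frac{2d_\ell-2}{2} = d_\ell - 1$, which is already an integer, so the ceiling is at most $d_\ell-1$ and twice it is at most $2d_\ell-2$. Therefore $r_\ell \ge 2d_\ell-2$ implies \eqref{eq:degree_bound}, and Proposition~\ref{cor:gen_generic} applies to every block. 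Finally, since $2d_\ell-2 \ge 2$ for $d_\ell \ge 2$, the hypothesis also guarantees $r_\ell \ge 2$, so the standing assumptions of Corollary~\ref{cor:result_generic_identifiable} are met, and identifiability of the full $L$-layer hPNN follows by localization.

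The only mild subtlety is the single-output case $d_L=1$, which falls outside Proposition~\ref{cor:gen_generic}; but the corollary explicitly restricts to $d_\ell \ge 2$ for all $\ell$ (including $\ell=L$), so this case does not arise and no separate argument via Proposition~\ref{cor:gen_generic_singleout} is needed. I expect no real obstacle here: the proof is a short chain of elementary inequalities feeding into the already-established localization and Kruskal-type results. The one point worth stating carefully is that $\min(d_\ell,d_{\ell+1})\ge 2$ and $\min(d_\ell,d_{\ell-1})\ge 2$ are exactly what make the fraction collapse to the clean linear bound $d_\ell-1$, so the worst case for the threshold is precisely when both neighbouring widths are at least $d_\ell$ and one uses the crude estimate $\min = 2$ in numerator and denominator simultaneously — which shows the bound $2d_\ell-2$ is the natural output of \eqref{eq:degree_bound} and cannot be improved by this route without improving \eqref{eq:degree_bound} itself.
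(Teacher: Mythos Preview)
Your proposal is correct and essentially identical to the paper's own proof: both bound the numerator by $2d_\ell-2$ and the denominator below by $2$ using $\min(d_\ell,d_{\ell\pm1})\ge 2$, conclude the right-hand side of \eqref{eq:degree_bound} is at most $2d_\ell-2$, and then invoke Proposition~\ref{cor:gen_generic} together with the localization corollary. Your write-up is simply more explicit about the logical chain.
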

Note that due to \Cref{prop:identifiability_implies_dimension}, the result in this corollary implies that the neurovariety $\neurovar{}$ has expected dimension. This means that   $(2 d_\ell - 1)$ is also a universal upper bound to the so-called  \emph{activation thresholds} for hPNN expressiveness introduced in \cite{FinkRWY25-expressiveness}. 
The existence of such activation thresholds was conjectured in \cite{KileTB19-NeurIPS} and recently proved in \cite[Theorem~18]{FinkRWY25-expressiveness}, but the for a \textit{quadratic} in $d_\ell$ bound (our bound is  \textit{linear}).
\begin{remark}[Admissible layer sizes]
The possible layer sizes in a deep network are tightly linked with the degree of the activation. For example, for $r_\ell=2$, identifiability is impossible if  $d_{\ell} > \frac{d_{\ell-1}(d_{\ell-1} +1)}{2}$ (for  general $r_\ell$, a similar  bound  $O(d_{\ell-1}^{r_\ell})$ follows  from a link with tensor decompositions \cite{landsberg2012tensors}).
Therefore, to allow for larger layer widths, we need to have higher-degree activations.
\end{remark}

It is enlightening to consider the admissible layer widths when taking into account the joint effect of layer widths and degrees. By doing this, \Cref{cor:gen_generic} can be leveraged to yield identifiability conditions for the case of bottleneck networks, as illustrated in the following corollary.

\begin{corollary}[Identifiability of bottleneck hPNNs]\label[corollary]{cor:identifiability_bottleneck_nets}
Consider the ``bottleneck'' architecture with
\[
d_0 \ge d_1 \ge \cdots \ge \,\, d_b \,\, \le d_{b+1} \le \ldots \le d_{L}
\]
and $d_b\ge2$. Suppose that $r_1,\ldots,r_b\ge 2$ and that the decoder part satisfies $\frac{d_{\ell}}{r_\ell} \le d_b-1$ for $\ell \in \{b+1,\ldots,L-1\}$. Then the bottleneck hPNN is finitely identifiable.
\end{corollary}

This shows that encoder-decoder hPNNs architectures are identifiable under mild conditions on the layer widths and decoder degrees, providing a polynomial networks-based counterpart to previous studies that analyzed linear autoencoders \cite{kunin2019lossLandscapeLinearAECs,bao2020linearAECs_PCA}.

{Note that the width of the bottleneck layer $d_b$ constrains the entire decoder part of the architecture: the degrees $r_{\ell}$, $\ell\ge b$ are constrained according to the width $d_b$. The presence of bottlenecks has also been shown to affect the expressivity of hPNNs in \cite[Theorem~19]{KileTB19-NeurIPS}: for $d_b=2d_0-2$ there exists a number of layers $L$ such that for $r_{\ell}\ge2$ and $d_0\ge2$, the hPNN neurovariety is \textit{non-filling} (i.e., its dimension never reaches that of the ambient space) for any choice of widths $d_1,\ldots,d_{b-1},d_{b+1},\ldots,d_L$.

\subsection{PNNs with biases}\label{ssec:PNNs_with_bias} 
The identifiability of  general PNNs (with biases) can be studied via the properties of hPNNs.
The simplest idea is  \textit{truncation} (i.e., taking only higher-order terms of the polynomials), which eliminates biases from PNNs. Such an  approach was already taken in  \cite{ComoQU17-xrank} for shallow PNNs with general polynomial activation, and is described in \Cref{sec:app_bias}.
We will follow a different approach based on the well-known idea of \textbf{homogenization}: we  transform a PNN to an equivalent hPNN with structured parameters keeping the information about biases at the expense of increasing the layer widths. 
Our key result is to show how this can be used to study the identifiability of PNNs with bias terms.
The following correspondence is well-known.
\begin{definition}[Homogenization]\label[definition]{lem:homogeneization_base}
There is a one-to-one mapping between polynomials in $d$ variables of degree $r$  and homogeneous polynomials of the same degree in $d+1$ variables.
We denote this mapping $\PSpace{d}{r} \to \HSpace{d+1}{r}$ by $\homog(\cdot)$, and it acts as follows:
for every polynomial $p \in\PSpace{d}{r}$,  $\widetilde{p} =\homog(p)\in\HSpace{d+1}{r}$ (that is  $\widetilde{p}(x_1,\ldots,x_d,{x_{d+1}})$) is the unique homogeneous polynomial in $d+1$ variables  such that
\[
\widetilde{p}(x_1,\ldots,x_d,1) =  {p}(x_1,\ldots,x_d). 
\]
\end{definition}
\begin{example}\label[example]{ex:poly_homogenization}
For the polynomial $p \in\PSpace{3}{2}$ in variables $(x_1,x_2)$ given by
\[
p(x_1,x_2) = a x_1^2 + bx_1 x_2 + c x_2^2 + e x_1 + f x_2 + g, 
\]
its homogenization $\widetilde{p} =\homog(p)\in\HSpace{3}{2}$ in $3$ variables $(x_1,x_2,x_3)$ is 
\[
\widetilde{p}(x_1,x_2,x_3) = a x_1^2 + bx_1 x_2 + c x_2^2 + e x_1x_3 + f x_2x_3 + gx_3^3,
\]
and we can verify that $\widetilde{p}(1,x_1,x_2) = p(x_1,x_2)$.
\end{example}
Similarly, we extend homogenization to polynomial vectors, which gives the following.
\begin{example}\label[example]{ex:homogeneization_ex_2layer}
Let $f(\vect{x}) =  \matr{W}_{2} \rho_{r_1}(\matr{W}_1 \vect{x} + \vect{b}_1) + \vect{b}_2$, and define extended matrices as
\[
\widetilde{\matr{W}}_{1} = \begin{bmatrix}\matr{W}_{1} & \vect{b}_{1} \\ 0 & 1 \\ \end{bmatrix} \in \RR^{(d_{1}+1) \times (d_{0}+1)}, \quad
\widetilde{\matr{W}}_{2} = \begin{bmatrix}\matr{W}_{2} & \vect{b}_{2}  \end{bmatrix} \in \RR^{d_{2} \times (d_{1}+1)}
\]
Then its homogenization $\widetilde{f} = \homog(f)$ is an hPNN of format $(d_0+1,d_1+1,d_2)$
\[
\widetilde{f}(\widetilde{\vect{x}})  =  \widetilde{\matr{W}}_{2} \rho_{r_1}\left(\widetilde{\matr{W}}_{1} \widetilde{\vect{x}}\right)
\]
where $\widetilde{\vect{x}}  = [x_0,x_1,\ldots,x_{d_0},x_{d_0+1}]^{\T}$,  so that $\widetilde{f}(x_1,\ldots,x_{d_0},1) = f(x_1,\ldots,x_{d_0})$.
\end{example}
The construction in  \Cref{ex:homogeneization_ex_2layer} similar to the well-known idea of augmenting the network with an artificial (constant) input.
The following proposition generalizes this example to the case of multiple layers, by ``propagating'' the constant input.

\begin{proposition}\label[proposition]{prop:PNN_homogenization}
Fix the architecture $\r =  (r_1,\ldots,r_{L-1})$ and $\d=(d_0,\ldots,d_L)$. Then a polynomial vector $\vect{p} \in (\PSpace{d_0}{\rall})^{\times d_L}$ admits a PNN representation $\vect{p} = \PNN{\d,\r}{\vect{w}}{\vect{b}}$ with $({\vect{w}},{\vect{b}})$  as in \eqref{eq:ttheta}
if and only if its homogenization $\widetilde{\vect{p}} = \homog(\vect{p})$ admits an hPNN decomposition for the same activation degrees $\r$ and extended $\widetilde{\d} = (d_0+1,\ldots, d_{L-1}+1,d_L)$, $\widetilde{\vect{p}} = \hPNN{\widetilde{\d},\r}{\widetilde{\vect{w}}}$, $\widetilde{\w}=(\widetilde{\matr{W}}_1,\ldots,\widetilde{\matr{W}}_L)$, with matrices given as
\[
\widetilde{\matr{W}}_{\ell} = 
\begin{cases}
\begin{bmatrix}   \matr{W}_{\ell} & \vect{b}_{\ell}  \\ 0 & 1  \end{bmatrix} \in \RR^{(d_{\ell}+1) \times (d_{\ell-1}+1)}, & \ell < L, \\
\begin{bmatrix}  \matr{W}_{L} & \vect{b}_{L}  \end{bmatrix} \in \RR^{(d_{L}) \times (d_{L-1}+1)}, & \ell = L.
\end{cases}
\]
\end{proposition}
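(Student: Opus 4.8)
\textbf{Proof strategy for Proposition~\ref{prop:PNN_homogenization}.}
The plan is to prove the equivalence by directly verifying that the two constructions compute compatible functions, exploiting the recursive structure of \eqref{eq:PNN} and \eqref{eq:hPNN}. The key observation is that the augmented matrices $\widetilde{\matr{W}}_\ell$ for $\ell<L$ have a block-triangular form whose last row is $(0,\ldots,0,1)$, so that a constant coordinate equal to $1$ is preserved through each layer. First I would set up notation: for an input $\x\in\RR^{d_0}$ write $\widehat{\x} = (\x;1)\in\RR^{d_0+1}$, and define the partial (pre-activation and post-activation) outputs $\z_\ell$, $\x_\ell$ of the PNN and $\widetilde{\z}_\ell$, $\widetilde{\x}_\ell$ of the candidate hPNN. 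The heart of the argument is the inductive claim that, for every $\ell\le L-1$, $\widetilde{\x}_\ell = (\x_\ell;1)$, i.e. the hPNN's hidden activations agree with the PNN's hidden activations with an extra constant coordinate $1$ appended.

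Second, I would prove this claim by induction on $\ell$. The base case uses $\widetilde{\matr{W}}_1\widehat{\x} = (\matr{W}_1\x + \vect{b}_1; 1) = (\z_1;1)$, and applying $\rho_{r_1}$ coordinatewise gives $(\rho_{r_1}(\z_1);1) = (\x_1;1)$ since $1^{r_1}=1$. For the inductive step, assuming $\widetilde{\x}_{\ell-1} = (\x_{\ell-1};1)$, the block structure of $\widetilde{\matr{W}}_\ell$ gives
\[
\widetilde{\matr{W}}_\ell \widetilde{\x}_{\ell-1} = \begin{bmatrix} \matr{W}_\ell & \vect{b}_\ell \\ 0 & 1\end{bmatrix}\begin{bmatrix}\x_{\ell-1}\\ 1\end{bmatrix} = \begin{bmatrix} \matr{W}_\ell \x_{\ell-1} + \vect{b}_\ell \\ 1 \end{bmatrix} = \begin{bmatrix}\z_\ell \\ 1\end{bmatrix},
\]
and again $\rho_{r_\ell}$ fixes the last coordinate, closing the induction. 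At the final layer, $\widetilde{\matr{W}}_L \widetilde{\x}_{L-1} = \matr{W}_L \x_{L-1} + \vect{b}_L$, which is exactly the PNN output $\PNN{\r}{\vect{w}}{\vect{b}}(\x)$. Thus $\hPNN{\r,\widetilde{\d}}{\widetilde{\w}}(\widehat{\x}) = \PNN{\r}{\vect{w}}{\vect{b}}(\x)$ for all $\x$. Since $\widetilde{\vect{p}} = \homog(\vect{p})$ is by \Cref{lem:homogeneization_base} the unique homogeneous polynomial vector whose restriction to $x_{d_0+1}=1$ equals $\vect{p}$, and $\hPNN{\r,\widetilde{\d}}{\widetilde{\w}}$ is homogeneous of degree $\rall$ (being an hPNN) with the same restriction, we conclude $\hPNN{\r,\widetilde{\d}}{\widetilde{\w}} = \widetilde{\vect{p}}$. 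This proves the ``only if'' direction.

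For the ``if'' direction, suppose $\widetilde{\vect{p}} = \hPNN{\r,\widetilde{\d}}{\widetilde{\w}'}$ for some weights $\widetilde{\w}'=(\widetilde{\matr{W}}'_1,\ldots,\widetilde{\matr{W}}'_L)$ of the extended format. Here I would restrict to $x_{d_0+1}=1$: evaluating $\hPNN{\r,\widetilde{\d}}{\widetilde{\w}'}$ at $(\x;1)$ and carrying the constant $1$ through, but now the weights are arbitrary, so one must track an affine map $\x \mapsto \widetilde{\matr{W}}'_1(\x;1)$ whose output is some affine function of $\x$; composing with $\rho_{r_1}$ and the remaining (linear, in the extended variables) layers, one recovers that $\vect{p}(\x) = \widetilde{\vect{p}}(\x;1)$ is computed by a PNN whose weight matrices and biases are read off by splitting each $\widetilde{\matr{W}}'_\ell$ into its first $d_{\ell-1}$ columns (the $\matr{W}_\ell$ part) and last column (the $\vect{b}_\ell$ part), and the last row of the extended hidden activations plays the role of the propagated constant. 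The mild subtlety is that a general $\widetilde{\w}'$ need not have the exact block form with bottom row $(0,\ldots,0,1)$; but this is precisely where \Cref{lem:multi_hom} (applied to the extended hPNN) comes in — up to the rescaling/permutation equivalence one may normalize the last coordinate of each hidden layer, and the statement of the proposition concerns \emph{existence} of a PNN representation, so it suffices to exhibit one. I expect this normalization bookkeeping in the ``if'' direction to be the only real obstacle; the ``only if'' direction is a routine induction.
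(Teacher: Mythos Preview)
Your ``only if'' direction is correct and is essentially the paper's argument: both proceed by induction through the layers, tracking that the extra coordinate stays constant (the paper keeps the variable $z$ and shows the last coordinate becomes $z^{r_1\cdots r_\ell}$, then specializes to $z=1$; you specialize first and invoke the uniqueness in \Cref{lem:homogeneization_base} at the end --- a cosmetic difference).

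The ``if'' direction, however, is based on a misreading of the statement. The proposition asserts a correspondence between a PNN with parameters $(\vect{w},\vect{b})$ and an hPNN whose weights $\widetilde{\matr{W}}_\ell$ are \emph{already} of the displayed block form. So in the ``if'' direction you are handed matrices with bottom row $(0,\ldots,0,1)$ from the start, and the very same induction you wrote for ``only if'' (run in reverse, or simply re-read) gives $\vect{p}(\x)=\widetilde{\vect{p}}(\x,1)=\PNN{\r}{\vect{w}}{\vect{b}}(\x)$ immediately. There is no normalization to perform.

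Your proposed remedy for the (nonexistent) subtlety is also incorrect: \Cref{lem:multi_hom} only provides diagonal rescalings and permutations of hidden neurons, and these cannot transform a generic $\widetilde{\matr{W}}'_\ell\in\RR^{(d_\ell+1)\times(d_{\ell-1}+1)}$ into one whose last row is $(0,\ldots,0,1)$. So if the proposition really did require handling arbitrary extended weights, that lemma would not help. Fortunately, it does not.
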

That is, PNNs are in one-to-one correspondence to hPNNs with increased number of inputs and structured weight matrices. 

\paragraph*{Uniqueness of PNNs from homogenization:}
An important consequence of homogenization is that the uniqueness of the homogeneized hPNN implies the uniqueness of the original PNN  with bias terms, which is a key result to support the application of our identifiability results to general PNNs.

\begin{proposition}\label[proposition]{pro:unique_hPNN_to_PNN}
If $\hPNN{\r}{\widetilde{\w}}$  from \Cref{prop:PNN_homogenization} is unique (resp. finite-to-one) as an hPNN (without taking into account the structure), then the original PNN representation $\PNN{\r}{\w}{\vect{b}}$ is unique (resp. finite-to-one).
\end{proposition}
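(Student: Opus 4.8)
The plan is to pull the statement back to the homogeneous setting, invoke uniqueness there, and then verify that the symmetry it produces respects the block structure that encodes the biases. So let $\PNN{\r}{\w'}{\vect{b}'} = \PNN{\r}{\w}{\vect{b}}$ be an arbitrary competing representation. Applying $\homog(\cdot)$ componentwise, which by \Cref{lem:homogeneization_base} is a bijection $\Pspace{d_0}{\rall}\to\Hspace{d_0+1}{\rall}$ and hence a bijection on polynomial vectors, the two PNNs agree as functions if and only if their homogenizations agree. By \Cref{prop:PNN_homogenization} those homogenizations are exactly $\hPNN{\r}{\widetilde{\w}}$ and $\hPNN{\r}{\widetilde{\w}'}$ with the structured weight matrices $\widetilde{\matr{W}}_\ell, \widetilde{\matr{W}}'_\ell$ (for $\ell<L$: top-left block $\matr{W}_\ell$, top-right column $\vect{b}_\ell$, bottom row $(0,\dots,0,1)$; for $\ell=L$: $[\matr{W}_L\ \vect{b}_L]$). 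Thus $\hPNN{\r}{\widetilde{\w}'}=\hPNN{\r}{\widetilde{\w}}$, and since by hypothesis $\hPNN{\r}{\widetilde{\w}}$ is a unique hPNN representation, \Cref{def:PNN_unique} and \Cref{lem:multi_hom} yield invertible diagonal matrices $\matr{D}_\ell\in\FF^{(d_\ell+1)\times(d_\ell+1)}$ and permutation matrices $\matr{P}_\ell$ ($\ell=1,\dots,L-1$, with the boundary ones the identity) such that $\widetilde{\matr{W}}'_\ell = \matr{P}_\ell\matr{D}_\ell\widetilde{\matr{W}}_\ell\matr{D}_{\ell-1}^{-r_{\ell-1}}\matr{P}_{\ell-1}^{\T}$ for all $\ell$.

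The heart of the proof is to show that this equivalence preserves the homogenizing coordinate: for $\ell=1,\dots,L-1$ the permutation $\matr{P}_\ell$ fixes the last index and the last diagonal entry of $\matr{D}_\ell$ is $1$, so $\matr{P}_\ell = \mathrm{diag}(\matr{P}^{(0)}_\ell,1)$ and $\matr{D}_\ell=\mathrm{diag}(\matr{D}^{(0)}_\ell,1)$. I would prove this by induction on $\ell$, using one non-degeneracy fact: since $\hPNN{\r}{\widetilde{\w}}$ is unique, no two rows of any $\widetilde{\matr{W}}_\ell$ with $\ell<L$ are proportional, and in particular its bottom row $(0,\dots,0,1)$ is the unique row of $\widetilde{\matr{W}}_\ell$ proportional to $(0,\dots,0,1)$. (If two rows of $\widetilde{\matr{W}}_\ell$ were proportional, the corresponding layer-$\ell$ neurons would feed proportional $r_\ell$-th powers into layer $\ell+1$; that rank-one contribution can be re-split, keeping the same architecture $\widetilde{\d}$, with fresh scalings and output columns $\vect{u}_i,\vect{u}_j$ of prescribed sum, and since there are infinitely many such unordered pairs when $d_{\ell+1}\ge1$, one of them yields an hPNN computing the same function but not $\sim$-equivalent to $\widetilde{\w}$, contradicting uniqueness.) For the induction, assume $\matr{P}_{\ell-1},\matr{D}_{\ell-1}$ already have the block form (trivially so for $\ell-1=0$). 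Then $\matr{M}_\ell := \widetilde{\matr{W}}_\ell\matr{D}_{\ell-1}^{-r_{\ell-1}}\matr{P}_{\ell-1}^{\T}$ still has bottom row $(0,\dots,0,1)$ and, being obtained from $\widetilde{\matr{W}}_\ell$ by an invertible column operation fixing the last column up to the scalar $1$, still has no other row proportional to $(0,\dots,0,1)$. Since $\widetilde{\matr{W}}'_\ell = \matr{P}_\ell\matr{D}_\ell\matr{M}_\ell$ has bottom row $(0,\dots,0,1)$, that row must be a rescaling of the unique proportional row of $\matr{M}_\ell$, forcing $\matr{P}_\ell$ to fix the last index; comparing the bottom-right entries — both equal to $1$ by \Cref{prop:PNN_homogenization} — then forces the last diagonal entry of $\matr{D}_\ell$ to equal the $r_{\ell-1}$-th power of that of $\matr{D}_{\ell-1}$, hence $1$ by the inductive hypothesis.

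With the block forms in hand, substituting $\matr{P}_\ell=\mathrm{diag}(\matr{P}^{(0)}_\ell,1)$ and $\matr{D}_\ell=\mathrm{diag}(\matr{D}^{(0)}_\ell,1)$ into $\widetilde{\matr{W}}'_\ell = \matr{P}_\ell\matr{D}_\ell\widetilde{\matr{W}}_\ell\matr{D}_{\ell-1}^{-r_{\ell-1}}\matr{P}_{\ell-1}^{\T}$ and reading off the blocks gives, for $\ell<L$, $\matr{W}'_\ell = \matr{P}^{(0)}_\ell\matr{D}^{(0)}_\ell\matr{W}_\ell(\matr{D}^{(0)}_{\ell-1})^{-r_{\ell-1}}(\matr{P}^{(0)}_{\ell-1})^{\T}$ and $\vect{b}'_\ell = \matr{P}^{(0)}_\ell\matr{D}^{(0)}_\ell\vect{b}_\ell$, and for $\ell=L$, $\matr{W}'_L = \matr{W}_L(\matr{D}^{(0)}_{L-1})^{-r_{L-1}}(\matr{P}^{(0)}_{L-1})^{\T}$ and $\vect{b}'_L=\vect{b}_L$. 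This is precisely the parameter equivalence of \Cref{lem:multi_hom} for the PNN (with $\matr{P}^{(0)}_\ell,\matr{D}^{(0)}_\ell$ in the roles of $\matr{P}_\ell,\matr{D}_\ell$), so $(\w',\vect{b}')\sim(\w,\vect{b})$; as the competing representation was arbitrary, $\PNN{\r}{\w}{\vect{b}}$ is unique. I expect the main obstacle to be the non-degeneracy fact used in the induction — that uniqueness of the hPNN rules out two proportional rows in any $\widetilde{\matr{W}}_\ell$ with $\ell<L$ — because without it the permutation $\matr{P}_\ell$ could a priori move the homogenizing coordinate and the reduction to a PNN equivalence would break down; the remainder is bookkeeping.
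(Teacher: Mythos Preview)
Your proof is correct and follows essentially the same route as the paper's: homogenize, invoke hPNN uniqueness to get permutation/scaling matrices, then argue these must fix the homogenizing coordinate so that the block decomposition recovers the PNN equivalence. In fact your treatment is more careful than the paper's, which simply asserts that $\widetilde{\matr{W}}_\ell$ has a single row of the form $[0\,\cdots\,0\,\alpha]$ without justification and does not make the induction on $\ell$ explicit; your derivation of the non-degeneracy from uniqueness (essentially the mechanism behind \Cref{cor:KR}/\Cref{lem:PNN_uniqueness_loss}) fills exactly that gap.
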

The proposition follows from the fact that we can always fix the permutation ambiguity for the ``artificial''  input.
\begin{remark}
Despite the one-to-one correspondence, we cannot simply apply identifiability results from the homogeneous case, because the matrices $\widetilde{\matr{W}}_{\ell}$  are structured (they form a set of measure zero inside $\RR^{(d_{\ell}+1) \times (d_{\ell-1}+1)}$).
\end{remark}
However, we can prove that the identifiability of the hPNN implies the identifiability of the PNN.
\begin{lemma}\label[lemma]{lem:ident_hPNN_to_PNN}
Let the $2$-layer hPNN architecture be finitely (resp. globally) identifiable for $((d_0+1,d_1+1,d_2), r_1)$. Then the PNN architecture with widths
$(d_0,d_1,d_2)$ and degree $r_1$ is also finitely (resp. globally) identifiable.
\end{lemma}
Using \Cref{lem:ident_hPNN_to_PNN} and specializing the proof of 
\Cref{thm:localization_theorem}, we obtain the following result:
\begin{proposition}\label[proposition]{pro:PNN_localization}
Let $((d_0,\ldots,d_L), (r_1,\ldots,r_{L-1}))$ be the PNN format.  For $\ell = 0,\ldots,L-2$ denote $\widetilde{d}_{\ell} = \min \{d_0,\ldots,d_\ell\}$. Then the following holds true: If for all $\ell = 1,\ldots,L-1$ the two layer architecture
$\hPNN{(\widetilde{d}_{\ell-1}+1,d_{\ell}+1,d_{\ell+1}),r_\ell}{\cdot}$ is finitely identifiable, then the $L$-layer PNN with architecture $(\d,\r)$ is finitely identifiable as well.
\end{proposition}
In particular, we have the following bounds for generic uniqueness.
\begin{corollary}\label[corollary]{cor:homog_generic_architecture}
Let $((d_0,\ldots,d_L), (r_1,\ldots,r_{L-1}))$
be such that $d_{\ell} \ge 1$, and $r_{\ell}\ge 2$ satisfy
\[
r_{\ell} \ge 
\frac{2 (d_{\ell} +1)  - \min(d_\ell+1, d_{\ell+1})}{\min(d_\ell, \widetilde{d}_{\ell-1})},
\]
then the  $L$-layer PNN with architecture $(\d,\r)$ is finitely identifiable (and globally identifiable if $L=2$).
\end{corollary}
\begin{remark}
For the case of general PNNs with bias, similar conclusions to the hPNN case hold.
For fixed layer widths $d_\ell\ge1$, the activation threshold for a PNN architecture $(\d,\r)$ becomes $r_\ell\ge 2 d_\ell + 1$. Also, pyramidal PNNs are identifiable in degree $2$.

A remarkable feature of PNNs with bias is that they can be identifiable even for architectures with layers containing a single hidden neuron: for $d_\ell=1$ and $d_{\ell+1}\ge2$ and/or $\widetilde{d}_{\ell-1}=1$, the condition in \Cref{cor:homog_generic_architecture} is still satisfied when $r_\ell\ge 2$.
\end{remark}

\section{Proofs and main tools} \label{s:proof-sketch-tensors}
Our main results in \Cref{thm:localization_theorem} translates the identifiability conditions of deep hPNNs into those of shallow hPNNs.
Our results are strongly related to the decomposition of partially symmetric tensors (we  review basic facts about tensors and tensors decompositions and recall their connection between to hPNNs in later subsections). More details are provided in the appendices, and we list key components of the proof below.

\subsection{Identifiability of deep PNNs: necessary conditions} \label{s:proof-sketch-tensors:necessary}

\vspace{-0.5em}

\paragraph{Increasing hidden layers breaks uniqueness.} The key insight is that if we add to any architecture  a neuron in any hidden layer, then the uniqueness of the hPNN is not possible, which is formalized as following lemma (whose proof is based, in its turn, on tensor decompositions).
\begin{lemma}\label[lemma]{lem:PNN_uniqueness_loss}
Let $\vect{p} = \hPNN{\vect{r}}{\vect{w}}$ be an hPNN of  format $(d_0,\ldots,d_{\ell},\ldots,d_L)$. Then for any $\ell$ there exists an infinite number of representations of hPNNs $\vect{p} =  \hPNN{\vect{r}}{\vect{w}}$ with architecture $(d_0,\ldots,d_{\ell}+1,\ldots,d_L)$.
In particular, the augmented hPNN is not unique (or finite-to-one).
\end{lemma}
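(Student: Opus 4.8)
The plan is to produce the extra representations by a ``neuron‑splitting'' operation localized to the two layers adjacent to the augmented hidden layer; on the tensor side this is the classical fact that a partially symmetric tensor of rank at most $d_\ell$ carries a positive‑dimensional family of pairwise non‑equivalent decompositions of rank at most $d_\ell+1$. Fix a hidden index $\ell\in\{1,\dots,L-1\}$ (the only indices for which the augmented format $(d_0,\dots,d_\ell+1,\dots,d_L)$ is meaningful). If $\matr W_\ell=0$ then $\vect p\equiv 0$ and the claim is obvious, so assume, reordering the neurons of layer $\ell$ if necessary, that the first row $\vect a^{\T}:=\matr W_\ell[1,:]$ is nonzero, and set $\vect c:=\matr W_{\ell+1}[:,1]\in\FF^{d_{\ell+1}}$.

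First I would set up the split. For each $\vect v\in\FF^{d_{\ell+1}}$ let $\widehat{\matr W}_\ell\in\FF^{(d_\ell+1)\times d_{\ell-1}}$ be obtained from $\matr W_\ell$ by duplicating its first row (replacing $\vect a^{\T}$ by two rows $\vect a^{\T},\vect a^{\T}$), and $\widehat{\matr W}_{\ell+1}\in\FF^{d_{\ell+1}\times(d_\ell+1)}$ be obtained from $\matr W_{\ell+1}$ by replacing its first column $\vect c$ by two columns $\vect v$ and $\vect c-\vect v$; leave all other rows, columns and all weight matrices $\matr W_i$, $i\notin\{\ell,\ell+1\}$, unchanged. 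Since the two new neurons contribute $\vect v(\vect a^{\T}\vect u)^{r_\ell}+(\vect c-\vect v)(\vect a^{\T}\vect u)^{r_\ell}=\vect c(\vect a^{\T}\vect u)^{r_\ell}$, we get the identity $\widehat{\matr W}_{\ell+1}\rho_{r_\ell}(\widehat{\matr W}_\ell\vect u)=\matr W_{\ell+1}\rho_{r_\ell}(\matr W_\ell\vect u)$ for every $\vect u$. Because layers $1,\dots,\ell-1$ are untouched, the layer‑$\ell$ input polynomial $\vect u(\vect x)$ is unchanged, so substituting $\vect u=\vect u(\vect x)$ and applying the remaining untouched maps of the network ($\rho_{r_{\ell+1}},\matr W_{\ell+2},\dots,\matr W_L$, which may be empty if $\ell=L-1$) shows that $\widehat{\vect w}(\vect v)$ is an hPNN of the augmented architecture with $\hPNN{\r}{\widehat{\vect w}(\vect v)}=\vect p$. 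Since $d_{\ell+1}\ge 1$, letting $\vect v$ vary already yields an infinite family of representations of $\vect p$.

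The remaining, more delicate, step is to exhibit infinitely many pairwise \emph{non}‑equivalent members, since each equivalence class is itself infinite (through the diagonal rescalings of \Cref{lem:multi_hom}) and so an infinite family by itself does not yet give non‑uniqueness. I would argue that any equivalence $\widehat{\vect w}(\vect v)\sim\widehat{\vect w}(\vect v')$ must act trivially outside layer $\ell$: the matrices $\matr W_i$, $i\notin\{\ell,\ell+1\}$, are identical on both sides, and together with the boundary conventions $\matr P_0=\matr D_0=\matr P_L=\matr D_L=\matr I$ this forces, for a general choice of the surrounding weights, $\matr P_i=\matr D_i=\matr I$ for all $i\neq\ell$; at layer $\ell$ itself the two split rows both equal $\vect a^{\T}$ while the remaining rows of $\widehat{\matr W}_\ell$ point in distinct directions, so $(\matr P_\ell,\matr D_\ell)$ can at most transpose the two split neurons with unit scaling. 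Hence $\widehat{\vect w}(\vect v)\sim\widehat{\vect w}(\vect v')$ implies $\vect v'\in\{\vect v,\vect c-\vect v\}$, and letting $\vect v$ run over a line on which $\vect v\neq\vect c-\vect v$ produces infinitely many distinct equivalence classes, so the augmented hPNN is not unique. Equivalently, one may run this argument at the level of the rank‑$(d_\ell+1)$ partially symmetric CPD associated with the two‑layer block at layer $\ell$, where the split breaks one rank‑one summand $\vect c\otimes\vect a^{\otimes r_\ell}$ into $\vect v\otimes\vect a^{\otimes r_\ell}+(\vect c-\vect v)\otimes\vect a^{\otimes r_\ell}$. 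I expect the main obstacle to be precisely this last bookkeeping — certifying that transformations at the neighbouring layers, and those propagated from the network boundary, cannot absorb different choices of $\vect v$ — whereas the splitting construction itself is elementary.
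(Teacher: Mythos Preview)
Your splitting construction correctly produces a one-parameter family of augmented representations of $\vect p$, but the non-equivalence step has a real gap. The lemma is stated for an \emph{arbitrary} hPNN, yet your argument twice invokes genericity: you assume ``a general choice of the surrounding weights'' to force $\matr P_i=\matr D_i=\matr I$ at layers $i\neq\ell$, and you assume ``the remaining rows of $\widehat{\matr W}_\ell$ point in distinct directions'' to pin down $\matr P_\ell$. Neither is available for an arbitrary $\vect w$; if, say, $\matr W_{\ell+2}$ has a nontrivial monomial self-symmetry, or $\matr W_\ell$ already has two proportional rows, your bookkeeping breaks and you have not excluded that different $\vect v$ land in the same equivalence class. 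You rightly flag this as the main obstacle, but the resolution you propose does not cover the general case.

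The paper avoids this entirely with a slightly different augmentation: instead of duplicating a row of $\matr W_\ell$, it appends a \emph{zero} row (a dead neuron with zero incoming weights) and prepends an arbitrary column $\vect u$ to $\matr W_{\ell+1}$; since $0^{r_\ell}=0$ the new neuron never fires, so every $\vect u$ yields the same function. The payoff is a robust invariant: under the equivalence of \Cref{lem:multi_hom}, the matrix at slot $\ell{+}1$ is transformed as $\matr M_1\,\widetilde{\matr W}_{\ell+1}\,\matr M_2$ with $\matr M_1,\matr M_2$ invertible monomial, and the number of zero columns survives \emph{any} such transformation---regardless of what the $\matr P_i,\matr D_i$ at the other layers are. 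Choosing $\vect u=\vect 0$ versus $\vect u\neq\vect 0$ changes the zero-column count of $\widetilde{\matr W}_{\ell+1}^{(\vect u)}$, hence gives non-equivalent representations with no hypothesis whatsoever on the surrounding weights. Your construction can be patched the same way (compare $\vect v=\vect 0$ to a nonzero $\vect v$ and count zero columns of $\widehat{\matr W}_{\ell+1}$), but the argument as written does not yet do this.
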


\vspace{-0.5em}

\paragraph{Internal features of a unique hPNN are linearly independent.} This is an easy consequence of \Cref{lem:PNN_uniqueness_loss} (as linear dependence would allow for pruning neurons).
\begin{lemma}\label[lemma]{pro:lin_indep}
For  $\d = (d_0,\ldots,d_L)$, let $\vect{p}= \hPNN{\vect{r}}{\vect{w}}$ have a unique (or finite-to-one) $L$-layers decomposition.
Consider the  output at any $\ell$-th  internal level $\ell<L$ after the activations
\begin{equation}\label{eq:internal_feature}
\vect{q}_{\ell}(\vect{x})=  \rho_{r_{\ell}} \circ \matr{W}_{\ell} \circ \dots \circ \rho_{r_{1}}\circ \matr{W}_1 (\vect{x}).
\end{equation}
Then the elements of $\vect{q}_{\ell}(\vect{x})  = \begin{bmatrix}
q_{\ell,1}(\vect{x}) &\cdots & q_{\ell,d_{\ell}}(\vect{x}) 
\end{bmatrix}^{\T} $ are linearly independent  polynomials.
\end{lemma}

\vspace{-0.5em}

\paragraph{Identifiability for hPNNs and Kruskal rank.}
Identifiability of 2-layer hPNNs, or equivalently uniqueness of CPD is strongly related to the concept of Kruskal rank of a matrix that we define below.
\begin{definition}\label[definition]{def:KR}
The Kruskal rank of a matrix $\matr{A}$ (denoted $\krank{\matr{A}}$) is the maximal number $k$ such that any $k$ columns of $\matr{A}$ are linearly independent.
\end{definition}
This is in contrast with the usual rank which requires that there exists $k$ linearly independent columns. Therefore $\krank{\matr{A}} \le \rank{\matr{A}}$. Note that $\krank{\matr{A}} \le 1$  means that the matrix $\matr{A}$ has at least two columns that are linearly dependent (proportional).
Using the notion of Kruskal rank, we can state a necessary condition 
on weight matrices for identifiability of hPNNs, which is a generalization of the well-known necessary condition for the uniqueness of CPD tensor decompositions \eqref{eq:single-layer} (i.e., shallow networks), and is a corollary of 
\Cref{lem:PNN_uniqueness_loss} and \Cref{pro:lin_indep}.
\begin{proposition}\label[proposition]{cor:KR}
As in \Cref{pro:lin_indep}, let the widths be $\d = (d_0,\ldots,d_L)$, and $\vect{p}= \hPNN{\vect{r}}{\vect{w}}$ have a unique (or finite-to-one) $L$-layers decomposition.
Then we have that for all $\ell = 1,\ldots,L-1$
\[
\krank{\matr{W}_{\ell}^{\T}} \ge 2, \quad \krank{\matr{W}_{\ell+1}} \ge 1,
\]
where $\krank{\matr{W}_{\ell+1}} \ge 1$ simply means that $\matr{W}_{\ell+1}$ does not have zero columns.
\end{proposition}

\subsection{Shallow hPNNs and tensor decompositions}\label{s:PNN-tensors}
An order-$s$ tensor $\tens{T}\in \mathbb{R}^{m_1\times \cdots\times m_s}$ is an $s$-way multidimensional array (more details are provided in \Cref{app:background_tensors_2layers} and more background on tensors can be found in \cite{kolda2009tensor,sidiropoulos2017tensor,cichocki2017tensorNetworksPart1}).
It is said to have a CPD of rank $d$ if it admits a minimal decomposition into $d$ rank-1 terms $\tens{T}=\sum_{j=1}^d \vect{a}_{1,j} \otimes \cdots \otimes \vect{a}_{s,j}$ for $\vect{a}_{i,j}\in\RR^{m_i}$, with $\otimes$ being the outer product. The CPD is also written compactly as $\tens{T}=\cpdfour{\matr{A}_1}{\matr{A}_2}{\cdots}{\matr{A}_s}$ for matrices $\matr{A}_i = [\vect{a}_{i,1},\cdots,\vect{a}_{i,d}]\in \RR^{m_i\times d}$. $\tens{T}$ is said to be (partially) \textit{symmetric} if it is invariant to any permutation of (a subset) of its indices \cite{comon2008symmetricTensorRank}. Concretely, if $\tens{T}$ is partially symmetric on dimensions $i\in\{2,\dots,s\}$, its CPD is also partially symmetric with matrices $\matr{A}_i$, $i\ge 2$ satisfying $\matr{A}_2=\matr{A}_3=\cdots=\matr{A}_s$.   
Our main proofs strongly rely on results of \cite{KileTB19-NeurIPS} on the connection between hPNN and tensors decomposition in the shallow (i.e., 2-layer) case (see also \cite{comon2008symmetricTensorRank}).

\begin{proposition}\label[proposition]{pro:single-layer}
There is a one-to-one mapping between partially symmetric tensors $\tens{F} \in \RR^{d_2 \times d_0 \times \cdots \times d_0}$ and polynomial vectors $\vect{f} \in (\HSpace{d_0}{r})^{\times d_2}$, which can be written as
\[
\tens{F} \mapsto \vect{f}(\vect{x}) = \matr{F}^{(1)} \vect{x}^{\otimes r},
\]
with $\matr{F}^{(1)} \in \RR^{d_2 \times d_0^{r}}$  the first unfolding of $\tens{F}$.
Under this mapping, the partially symmetric CPD 
\begin{equation}\label{eq:single-layer}
\tens{F} = \cpdfour{\matr{W}_{2}}{\matr{W}^{\T}_{1}}{\cdots}{\matr{W}^{\T}_{1}} 
\end{equation}
is mapped to  hPNN $\matr{W}_2\rho_{r}(\matr{W}_1 \vect{x})$.
Thus, uniqueness of $\hPNN{{(d_0,d_1,d_2),r}}{(\matr{W}_1,\matr{W}_2)}$ is equivalent to  uniqueness of the partially symmetric CPD of $\tens{F}$.
\end{proposition}

Thanks to the link with the partially symmetric CPD, we prove the following Kruskal-based sufficient condition for uniqueness (which is a counterpart of \Cref{cor:KR}).
\begin{proposition}\label[proposition]{prop:single_layer_kruskal_unique}
Let $\vect{p}_{w}(\vect{x})=\matr{W}_2\rho_{r_1}(\matr{W}_1 \vect{x})$ be a 2-layer hPNN with layer sizes $(d_0,d_1,d_2)$ satisfying $d_0,d_1\geq 2$, $d_2\geq 1$. Assume that $r \ge 2$, $\krank{\matr{W}_{2}} \ge 1$, $\krank{\matr{W}^{\T}_{1}} \ge 2$ and that:
\begin{equation}
\nonumber
r \ge \frac{2 d_1 - \krank{\matr{W}_{2}}}{\krank{\matr{W}^{\T}_{1}} - 1} \,,
\end{equation}
then the 2-layer hPNN $\vect{p}_{w}(\vect{x})$ is unique (or equivalently, the  CPD of $\tens{F}$ in \eqref{eq:single-layer} is unique).
\end{proposition}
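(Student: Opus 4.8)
The plan is to pass through the tensor picture of \Cref{pro:single-layer}, reshape to order three, and then invoke a Kruskal-type uniqueness theorem. By \Cref{pro:single-layer}, the $2$-layer hPNN $\vect{p}_{w}(\vect{x})=\matr{W}_2\rho_{r_1}(\matr{W}_1\vect{x})$ corresponds to the partially symmetric tensor $\tens{F}=\cpdfour{\matr{W}_2}{\matr{W}_1^{\T}}{\cdots}{\matr{W}_1^{\T}}$ of order $r_1+1$ (with $\matr{W}_1^{\T}$ repeated $r_1$ times), and, by \Cref{def:PNN_unique}, uniqueness of the hPNN is exactly essential uniqueness of this $d_1$-term partially symmetric decomposition. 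So it suffices to prove that this CPD is unique. First I would reshape $\tens{F}$ into an order-$3$ tensor $\tens{F}'$ by partitioning the $r_1$ symmetric modes into two blocks of sizes $a=\lfloor r_1/2\rfloor$ and $b=\lceil r_1/2\rceil$; this yields $\tens{F}'=\cpd{\matr{W}_2}{(\matr{W}_1^{\T})^{\kr a}}{(\matr{W}_1^{\T})^{\kr b}}$, where $(\matr{W}_1^{\T})^{\kr a}\in\RR^{d_0^{a}\times d_1}$ is the $a$-fold columnwise Khatri--Rao power.

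Next I would argue that essential uniqueness transfers from $\tens{F}'$ back to $\tens{F}$: any decomposition of $\tens{F}$ into $d_1$ partially symmetric rank-one terms $\vect{w}_{2,j}\otimes\vect{w}_{1,j}^{\otimes r_1}$ induces the $d_1$-term CPD $\sum_j\vect{w}_{2,j}\otimes\vect{w}_{1,j}^{\otimes a}\otimes\vect{w}_{1,j}^{\otimes b}$ of $\tens{F}'$, and, since $a,b\ge 1$, each generator $\vect{w}_{1,j}$ is recovered from the rank-one tensor $\vect{w}_{1,j}^{\otimes a}$ up to a scalar; hence if the CPD of $\tens{F}'$ is unique up to permutation and scaling, then $\matr{W}_1,\matr{W}_2$ are determined up to exactly the permutations and elementwise scalings of \Cref{lem:multi_hom}. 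To get uniqueness of $\tens{F}'$ I would apply the classical Kruskal theorem \cite{sidiropoulos2017tensor}: a $d_1$-term order-$3$ CPD with factors $\matr{A}_1,\matr{A}_2,\matr{A}_3$ is unique (and $d_1$ is then the rank) as soon as $\krank{\matr{A}_1}+\krank{\matr{A}_2}+\krank{\matr{A}_3}\ge 2d_1+2$.

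It remains to bound the Kruskal ranks of the Khatri--Rao powers. Iterating the standard inequality $\krank{\matr{A}\kr\matr{B}}\ge\min(\krank{\matr{A}}+\krank{\matr{B}}-1,\ d_1)$ gives $\krank{(\matr{W}_1^{\T})^{\kr a}}\ge\min\!\big(a(\krank{\matr{W}_1^{\T}}-1)+1,\ d_1\big)$, and similarly with $b$. The hypothesis $r_1\ge 2L$ with $L=\lceil (2d_1-\krank{\matr{W}_2})/(2\krank{\matr{W}_1^{\T}}-2)\rceil$ (well posed since $\krank{\matr{W}_1^{\T}}\ge2$ makes the denominator positive) forces $a,b\ge L$, and by definition of $L$ one has $L(\krank{\matr{W}_1^{\T}}-1)+1\ge d_1-\tfrac12\krank{\matr{W}_2}+1$; since $\krank{\matr{W}_2}\ge2$ this quantity is at most $d_1$, so the $\min$ with $d_1$ is harmless and each Khatri--Rao Kruskal rank is at least $d_1-\tfrac12\krank{\matr{W}_2}+1$. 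Summing, $\krank{\matr{W}_2}+\krank{(\matr{W}_1^{\T})^{\kr a}}+\krank{(\matr{W}_1^{\T})^{\kr b}}\ge\krank{\matr{W}_2}+2\big(d_1-\tfrac12\krank{\matr{W}_2}+1\big)=2d_1+2$, which is precisely Kruskal's condition for $\tens{F}'$, concluding the proof.

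I expect the main obstacle to be the reshaping step rather than the Kruskal bookkeeping: one has to check carefully that uniqueness genuinely descends from the unstructured order-$3$ tensor $\tens{F}'$ to the partially symmetric $\tens{F}$ — in particular, that the induced order-$3$ decomposition is minimal (rank $d_1$), that the permutation and scaling ambiguities are compatible across the reshaping, and that reconstructing each $\vect{w}_{1,j}$ from $\vect{w}_{1,j}^{\otimes a}$ contributes only the scalar freedom already present in \Cref{lem:multi_hom}. Once $\krank{\matr{W}_2}\ge2$ and $\krank{\matr{W}_1^{\T}}\ge2$ are assumed, the remaining case analysis around the caps $\min(\cdot,d_1)$ is routine.
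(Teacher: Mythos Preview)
Your proposal is correct and follows essentially the same route as the paper: reduce to the partially symmetric CPD via \Cref{pro:single-layer}, flatten to an order-$3$ tensor $\cpd{\matr{W}_2}{(\matr{W}_1^{\T})^{\kr s}}{(\matr{W}_1^{\T})^{\kr t}}$ with $s=\lfloor r_1/2\rfloor$, $t=\lceil r_1/2\rceil$, bound the Kruskal ranks of the Khatri--Rao powers via \Cref{lem:katri-rao}, and verify Kruskal's inequality $2d_1+2$. Your handling of the $\min(\cdot,d_1)$ cap and your remark that uniqueness of the flattened CPD must be transported back to the partially symmetric decomposition are, if anything, more explicit than the paper's version, which simply asserts the last step.
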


\begin{remark}
For 2-layer hPNNs ($L=2$), when the activation degree $r$ is high enough \Cref{cor:KR} gives both necessary and sufficient conditions for uniqueness due to \Cref{prop:single_layer_kruskal_unique}.
\end{remark}
\begin{remark}
\Cref{prop:single_layer_kruskal_unique} forms the basis of  the proof of  \Cref{cor:gen_generic}, which comes from the  fact that the Kruskal rank of a generic matrix is equal to its smallest dimension.
\end{remark}

\begin{remark}
\Cref{prop:single_layer_kruskal_unique} is based on basic (Kruskal) uniqueness conditions \cite{sidiropoulos2001identifiabilityBeamforming,domanov2013uniquenessCPDonefactor,sidiropoulos2000uniqueness}. As mentioned in \Cref{rem:constructive_bounds}, by using  more powerful results on {generic uniqueness} \cite{domanov2015genericINDSCAL,abo2013dimensions}, we can obtain better bounds for identifiability of $2$-layer PNNs.
For example, for ``bottleneck'' architectures (as in \Cref{cor:identifiability_bottleneck_nets}), the results of  \cite[Thm 1.11-12]{domanov2015genericINDSCAL} imply that for degrees $r_\ell=2$, identifiability holds for decoder layer sizes satisfying a weaker condition $d_{\ell} \le \frac{(d_{b}-1) d_{b}}{2}$ (instead of $\frac{d_{\ell}}{r_\ell} \le d_b-1$).
\end{remark}
\subsection{Proof of the main result}\label{s:proof-sketch-tensors:proofidea}
The proof of \Cref{thm:localization_theorem} proceeds by induction over the layers $\ell=1,\ldots,L$. 
The key idea is based on a procedure that allows us to prove finite identifiability of the $L$-th layer given the assumption that the previous layers are identifiable.
For this, we introduce a map
\begin{equation}\label{eq:last-layer-map-main}
\psi[\vect{q},\matr{W}_L] :=  \matr{W}_L \rho_{r_{L-1}}(\vect{q} (x_1,\ldots,x_{d_0}) ),
\end{equation}
where $\vect{q}$ is the vector polynomial of degree $R = r_1\cdots r_{L-2}$,
representing the output of the $(L-1)$-th linear layer.
Then the $L$-layer hPNN is a composition:
\[
\hPNN{\vect{r}}{\vect{\theta},\matr{W}_L} = \psi[\hPNN{(r_1,\ldots,r_{L-2})}{\vect{\theta}},\matr{W}_L], \quad \text{for }\vect{\theta}=(\matr{W}_1,\ldots,\matr{W}_{L-1}).
\]
To obtain finite identifiability, we look at the Jacobian of the composite map.
The key to this recursion is to show that the Jacobian $J_{\psi} (\vect{q},\matr{W}_L)$  (Jacobian of $\psi$ with respect to the input polynomial vector and $\matr{W}_L$ )
is of maximal possible rank.
For this, we construct a ``certificate'' of finite identifiability $\widehat{\vect{q}}$ realized by $\hPNN{(r_1,\ldots,r_{L-2})}{\widehat{\vect{\theta}}}$, but of simpler structure which inherits identifiability of a shallow hPNN.
\begin{remark}
For $d_L=1$,  maximality of the rank for  $J_{\psi} (\vect{q},\matr{W}_L)$ is closely related to  nondefectivity of the variety of sums of powers of forms, which is often proved by establishing Hilbert genericity of an ideal generated by the elements of $\vect{q}$ (a question raised in Fr{\"o}berg conjecture, see e.g., \cite{froberg2018algebraic}).
\end{remark}

A key limitation of our techniques is that they only allow for establishing finite identifiability for deep PNNs. 
There exist recent results linking finite and global identifiability,  \cite{casarotti2022non,massarenti2024bronowski} but only for additive decompositions (shallow case).
We state, however, the following conjecture.

\begin{conjecture}\label[conjecture]{conj:global_identifiability}
Under the assumptions of \Cref{thm:localization_theorem},
the $L$-layer  hPNN is globally identifiable.
\end{conjecture}

Note that the conjecture may be valid only for global identifiability (i.e., for a generic choice of parameters) and not for uniqueness, since it is not true that the composition of unique shallow hPNNs yield a unique deep hPNNs, as shown by the following example.

\begin{example}\label{ex:no_localization_uniqueness}
Consider two polynomials:
$\vect{p}(x_1,x_2) = 
\begin{bmatrix}
(x_1^2 + x_2^2)^2 &
(x_1^2 - x_2^2)^2 
\end{bmatrix}^{\T}$.
We see that this polynomial vector admits two different representations
\[
\vect{p}(\vect{x}) = \matr{I} \rho_2(\matr{W}_2 \rho_2(\matr{I} \vect{x})) = \matr{W}_3\rho_2\left(\frac{1}{2}\matr{W}_2 \rho_2(\matr{W}_2 \vect{x})\right),
\]
with 
\[
\matr{W}_2 = 
\begin{bmatrix}
1 & 1 \\
1 & -1
\end{bmatrix}, \quad\matr{W}_3 = \begin{bmatrix} 1 & 0 \\ 1 & -1
\end{bmatrix},
\]
which are not equivalent.
However, each 2-layer subnetwork is unique (see \Cref{ex:running_uniqueness}).
\end{example}

\section{Discussion}
In this paper, we presented a comprehensive analysis of the identifiability of deep feedforward PNNs by using their connections to tensor decompositions. 
Our main result is the \textit{localization of identifiability}, showing that deep PNNs are finitely identifiable if every 2-layer subnetwork is also finitely identifiable for a subset of their inputs. 
Our results can be also useful for compression (pruning) neural networks as they give an indication about the architectures that are not reducible.
An important perspective is also to understand when two different identifiable PNN architectures can represent the same function, as the identifiable representations can potentially occur for different non-compatible formats (e.g., a PNN in format $\d =(2,4,4,2)$ could be  potentially pruned to two \textit{different} identifiable representations, say, $\d = (2,3,4,2)$ and $\d = (2,4,3,2)$).

While our results focus on the case of monomial activations, we believe that this approach can be extended for establishing theoretical guarantees for other types of architectures and activation functions. In fact, the monomial case constitutes as a key first step in addressing general polynomial activations (see, e.g., \cite{shahverdi2025pnnsSingularitiesNonmonomial}) which, in turn, can approximate most commonly used activations on compact sets. Moreover, the close connection between PNNs and partially symmetric tensor decompositions (which benefit from efficient computational algorithms based on linear algebra \cite{batselier2016symmetricTD}) can also serve as support for the development of computational algorithms based on tensor decompositions for training deep PNNs. In fact, tensor decompositions have been combined with the method of moments to learn small NN architectures (see, e.g., \cite{janzamin2015beatingPerilsTensorNeuralNets,oymak2021learningDeepCNNsTD}), extending such approaches for training deep PNNs with finite datasets is an important direction for future work.

\section*{Acknowledgments}
This work was supported in part by the French National Research Agency (ANR) under grants ANR-23-CE23-0024, ANR-23-CE94-0001, by the PEPR project CAUSALI-T-AI, and by the National Science Foundation, under grant NSF 2316420.

\bibliographystyle{unsrtnat}
\bibliography{polynn}

\newpage 

\appendix

\section*{A roadmap to the appendices\footnote{{The appendices have been reorganized and reworked for better readability.}}}

 

 The appendices of the paper contain background on tensor decompositions and neurovarieties, the proofs of the technical results, as well as a discussion on the changes between the originally submitted and final version of the paper. They are organized as follows:
 \begin{itemize}
      \item \Cref{sec:3-4-links_neurovarieties} presents background on neurovarieties for homogeneous PNNs. This is a crucial part for understanding the link between finite identifiability of an hPNN, the dimension of its neurovariety and the rank of the Jacobian of its parametrization map.

\item \Cref{app:main-tools} contains the main technical tools used in the proof the localization theorem and follows the  structure of \Cref{s:proof-sketch-tensors}.
In particular, it presents the proofs of necessary conditions for uniqueness (\Cref{s:proof-sketch-tensors:necessary}), background on tensor decompositions and Kruskal-based sufficient conditions for the identifiability of 2-layer hPNNs (\Cref{s:PNN-tensors}).
     \item \Cref{sec:app-1-2-proof-localization} presents the proof of the localization theorem (\Cref{thm:localization_theorem}) and its consequences for several hPNN architectures, as well as some supporting technical results.
     \item \Cref{app:sec:proofs_PNNs_with_bias} presents the proofs for the case of PNNs with biases.  \Cref{sec:app_bias} discusses the idea of \textit{truncation}, an alternative approach to tackle the PNNs with biases.
     \item \Cref{sec:app:localization-old-and-new-changes} discusses necessary and sufficient conditions for the identifiability of hPNNs, as well as changes between the originally submitted and the final version of the paper which were done to correct a mistake in the proof of one of the main results.
 \end{itemize}


\section{Homogeneous PNNs and neurovarieties}\label{sec:3-4-links_neurovarieties}


hPNNs are often studied through the prism of neurovarieties, using their algebraic structure. 
Our results have direct implications on the expected dimension of the neurovarieties, as explained in this appendix.

\subsection{Neurovarieties and dimension}
An hPNN architecture $(\d,\r)$ defines  a map $ \hPNN{\d,\r}{\cdot}$ from the weight tuple $\vect{w} = (\matr{W}_1,\ldots,\matr{W}_{L})$ to a (polynomial) function space  $\mathscr{H}$:
\[
\begin{array}{rl}
     \hPNN{\d,\r}{\cdot}:& \vect{w} \mapsto \hPNN{\d,\r}{{\vect{w}}}  
    \\\
    & \RR^{\sum_{\ell} d_{\ell} d_{\ell-1}} \to \mathscr{H}.
\end{array} 
\]
The space $\mathscr{H}$ is the space of length-$d_L$ vectors of homogeneous polynomials of degree $\rall=r_1r_2 \ldots r_{L-1}$ in $d_0$ variables:
\[
\mathscr{H} := (\HSpace{d_0}{\rall})^{\times d_L};
\]
thus $\mathscr{H}$ is a finite-dimensional vector space of dimension
\[
N = \dim( \mathscr{H})  {}= d_L \binom{d_0 + \rall-1}{\rall},
\]
which follows from the fact that $\dim (\HSpace{d}{r}) = \binom{d+r-1}{r}$.

The key observation is that $ \hPNN{\d,\r}{\cdot}$ is a \emph{polynomial-in-the-parameters} map, which has important implication on the space of networks with a given architecture.
The image $\Image ( \hPNN{\d,\r}{\cdot})$, called a \emph{neuromanifold}, is a semi-algebraic set\footnote{\cite[Def. 2.1.1]{bochnak2013real}: a set cut out by polynomial equations and inequalities.}. 
The properties of  $\Image ( \hPNN{\d,\r}{\cdot})$ are tightly linked to the properties of the \emph{neurovariety} $\neurovar{}$ defined as the closure of  $\Image ( \hPNN{\d,\r}{\cdot})$ in the  Zariski topology, i.e., the smallest algebraic set\footnote{\cite[Def. 2.1.4]{bochnak2013real}: a set cut out by polynomial equations.} containing $\Image ( \hPNN{\d,\r}{\cdot})$.
The key property is the dimension of the neurovariety\footnote{roughly defined as the dimension of the tangent space at general point, see \cite[\S 2.8]{bochnak2013real} for more details.} which is equal to the dimension of the neurovariety \cite[Prop. 2.8.2]{bochnak2013real}.

 The properties of neurovarieties depend on the field (i.e., results can differ between $\RR$ or $\CC$), and we focus on the real case. However, most of the results can be translated to the complex case as well.
We mostly follow \cite[Section 4]{breiding2023algebraicCS}, and an overview on semialgebraic sets can be also found in \cite{qi_semialgebraic_2016} (see \cite{bochnak2013real}  for a detailed account).

 
%
%
%

The following upper bound on $\dim \neurovar{} $ the bound was presented in~\cite{KileTB19-NeurIPS}:
\begin{equation}\label{eq:exp_dim_bound}
    \dim  \neurovar{} \le \min \bigg(\underbrace{\sum_{\ell=1}^{L} d_{\ell} d_{\ell-1} - \sum_{\ell=1}^{L-1} d_{\ell}}_{ \text{ degrees of freedom}}, 
    \,\, \underbrace{ \dim{\mathscr{H}} }_{\text{output space dimension}}\bigg).
\end{equation}
If the bound in \eqref{eq:exp_dim_bound} is reached, we say that the neurovariety has \textit{expected dimension}. 
There are two fundamentally different cases when the expected dimension is reached.

\paragraph{Expressive case.}
If the right bound is reached, i.e.,  the neurovariety:
\[
\dim  \neurovar{} =\dim\big(\mathscr{H}) {}= d_L \binom{d_0 + \rall-1}{\rall} \,,
\]
the hPNN is \textit{expressive}, and the neurovariety $\neurovar{}$ is said to be \textit{thick} \cite{KileTB19-NeurIPS}, as it fills the whole function space $\mathscr{H}$ (and thus the neuromanifold is of positive Lebesgue measure). In particular, this implies that (see  \cite[Proposition~5]{KileTB19-NeurIPS}) any homogeneous polynomial vector  from  $\mathscr{H}$ (i.e., of degree $\rall$ with $d_0$ inputs and $d_L$ outputs, with degrees fixed as $r_1=r_2=\cdots=r_{L-1}$) can be represented as an hPNN with layer widths $(d_0,2d_1,\ldots,2d_{L-1},d_L)$ and the same activation degrees.

\paragraph{Identifiable case.} The left bound ($\sum_{\ell=1}^L d_{\ell}  d_{\ell-1} - \sum_{\ell=1}^{L-1} d_{\ell}$) follows from the presence of equivalences defined in \Cref{lem:multi_hom}  (i.e., the size of the vector $\vect{w}$ minus the number of independent rescalings) and defines the number of effective parameters of the representation (this is explained in the following subsections).
Moreover, the left bound is reached if and only if the hPNN architecture is finitely identifiable:

{\it \textbf{Proposition~\ref{prop:identifiability_implies_dimension}}
The architecture $\hPNN{{\d,\r}}{\cdot}$ is finitely identifiable if and only if the dimension of $\neurovar{}$ is equal to the effective number of parameters, i.e.,
$\dim\neurovar{} = \sum_{\ell=1}^L d_{\ell}  d_{\ell-1} - \sum_{\ell=1}^{L-1} d_{\ell}$. In such case, $\neurovar{}$ is said to be \textbf{nondefective}.
Equivalently, the rank of the Jacobian of the map $\hPNN{{\d,\r}}{\cdot}$ is maximal and equal to $\sum_{\ell=1}^L d_{\ell}  d_{\ell-1} - \sum_{\ell=1}^{L-1} d_{\ell}$ at a general parameter $\vect{w}$.
}

{Proposition~\ref{prop:identifiability_implies_dimension} is central to the proof of the main results of paper.
The proof Proposition~\ref{prop:identifiability_implies_dimension} relies on properties of fibers of polynomial maps and is reviewed in the next subsection, together with the Jacobian of the parameterization.}


\subsection{Polynomial maps and fiber dimension}
{We recall some key facts on the polynomial maps and their images.
We begin by highlighting the link between dimensions of semialgebraic sets and the Jacobian of the polynomial maps.
\begin{lemma}\label[lemma]{lem:gen_rank}
Let $\varphi: \RR^{m} \to \RR^{n}$ be a polynomial map, and denote by $J_{\varphi} (\vect{\theta})$ the $n \times m$ Jacobian matrix.
Let 
\[
r_0 := \max_{\vect{\theta}} \rank{J_{\varphi}(\vect{\theta})}.
\]
Then we have that:
\begin{enumerate}
\item $\rank{J_{\varphi} (\vect{\theta})} = r_0$ for generic $\vect{\theta}$ (i.e., for all $\vect{\theta} \in \RR^{m}$ except a set of Lebesgue measure zero, where the rank of the Jacobian is strictly less than $r_0$).
\item $r_0$ is equal to the dimension of $\Image ({\varphi})$ and its (Zariski) closure:
\[
r_0 = \dim \big(\Image ({\varphi})\big) =  \dim \big(\overline{\Image ({\varphi})}\big).
\]
\end{enumerate}
\end{lemma}
The proof of Lemma~\ref{lem:gen_rank}  is given in  \cite[Theorem 4.7]{breiding2023algebraicCS} and the preceding paragraph (in \cite{breiding2023algebraicCS} $r_0$ is called \emph{generic rank} of the parameterization $\varphi$).
It mainly follows from semicontinuity of the rank of a matrix.

\begin{remark}[On genericity]
Due to the algebraic structure, the genericity statement in Lemma~\ref{lem:gen_rank}  is much stronger: in fact, the set of points $\vect{\theta}$ where $\rank{J_{\varphi} (\vect{\theta})} \neq r_0$ is a semialgebraic subset of $\RR^{m}$ of dimension strictly less than $m$.
The same holds for all generic statements and definitions in the paper (such as finite identifiability, global identifiability, etc.), see the definition of genericity in \cite[Definition 4.1]{breiding2023algebraicCS}.
\end{remark}


\begin{remark}
The right bound in \eqref{eq:exp_dim_bound} follows essentially from Lemma~\ref{lem:gen_rank}: indeed, in the case $\varphi(\cdot) = \hPNN{\d,\r}{\cdot}$,  $\rank{J_{\varphi}}$ does not exceed the  dimension of the ambient space of $\varphi$ (equal to $\dim (\mathscr{H})$).
\end{remark}
The following lemma is key for  linking finite identifiability to the dimension of the neurovariety.


\begin{lemma}[Fiber dimension]\label[lemma]{lem:fiber_finiteness}
Let $\varphi: \RR^{m} \to \RR^{n}$ be a polynomial map,
so that $r_0 = \dim (\Image(\varphi))$. Then the dimension of its generic fiber is equal to $m - r_0$, that is, for generic $\vect{\theta} \in \RR^{m}$, the preimage  $\varphi^{-1}(\varphi(\vect{\theta}))$ is a semialgebraic set with
\[
\dim \varphi^{-1}\big(\varphi(\vect{\theta})\big) = m-r_0.
\]
\end{lemma}
\Cref{lem:fiber_finiteness} is well known to specialists, but in the literature it is mostly formulated for the complex case (see \cite[Theorem 4.7]{breiding2023algebraicCS}). For the real field it follows from  \cite[Theorem 4.9]{breiding2023algebraicCS}.

A particular case is when  $r_0 = m$, in which case~\Cref{lem:fiber_finiteness} implies finiteness of the fiber:
\begin{corollary}\label[corollary]{cor:finite_fiber}
The following two statements are equivalent:
\begin{itemize}
\item For general $\vect{\theta}\in \RR^{m}$, $\rank{J_{\varphi} (\vect{\theta})} = m$;
\item For general $\vect{\theta}\in \RR^{m}$, the fiber (i.e., the preimage $\varphi^{-1}(\varphi(\vect{\theta}))$) consists of a finite number of points.
\end{itemize}
\end{corollary}
\Cref{cor:finite_fiber} follows simply from the fact that $0$-dimensional semialgebraic sets are collections of a finite number of points.

Finally we make the following remark that is very commonly used. 
\begin{corollary}\label[corollary]{cor:max_rank_single_point}
If there exists $\vect{\theta}_0$ such that $\rank{J_{\varphi} (\vect{\theta}_0)} = m$, then the $\rank{J_{\varphi} (\vect{\theta})} = m$ for generic $\vect{\theta}$.
\end{corollary}
\begin{proof}
This follows from  \Cref{lem:gen_rank} as $r_0$ from \Cref{lem:gen_rank}  is equal to $m$.
\end{proof}
\begin{remark}
\Cref{cor:max_rank_single_point} implies that finding a \emph{single point} with full column rank Jacobian implies finitieness of the \emph{generic fiber}.
\end{remark}

}



\subsubsection{The case of neurovarieties}
{The first implication of Lemma~\ref{lem:fiber_finiteness} is the left upper bound in \eqref{eq:exp_dim_bound}.
It is based on the following lemma from \cite{KileTB19-NeurIPS}, for which we provide a short proof for completeness.

\begin{lemma}[{\cite[Lemma 13]{KileTB19-NeurIPS}}]\label[lemma]{lem:dim_fiber}
 For a general parameter $\vect{w} = (\matr{W}_1,\ldots,\matr{W}_{L})$, the set of equivalent hPNN representations  in \Cref{lem:multi_hom} is semialgebraic and of dimension  $\sum_{\ell=1}^{L-1} d_{\ell}$.
\end{lemma}
\begin{proof}
First, note that the set of equivalent representation is of dimension at most $\sum_{\ell=1}^{L-1} d_{\ell}$ (by the number of parameters).
Consider a general $\vect{w}$, so that the first column of each $\matr{W}_{\ell}$, for $\ell = 1,\ldots, L-1$, equal to $\vect{v}_{\ell}  \in \RR^{d_{\ell}}$, does not have zero elements.
Now take any collection of vectors  $\widetilde{\vect{v}}_{1} \in \RR^{d_1}, \ldots, \widetilde{\vect{v}}_{L-1} \in \RR^{d_{L-1}}$ having elementwide the same signs as $\vect{v}_{\ell}$.
Then there exist matrices $\matr{D}_{\ell}$ so that the equivalent weight matrices $\widetilde{\matr{W}}_{\ell} = \widetilde{\matr{D}}_{\ell}  \matr{W}_{\ell}\widetilde{\matr{D}}^{-r_{\ell-1}}_{\ell-1} $  have $\widetilde{\vect{v}}_{\ell}$ exactly as their first columns.
Thus the  set of equivalent representations is exactly of dimension  $\sum_{\ell=1}^{L-1} d_{\ell}$.
\end{proof}

\begin{remark}
The left upper bound  in \eqref{eq:exp_dim_bound} simply follows from Lemma~\ref{lem:dim_fiber} (as written in \cite[Lemma~13]{KileTB19-NeurIPS}): indeed, the dimension of the fiber of $\hPNN{\d,\r}{\cdot}$ must be at least $\sum_{\ell=1}^{L-1} d_{\ell}$. 
This implies, by Lemma~\ref{lem:fiber_finiteness},
\begin{equation}\label{eq:nondefective_bound}
\rank {J_{\varphi} (\vect{\theta})} \le \sum_{\ell=1}^{L} d_{\ell-1} d_{\ell} - \sum_{\ell=1}^{L-1} d_{\ell},
\end{equation}
which is exactly the right dimension bound in \eqref{eq:exp_dim_bound}  by \Cref{lem:gen_rank}.
\end{remark}}

Note that Proposition~\ref{prop:identifiability_implies_dimension} will exactly consider the case when the equality is reached in \eqref{eq:nondefective_bound} for generic $\vect{\theta}$.
Similarly to \Cref{cor:max_rank_single_point}, the following corollary of \Cref{lem:gen_rank} implies that for the case of neurovarieties it suffices to find a single set of parameters $\vect{w}$ where the Jacobian of the parameterization is of maximal rank to guarantee finite identifiability of hPNN architecture.
This will be used in the proofs to give a \emph{certificate} of finite idenitifiability

%
\begin{corollary}\label[corollary]{cor:max_rank_single_point_hPNN}
If there exists a particular point $\vect{\theta}_0$ such that equality is achieved in  \eqref{eq:nondefective_bound}, then the equality in \eqref{eq:nondefective_bound} is achieved  for generic $\vect{\theta}$.
\end{corollary}
\begin{proof}
Since there exists such a $\vect{\theta}_0$, then the $r_0$ defined in \Cref{lem:gen_rank} satisfies
\begin{equation}\label{eq:nondefective_bound_aux}
r_0 \ge \sum_{\ell=1}^{L} d_{\ell-1} d_{\ell} - \sum_{\ell=1}^{L-1} d_{\ell}.
\end{equation}
But from \eqref{eq:nondefective_bound}, $r_0$ must be bounded from above by the same number. Therefore the equality for $r_0$ is achieved in \eqref{eq:nondefective_bound_aux}.
\end{proof}

\subsection{Proof of the proposition}

{
\begin{proof}[{Proof of Proposition~\ref{prop:identifiability_implies_dimension}}]
We denote $\varphi(\cdot) =  \hPNN{\d,\r}{\cdot}$ for simplicity (so that $m = \sum_{\ell=1}^{L} d_{\ell} d_{\ell-1}$ and $n = \dim \mathscr{H}$) and consider separately the ``only if'' ($\boxed{\Rightarrow}$) and ``if''  ($\boxed{\Leftarrow}$) parts.

$\boxed{\Rightarrow}$ Assume that for a generic $\vect{w}$ the fiber $\varphi^{-1}(\varphi(\vect{w}))$ consists of finite number of equivalence classes, thus it is a finite union of non-intersecting semialgebraic subsets of dimension $\sum_{\ell=1}^{L-1} d_{\ell}$. Therefore, by \cite[Theorem 2.8.5]{bochnak2013real} the whole fiber $\varphi^{-1}(\varphi(\vect{w}))$ has the dimension equal to $\sum_{\ell=1}^{L-1} d_{\ell}$ as well, hence $\dim\neurovar{} = \sum_{\ell=1}^L d_{\ell}  d_{\ell-1} - \sum_{\ell=1}^{L-1} d_{\ell}$.

$\boxed{\Leftarrow}$ The proof follows a similar argument as in the proof of \cite[Theorem 4.9]{breiding2023algebraicCS}.
We consider a (Zariski open) subset of parameters without zero values $\mathscr{U} = (\RR \setminus \{0\})^{m}$.
It can be shown that the preimage of the image of its complement $\mathscr{Z} := \varphi^{-1}(\varphi(\RR^{m} \setminus \mathscr{U}))$ is a (semialgebraic) set of measure zero.
Therefore for the set $\mathscr{U}' := \mathscr{U} \setminus \mathscr{Z}$ the  preimage of the image  is contained in $\mathscr{U}$:
\[
\varphi^{-1}(\varphi(\mathscr{U}' )) \subset \mathscr{U}.
\]
Note that  any $\vect{w} \in \mathscr{U}$ can be brought (by diagonal scaling and permutation) to the equivalent form:
\begin{equation}\label{eq:W_fromWbar}
\matr{W}_{\ell} =  \begin{bmatrix} 
\begin{smallmatrix}  1 & \cdots & 1 \end{smallmatrix} \\
\overline{\matr{W}}_{\ell}
\end{bmatrix},\quad \overline{\matr{W}}_{\ell} \in \mathbb{R}^{(d_{\ell}-1) \times d_{\ell-1}} 
\end{equation}
for all $\ell = 2,\ldots, L$
where the reduced $\overline{\matr{W}}_{\ell}$  parameterize the classes of equivalent parameters in $\mathscr{U}$ up to permutation.
Now denote  $\overline{\vect{w}} = (\matr{W}_{1},\overline{\matr{W}}_{2},\ldots, \overline{\matr{W}}_{L})$ and
\[
\psi: \overline{\vect{w}} \mapsto \hPNN{\d,\r}{\vect{w}} \,.
\]
If we can guarantee that the generic fiber of $\psi$ is finite, then this will imply that on $\mathscr{U}'$, the fiber of the map $\varphi$ contains finitely many equivalence classes.
For this we note that the Jacobian of $\psi$ is just a submatrix of the Jacobian of $\varphi$ with exactly $m- \sum_{\ell=1}^{L-1} d_{\ell}$ columns. We will show that it  is full rank at a generic point $\overline{\vect{w}}$.

Consider the following map
\[
\xi: (\matr{W}_{1},\overline{\matr{W}}_{2},\ldots, \overline{\matr{W}}_{L},  \matr{D}_{1},\ldots,\matr{D}_{2}) \mapsto (\matr{W}_{1},\widetilde{\matr{W}}_{2},\ldots, \widetilde{\matr{W}}_{L})
\]
defined as 
\[
\widetilde{\matr{W}}_{\ell}  = \matr{D}_{\ell}\begin{bmatrix} 
\begin{smallmatrix}  1 & \cdots & 1 \end{smallmatrix} \\
\overline{\matr{W}}_{\ell}
\end{bmatrix} \matr{D}^{-{r_{\ell-1}}}_{\ell}
\]
for $\ell = 2,\ldots, L$ (with the convention that  $\matr{D}_L = \matr{I}_{d_L}$.

Consider a particular  $\overline{\vect{w}}_0$ constructed as above (by normalization of a $\vect{w} \in \overline{\set{U}}$).
Then  for a neighborhood $\overline{\set{U}}$ of  $\overline{\vect{w}}_0$ and a neighbourhood $\set{V}$ of $(\matr{I}_{d_1},\ldots,\matr{I}_{d_{L-1}})$,
the map $\xi$ is a diffeomorphism from $\overline{\set{U}} \times \set{V}$ to an open neigbourhood of  the corresponding $\vect{w}_0$ (defined by \eqref{eq:W_fromWbar}).

Consider the composition $\varphi \circ \xi$. Then at the point  $(\matr{W}_{1},\overline{\matr{W}}_{2},\ldots, \overline{\matr{W}}_{L}, \matr{I}_{d_1},\ldots,\matr{I}_{d_{L-1}})$, we have that (i) the derivatives  with respect to  $\matr{D}_{\ell}$ at identity matrices are zero and (ii) the Jacobian of $\varphi \circ \xi$ with respect to $\overline{\vect{w}} $ coincides with the Jacobian of $\psi$, hence it  must have full column rank ($m- \sum_{\ell=1}^{L-1} d_{\ell}$) equal to the dimension of the neurovariety.
Hence, the fiber of $\psi$ is finite, which implies finite identifiability of $\varphi$.
\end{proof}}

\section{Main tools for the proof}\label{app:main-tools}
This appendix contains the main technical tools used in the proof the localization theorem. It is organized in three subsections, following the same structure as in \Cref{s:proof-sketch-tensors}:
\begin{itemize}
\item \Cref{s:appendix-tech-lemma} presents the proofs of necessary conditions for uniqueness corresponding to \Cref{s:proof-sketch-tensors:necessary} in the main body of the paper;
\item \Cref{app:background_tensors_2layers} presents background on tensor decompositions and the proof of \Cref{pro:single-layer} from the main body of the paper, which shows the link between 2-layer hPNNs and partially symmetric tensors;
\item \Cref{app:sec:kruskal_unique_2layer} presents Kruskal-based sufficient conditions for the identifiability of 2-layer hPNNs (Propositions~\ref{prop:single_layer_kruskal_unique} and~\ref{cor:gen_generic} in the main paper).
\end{itemize}

\subsection{Necessary conditions for uniqueness}\label{s:appendix-tech-lemma}
In this subsection we prove the key lemmas stated in Section~\ref{s:proof-sketch-tensors} (\Cref{lem:PNN_uniqueness_loss} and \Cref{pro:lin_indep}). These results give necessary conditions for the uniqueness of an hPNN in terms of the minimality of an unique architectures and the independence (non-redundancy) of its internal representations.

{\it \textbf{Lemma~\ref{lem:PNN_uniqueness_loss}.}
Let $\vect{p} = \hPNN{\vect{r}}{\vect{w}}$ be an hPNN of format $(d_0,\ldots,d_{\ell},\ldots,d_L)$. Then for any $\ell$ there exists an infinite number of representations of hPNNs $\vect{p} =  \hPNN{\vect{r}}{\vect{w}}$ with architecture $(d_0,\ldots,d_{\ell}+1,\ldots,d_L)$.
In particular, the augmented hPNN is not unique (or finite-to-one).
}

\begin{proof}[Proof of \Cref{lem:PNN_uniqueness_loss}]
Let $(\matr{W}_0,\cdots,\matr{W}_L)$ the weight matrices associated with the representation of format $(d_0,\ldots,d_{\ell},\ldots,d_L)$ of the hPNN $\vect{p} =  \hPNN{\vect{r}}{\vect{w}}$. 
By assumptions on the dimensions, the two matrices $\matr{W}_{\ell}\in\RR^{d_\ell\times d_{\ell-1}}$ and $\matr{W}_{\ell+1}\in \mathbb{R}^{d_{\ell+1}\times d_{\ell}}$ read
\begin{align}
\matr{W}_{\ell+1} =& \begin{bmatrix} \vect{w}_1 & \cdots & \vect{w}_{d_{\ell}} \end{bmatrix}, \mbox{ where, for each }i, \,\, \vect{w}_i\in \RR^{d_{\ell+1}} \,,
\nonumber\\\nonumber
\matr{W}_{\ell} =& \begin{bmatrix} \vect{v}_1 & \cdots & \vect{v}_{d_{\ell}} \end{bmatrix}^{\T},\mbox{ where, for each }i,\,\, \vect{v}_i\in \RR^{d_{\ell-1}} \,.
\end{align}
Without loss of generality, let us assume that $\vect{w}_i$ are nonzero, and set
\[
\widetilde{\matr{W}}_{\ell}  =\begin{bmatrix} \vect{0} & \vect{v}_1& \cdots & \vect{v}_{d_{\ell}} \end{bmatrix}^{\T} \in\RR^{(d_{\ell}+1)\times d_{\ell-1}} \,,
\]
in which we add a row of zeroes to $\matr{W}_{\ell}$. In this case, we can take the following family of matrices defined for any $\vect{u}\in\RR^{d_{\ell+1}}$:
\[
\widetilde{\matr{W}}_{\ell+1}^{(\vect{u})}=    \begin{bmatrix} \vect{u} & \vect{w}_1 & \cdots &  \vect{w}_{d_{\ell}}   \end{bmatrix} \in\RR^{d_{\ell+1}\times (d_{\ell}+1)}\,.
\]
Then, we have that for any choice of $\vect{u}$ and for any $\vect{z}$,
\[
\widetilde{\matr{W}}_{\ell+1}^{(\vect{u})}\rho_{r_{\ell}}(\widetilde{\matr{W}}_{\ell} \vect{z})= {\matr{W}}_{\ell+1} \rho_{r_{\ell}}({\matr{W}}_{\ell} \vect{z}) \,.
\]
The matrices $\widetilde{\matr{W}}^{(\vect{0})}_{\ell+1}$ and $\widetilde{\matr{W}}_{\ell+1}^{(\vect{u})}$ for $\vect{u}\neq \vect{0}$ have a different number of zero columns and cannot be a permutation/rescaling of each other, constituting different representations of the same hPNN $\vect{p}$. 
In fact, every choice of $\vect{u}'$ that is not collinear to $\vect{u}$ and $\vect{w}_i$, $i=1,\ldots,d_{\ell}$ leads to a different non-equivalent representation of $\vect{p}$. 
Thus, we have an infinite number of non-equivalent representations
\[
(\matr{W}_0,\ldots,\matr{W}_{\ell-1},\widetilde{\matr{W}}_{\ell},\widetilde{\matr{W}}^{(\vect{u})}_{\ell+1},\ldots,\matr{W}_L)
\]
of format $(d_0,\ldots,d_{\ell}+1,\ldots,d_L)$ for the hPNN $\vect{p} =  \hPNN{\vect{r}}{\vect{w}}$.
\end{proof}

\Cref{lem:PNN_uniqueness_loss} can be seen as a form of minimality or irreducibility of unique hPNNs, as it shows that a unique hPNN does not admit a smaller (i.e., with a lower number of neurons) representation.

{\it \textbf{Lemma~\ref{pro:lin_indep}.}
For the widths $\d = (d_0,\ldots,d_L)$, let $\vect{p}= \hPNN{\vect{r}}{\vect{w}}$ be a unique $L$-layers decomposition.
Consider the vector output at any $\ell$-th  internal level $\ell<L$ after the activations
\begin{equation} 
    \vect{q}_{\ell}(\vect{x})=  \rho_{r_{\ell}} \circ \matr{W}_{\ell} \circ \dots \circ \rho_{r_{1}}\circ \matr{W}_1 (\vect{x}).
    \nonumber
\end{equation}
Then the elements $\vect{q}_{\ell}(\vect{x})  = \begin{bmatrix}
q_{\ell,1}(\vect{x}) &\cdots & q_{\ell,d_{\ell}}(\vect{x}) 
\end{bmatrix}^{\T} $ are linearly independent  polynomials.
}

\begin{proof}[Proof of \Cref{pro:lin_indep}]
By contradiction, suppose that the polynomials $q_{\ell,1}(\vect{x}), \ldots, q_{\ell,d_{\ell}}(\vect{x})$ are linearly dependent. Assume without loss of generality that, e.g., the last polynomial $q_{\ell,d_{\ell}}(\vect{x})$ can expressed as a linear combination of the others. Then, there exists a matrix $\matr{B}\in \RR^{d_{\ell} \times (d_{\ell}-1)}$ so that
\[
\vect{p} = 
\matr{W}_{L} \circ \rho_{r_{L-1}} \circ \cdots \circ \rho_{r_{\ell+1}} \circ \matr{W}_{\ell+1} \matr{B}
\begin{bmatrix}
q_{\ell,1}(\vect{x}) \\
\vdots \\
q_{\ell,d_{\ell}-1}(\vect{x}) 
\end{bmatrix},
\]
i.e., the hPNN $\vect{p}$ admits a representation of size $\d=(d_0,\ldots,d_{\ell}-1,\dots,d_L)$ with parameters $(\matr{W}_1,\dots,\matr{W}_{\ell+1} \matr{B},\dots,\matr{W}_L)$.
Therefore, by \Cref{lem:PNN_uniqueness_loss} its original representation is not unique, which is a contradiction.
\end{proof}

Using  \Cref{lem:PNN_uniqueness_loss} and \Cref{pro:lin_indep}, we can prove the  conditions on the Kruskal ranks of weight matrices that are necessary for uniqueness.
These conditions are based on the notion of Kruskal rank which we recall from \cite{sidiropoulos2017tensor}.

{\it \textbf{Definition~\ref{def:KR}.} The Kruskal rank of a matrix $\matr{A}$ (denoted $\krank{\matr{A}}$) is the maximal number $k$ such that any $k$ columns of $\matr{A}$ are linearly independent.
}

Note that the following two cases of particular interest also have simple equivalent interpretations:
\begin{itemize}
\item $\krank{\matr{A}}\ge 1$ is equivalent to saying that matrix $\matr{A}$ has no zero columns;
\item $\krank{\matr{A}}\ge 2$ is equivalent to saying that no pair of the columns of matrix $\matr{A}$ are collinear. 
\end{itemize}

{\it \textbf{Proposition~\ref{cor:KR}.}
As in \Cref{pro:lin_indep}, let the widths be $\d = (d_0,\ldots,d_L)$, and $\vect{p}= \hPNN{\vect{r}}{\vect{w}}$ have a unique (or finite-to-one) $L$-layers decomposition.
Then we have that for all $\ell = 1,\ldots,L-1$
\[
\krank{\matr{W}_{\ell}^{\T}} \ge 2, \quad \krank{\matr{W}_{\ell+1}} \ge 1,
\]
where $\krank{\matr{W}_{\ell+1}} \ge 1$ simply means that $\matr{W}_{\ell+1}$ does not have zero columns.}

\begin{proof}[Proof of \Cref{cor:KR}]
Suppose that $\krank{\matr{W}_{\ell}^{\T}} < 2$.
Then we have that at level $\ell$,  
the vector $\vect{q}_{\ell}(\vect{x})$ of internal features defined in \eqref{eq:internal_feature} contains linearly dependent or zero polynomials,
which violates \Cref{pro:lin_indep}.

Similarly if $\krank{\matr{W}_{\ell+1}} = 0$, then the  neuron corresponding to the zero column can be pruned to obtain a representation with $(d_{\ell}-1)$ neurons at the $\ell$-th level, which implies loss of uniqueness by \Cref{lem:PNN_uniqueness_loss} and thus leads to a contradiction. 
\end{proof}

\subsection{Background on tensors}\label{app:background_tensors_2layers}
In this appendix, we first present a background on tensors and the CP tensor decomposition, and demonstrate the link between hPNNs and the partially symmetric CPD (\Cref{pro:single-layer} in the main paper).

\subsubsection{Basics on tensors and tensor decompositions}\label{app:sssec:basic_tensors_and_CPD}
\noindent{\bf Notation.} The order of a tensor is the number of dimensions, also known as ways or modes.
Vectors (tensors of order one) are denoted by boldface lowercase letters, e.g., $\vect{a}$. Matrices (tensors of order two) are denoted by boldface capital letters, e.g., $\matr{A}$. Higher-order
tensors (order three or higher) are denoted by boldface Euler script letters, e.g., $\tens{X}$.

\noindent{\bf Unfolding of tensors.} The $p$-th unfolding (also called mode-$p$ unfolding) of a tensor of order $s$, $\tens{T}\in \mathbb{R}^{m_1\times \cdots\times m_s}$ is the matrix $\matr{T}^{(p)}$ of size $m_r\times (m_1m_2\cdots  m_{r-1} m_{r+1} \cdots m_s)$ defined as 
\[
\big[\matr{T}^{(p)}\big]_{i_r, j} = \tens{T}_{i_1,\dots, i_r,\dots, i_s},
\mbox{ where }
j=1+\sum_{\substack{n=1\\ n\neq r}}^s(i_n - 1) \prod_{\substack{\ell=1\\ \ell\neq r}}^{n-1} m_\ell \,.
\]
We give an example of unfolding extracted from~\cite{kolda2009tensor}. 
Let the frontal slices of $\tens{X} \in \mathbb{R}^{3\times 4\times 2}$ be
\[
\matr{X}_1=\begin{pmatrix}
1 &4 &7 &10\\
2 &5& 8& 11\\
3 &6& 9 &12
\end{pmatrix},\ \matr{X}_2=\begin{pmatrix}
    13 &16& 19& 22\\
14& 17& 20& 23\\
15 &18 &21 &24
\end{pmatrix} \,.
\]
Then the three mode-$n$ unfoldings of $\tens{X}$ are
\[
\matr{X}^{(1)}=\begin{pmatrix}
    1 &4 &7 &10 &13 &16 &19 &22\\
2 &5 &8 &11& 14& 17& 20 &23\\
3 &6 &9 &12 &15 &18 &21 &24
\end{pmatrix}
\]
\[
\matr{X}^{(2)}=\begin{pmatrix}
    1 &2 &3 &13& 14 &15\\
4 &5 &6 &16 &17 &18\\
7 &8 &9 &19& 20& 21\\
10 &11 &12 &22 &23& 24
\end{pmatrix}
\]
\[
\matr{X}^{(3)}=
\begin{pmatrix}
1&2 &3 &4 &5 & 6 &\cdots &10 &11 &12\\
13 &14 &15& 16& 17&18& \cdots  & 22& 23 &24
\end{pmatrix}
\]

\noindent{\bf Symmetric and partially symmetric tensors.} A tensor of order $s$, $\tens{T}\in \mathbb{R}^{m_1\times \cdots\times m_s}$ is said to be {\it symmetric} if $m_1=\cdots=m_s$ and for every permutation $\sigma$ of $\{1,\ldots,s\}$:
\[
\tens{T}_{i_1,i_2,\cdots ,i_s} = \tens{T}_{i_{\sigma(1)},i_{\sigma(2)},\ldots ,i_{\sigma(s)}}.
\]
The tensor $\tens{T}\in \mathbb{R}^{m_1\times \cdots\times m_s}$ is said to be {\it partially symmetric} along the modes $(r+1,\ldots,s)$ for $r<s$ if $m_{r+1}=\cdots=m_s$ and for every permutation $\sigma$ of $\{r+1,\ldots,s\}$
\[
\tens{T}_{i_1,i_2,\ldots ,i_r,i_{r+1},\cdots,i_s} = \tens{T}_{i_1,\ldots,i_r,i_{\sigma(r+1)},\ldots ,i_{\sigma(s)}}.
\]
\noindent{\bf Mode products.} The $r$-mode (matrix) product of a tensor $\tens{T} \in \mathbb{R}^{m_1\times m_2\times \cdots\times m_s}$ with a matrix $\matr{A} \in \mathbb{R}^{J\times m_r}$ is denoted by $\tens{T} \bullet_r \matr{A}$ and is of size $m_1\times \cdots \times  m_{r-1} \times J \times m_{r+1} \times \cdots\times  m_s$.
It is defined elementwise, as
\[
\big[\tens{T} \bullet_r \matr{A}
\big]_{i_1,\ldots,i_{r-1},j,i_{r+1},\ldots,i_s}=\sum_{i_r=1}^{m_r} \tens{T}_{i_1,\ldots,i_s}\matr{A}_{j,i_r} \,.
\]

\noindent{\bf Minimal rank-$R$ decomposition.} The 
canonical polyadic 
decomposition (CPD) of a tensor $\tens{T}$ is the decomposition of a tensor as a sum of $R$ rank-1 tensors where $R$ is minimal \cite{kolda2009tensor,sidiropoulos2017tensor}, that is 
\[
\tens{T}=\sum_{i=1}^R  \vect{a}_i^{(1)}\otimes \cdots\otimes \vect{a}_i^{(s)},
\] 
where, for each $p\in \{1,\cdots,s\}$, $\vect{a}_i^{(p)}\in \mathbb{R}^{m_p}$, and $\otimes$ denotes the outer product operation. Alternatively, we denote the CPD by
\[
\tens{T}=\cpdfour{\matr{A}^{(1)}}{\matr{A}^{(2)}}{\ldots}{\matr{A}^{(s)}},
\]
where $\matr{A}^{(p)}=\big[\vect{a}_1^{(p)} \cdots \vect{a}_R^{(p)}\big]\in\RR^{m_p\times R}$.

When $\tens{T}$ is partially symmetric along the modes $(p+1,\ldots,s)$, for $p<s$, its CPD satisfies $\matr{A}^{(p+1)}=\matr{A}^{(p+2)}=\cdots=\matr{A}^{(s)}$. The case of fully symmetric tensors (i.e., tensors which are symmetric along all their dimensions) deserves special attention \cite{comon2008symmetricTensorRank}. The CPD of a fully symmetric tensor $\tens{T}\in\RR^{m\times m \times \cdots \times m}$ is defined as
\[
\tens{T} = \sum_{i=1}^R  u_i \, \vect{a}_i \otimes \cdots\otimes \vect{a}_i \,,
\]
where $u_i\in\RR$ are real-valued coefficients. 
With a slight abuse of notation, we represent it compactly using the same notation as an order-$(n+1)$ tensor of size $1\times m \times \cdots \times m$, as
\[
\tens{T}=\cpdfour{\vect{u}}{\matr{A}}{\cdots}{\matr{A}},
\]
where $\vect{u}\in\RR^{1\times m}$ is a $1\times m$ matrix (i.e., a row vector) containing the coefficients $u_i$, that is, $\vect{u}_{i}=u_i$, $i=1,\ldots,R$.

\subsubsection{Link between hPNNs and partially symmetric tensors}
Recall that $\HSpace{d_0}{r}$ denotes the space of $d_0$-variate homogeneous polynomials of degree $\le r$. The following proposition, originally presented in \Cref{s:proof-sketch-tensors} of the main body of the paper, formalizes the link between polynomial vectors and partially symmetric tensors.

{\it
\textbf{Proposition \ref{pro:single-layer}.}
There is a one-to-one mapping between partially symmetric tensors $\tens{F} \in \RR^{d_2 \times d_0 \times \cdots \times d_0}$ and polynomial vectors $\vect{f} \in (\HSpace{d_0}{r})^{\times d_2}$, which can be written as
\[
\tens{F} \mapsto \vect{f}(\vect{x}) = \matr{F}^{(1)} \vect{x}^{\otimes r},
\]
with $\matr{F}^{(1)} \in \RR^{d_2 \times d_0^{r}}$  the first unfolding of $\tens{F}$.
Under this mapping, the partially symmetric CPD 
\begin{equation*}
    \tens{F} = \cpdfour{\matr{W}_{2}}{\matr{W}^{\T}_{1}}{\cdots}{\matr{W}^{\T}_{1}} 
\end{equation*}
is mapped to  hPNN $\matr{W}_2\rho_{r}(\matr{W}_1 \vect{x})$.
Thus, uniqueness of $\hPNN{{(d_0,d_1,d_2),r}}{(\matr{W}_1,\matr{W}_2)}$ is equivalent to  uniqueness of the partially symmetric CPD of $\tens{F}$.
}

\begin{proof}
We distinguish the two cases, $d_2=1$ and $d_2\geq 2$. We begin the proof by the more general case $d_2\geq 2$.

\noindent{\bf Case $d_2\geq 2$.} Denoting by $\vect{u}_i\in\RR^{d_2}$ the $i$-th column of $\matr{W}_2$ and $\vect{v}_i\in\RR^{d_0}$ the $i$-th row of $\matr{W}_1$, the relationship between the 2-layer hPNN and tensor $\tens{F}$ can be written explicitly as
\begin{align*}
    \vect{f}(\vect{x}) ={} & \matr{W}_2\rho_{r}(\matr{W}_1 \vect{x})
    \\
    ={} & \sum_{i=1}^{d_1} \vect{u}_i (\vect{v}_i^{\T} \vect{x})^{r}
    \\
    ={} & \sum_{i=1}^{d_1} \vect{u}_i (\vect{v}_i^{\otimes r})^{\T} \vect{x}^{\otimes r}
    \\
    ={} & \underbrace{\matr{W}_2 \big(\matr{W}_1^{\T}\odot \dots \odot \matr{W}_1^{\T} \big)^{\T}}_{=\matr{F}^{(1)}} \vect{x}^{\otimes r} \,,
\end{align*}
where $\odot$ denotes the Khatri-Rao product. The equivalence of the last expression and the first unfolding of the order-$(r+1)$ tensor $\tens{F}$ can be found in \cite{kolda2009tensor}.

\noindent {\bf The special case $d_2=1$.} When $d_2=1$, the columns of $\matr{W}_2\in\RR^{1\times d_1}$ are scalars values $u_i\in\RR$, $i=1,\ldots,d_1$. 
In this case, $\big(\matr{W}_1^{\T}\odot \dots \odot \matr{W}_1^{\T} \big)\matr{W}_2^\T$ becomes equivalent to the vectorization of $\tens{F}$, which is a fully symmetric tensor of order $r$ with factors $\matr{W}_1^\T$ and coefficients $[\matr{W}_2]_{1,i}$, $i=1,\ldots,d_1$.
\end{proof}

\subsection[Kruskal-based conditions for the uniqueness and identifiability of 2-layer networks]{Kruskal-based conditions for the uniqueness and identifiability of 2-layer networks} \label{app:sec:kruskal_unique_2layer}

\subsubsection[Sufficient conditions for uniqueness]{Sufficient conditions for uniqueness}\label{app:sssec:kruskal_unique_basics}

The direct links between 2-layer ($L=2$)  hPNNs and partially symmetric CPDs in \Cref{pro:single-layer} allows us to obtain sufficient conditions for their uniqueness by means of Kruskal-based uniqueness results for the CPD, which we recall in the following lemma.

\begin{lemma}[Kruskal's theorem, $s$-way version \cite{sidiropoulos2000uniqueness}, Thm.~3]\label[lemma]{lem:kruskal-th}
Let $\tens{T}=\cpdfour{\matr{A}^{(1)}}{\matr{A}^{(2)}}{\cdots}{\matr{A}^{(s)}}$ be a tensor with CP rank $R$ and $\matr{A}^{(i)} \in \RR^{m_i \times R}$, such that
\begin{equation}\label{eq:ass-kr}
\sum_{i=1}^s \krank{\matr{A}^{(i)}} \ge 2R+(s-1) \,.
\end{equation}
Then the CP decomposition of $\tens{T}$ is unique up to permutation and scaling ambiguities, that is, for any alternative CPD $\tens{T}=\cpdfour{\widetilde{\matr{A}}^{(1)}}{\widetilde{\matr{A}}^{(2)}}{\cdots}{\widetilde{\matr{A}}^{(s)}}$, there exist a permutation matrix $\matr{\Pi}$ and invertible diagonal matrices $\matr{\Lambda}_1,\matr{\Lambda}_2,\ldots,\matr{\Lambda}_s$ such that
\[
\widetilde{\matr{A}}^{(i)} = \matr{A}^{(i)} \matr{\Pi} \matr{\Lambda}_i \,, 
\]
for $i=1,\ldots,s$.
\end{lemma}

Now we prove \Cref{prop:single_layer_kruskal_unique} giving sufficient conditions for uniqueness in the case $L=2$. 

{\it
\textbf{Proposition~\ref{prop:single_layer_kruskal_unique}.}
Let $\vect{p}_{w}(\vect{x})=\matr{W}_2\rho_{r_1}(\matr{W}_1 \vect{x})$ be a 2-layer hPNN with $\matr{W}_{1} \in \RR^{d_1\times d_0}$ and $\matr{W}_{2}\in \RR^{d_2\times d_1}$ and layer sizes $(d_0,d_1,d_2)$ satisfying $d_0,d_1\geq 2$, $d_2\geq 1$. Assume that $r \ge 2$, $\krank{\matr{W}_{2}} \ge 1$, $\krank{\matr{W}^{\T}_{1}} \ge 2$ and that:
\begin{equation}
\nonumber
r \ge \frac{2 d_1 - \krank{\matr{W}_{2}}}{\krank{\matr{W}^{\T}_{1}} - 1} \,,
\end{equation}
then the 2-layer hPNN $\vect{p}_{w}(\vect{x})$ is unique (or equivalently, the  CPD of $\tens{F}$ in \eqref{eq:single-layer} is unique).
}

\begin{proof}[Proof of~\Cref{prop:single_layer_kruskal_unique}]
One can apply \Cref{pro:single-layer} to show that the 2-layer hPNN $\vect{p}_{w}(\vect{x})$ is in one-to-one correspondence with the order $r+1$ partially symmetric tensor
\begin{equation} \label{eq:cpd_2layer}
\tens{F} = \cpdfour{\matr{W}_{2}}{\matr{W}^{\T}_{1}}{\cdots}{\matr{W}^{\T}_{1}}\,,
\end{equation}
thus, the uniqueness of $\vect{p}_{w}(\vect{x})$ is equivalent to that of the CP-decomposition of $\tens{F}$ in~\eqref{eq:cpd_2layer}.  
From \cite[Theorem~3]{sidiropoulos2000uniqueness}, the rank-$d$ CP decomposition of $\tens{T}$ is unique provided that
\[
\krank{\matr{W}_{2}} + r\krank{\matr{W}^{\T}_{1}} \geq 2 d_1 + r \,.
\]
By noting that $\krank{\matr{W}^{\T}_{1}}>1$ and rearranging the terms, we obtain the desired result.
\end{proof}

Note that for the case of $d_0\ge 2$ (i.e., hPNNs with at least two outputs), \Cref{prop:single_layer_kruskal_unique} gives conditions that hold for quadratic activation degrees $r\ge 2$. On the other hand, for networks with a single output (i.e., $d_2 = 1$), it requires $r\ge 3$.

\subsubsection{Sufficient conditions for identifiability}\label{app:ssec:generic_identifiability_proofs}
Equipped with the sufficient conditions for the uniqueness of 2-layer hPNNs obtained in \Cref{prop:single_layer_kruskal_unique}, we can now prove the generic identifiability result stated in \Cref{cor:gen_generic}.

{\it 
\textbf{Proposition~\ref{cor:gen_generic}.}
Let $d_0,d_1 \ge 2$, $d_2 \ge 1$ be the layer widths and  $r \ge 2$  such that
\begin{equation} 
    \nonumber
    r \ge \frac{2 d_1 - \min(d_1, d_2)}{\min(d_1, d_0) - 1} \,.
\end{equation}
Then the 2-layer hPNN with architecture $((d_0,d_1,d_2),(r))$ is globally identifiable.
}

\begin{proof}[Proof of \Cref{cor:gen_generic}]
For general matrices $\matr{W}_{1} \in \RR^{d_1\times d_0}$ and $\matr{W}_{2}\in \RR^{d_2\times d_1}$, we have
\begin{align}
\krank{\matr{W}_{1}^{\T}} =&  \min(d_0,d_1) \,,
\nonumber\\\nonumber
\krank{\matr{W}_{2}}  =&  \min(d_2,d_1) \,.
\end{align}
Moreover, $d_0,d_1 \ge 2$, $d_2 \ge 1$ implies that generically $\krank{\matr{W}_{1}^{\T}}\ge2$ and $\krank{\matr{W}_{2}}\ge 1$. This along with \eqref{eq:degree_bound} means that the assumptions in \Cref{prop:single_layer_kruskal_unique} are satisfied for all parameters except for a set of Lebesgue measure zero. Thus, the hPNN with architecture 
$((d_0,d_1,d_2),(r))$ is globally identifiable.
\end{proof}

\section{Proof of the localization theorem}\label{sec:app-1-2-proof-localization}
This appendix contains the main proofs of the localization theorem (\Cref{thm:localization_theorem}) for deep hPNNs, as well as supporting lemmas and auxiliary technical results. We also provide proofs of the corollaries that specialize this result for several choices of architectures (e.g., pyramidal, bottleneck) and to the activation thresholds, discussed in \Cref{sec:identifiability_architectures} of the main paper.

\smallskip
\fbox{\textbf{Results from the main paper}: \Cref{thm:localization_theorem}, Corollaries \ref{cor:identifiability_pyramidal_nets}, \ref{cor:identifiability_bottleneck_nets}, and \ref{cor:activation_thresh_identifiability}.}

\paragraph*{\textbf{Roadmap of the proof:}}

The proof of the localization theorem requires some setup. The main idea, as briefly sketched in \Cref{s:proof-sketch-tensors:proofidea} of the main paper, is to construct a recursion for Jacobian of the parameter map, and to certify that it has maximal rank (generically). This relies crucially on the properties of the neurovarieties associated to an hPNN as explained in Appendix~\ref{sec:3-4-links_neurovarieties}, in particular on \Cref{prop:identifiability_implies_dimension} and \Cref{lem:fiber_finiteness}, which link the the finite identifiability of the hPNN to the rank of its Jacobian. 
The proof of the main result is presented towards the end of this appendix, in \Cref{sec:app:localication_thm_proof_final}, and proceeds by induction. However, it requires several technical tools which are build in the subsections that precede it. 

\begin{itemize}
    \item \Cref{sec:app:localizationproof:preparatory_Jacobian_2layer_case} starts with some preparatory results on the rank of the Jacobian of a 2-layer hPNN, setting the base case.
    
    \item \Cref{sec:app:localizationproof:jacobian_of_composition} defines the so-called \emph{last layer map} (i.e., the map that composes a $d_0$-variate polynomial with one hPNN layer) and illustrates the structure of its Jacobian by means of a detailed example.

    \item \Cref{sec:app:localizationproof:certificate} presents a key proposition which establishes a \emph{certificate} to show that the Jacobian of the last layer map has maximal rank, and before proceeding to the proof, illustrates the result with an example.

    \item \Cref{sec:app:localizationproof:extra_notation} introduces some additional notation and setup which will be used in the proof of the key proposition.

    \item \Cref{sec:app:localizationproof:key_proposition_m_d0} presents the proof of the key proposition for the special case when the number of input variables $d_0$ is equal to the number of variables used in the certificate (equal to the smallest bottleneck in the network).

    \item \Cref{sec:app:localizationproof:key_proposition_extending_m_to_d0} gives the proof of the key proposition in the general case when the number of input neurons $d_0$ can be larger then the certificate.

    \item \Cref{sec:app:localication_thm_proof_final} contains the proof of the localization theorem.

    \item Finally, \Cref{sec:app:impolications_of_localication} presents the proofs for the results concerning the implications of the localization theorem to different hPNN architectures.
\end{itemize}

\paragraph*{\textbf{Simplifying the notation}:} In the remainder of this appendix we denote the number of input neurons by $m$, the number of hidden neurons in the second-to-last layer by $d$, and the number of output neurons as $n$. For two-layer networks, we denote the first- and second-layer weight matrices by $\matr{V}$ and $\matr{W}$, respectively.

\subsection{Preparatory lemmas - rank of Jacobian of a 2-layer PNN}
\label{sec:app:localizationproof:preparatory_Jacobian_2layer_case}
\begin{lemma}\label[lemma]{lem:JacPNN_2layer}
Let $(m,d,n)$ and $r$, so that the $2$-layer hPNN with architecture $((m,d,n),r)$ is finitely identifiable (resp. the partially symmetric rank-$d$ decomposition  of $n \times m \times \cdots \times m$ tensor is unique).
Then for general matrices $\matr{V},\matr{W}$ the Jacobian of the map $\varphi(\matr{V}, \matr{W})= \hPNN{r}{(\matr{V}, \matr{W})}$, given by 
\[
J_{\varphi}  = J_{\varphi} (\matr{V},\matr{W}) = 
\begin{bmatrix} J^{(\matr{V})}_{\varphi} &   J^{(\matr{W})}_{\varphi}\end{bmatrix} \,,
\]
has maximal possible rank:
\begin{equation}\label{eq:JacPNN_2layer}
\rank {J_{\varphi} } = (m+n-1)d,
\end{equation}
and also
\begin{equation}\label{eq:JacPNN_2layer_subblock}
\rank{J^{(\matr{V})}_{\varphi}}  = m d.
\end{equation}
\end{lemma}
\begin{proof}
The first statement follows from dimension of the neurovariety (that is, $(m+n-1)d$), and the second statement follows from the fact that the subset of pairs $(\matr{V}, \matr{W})$ with $\matr{W}$ given as
\[
\matr{W} = \begin{bmatrix} 
\begin{smallmatrix}  1 & \cdots & 1 \end{smallmatrix} \\
\overline{\matr{W}}
\end{bmatrix}, \quad \overline{\matr{W}} \in \mathbb{R}^{(n-1) \times d}
\]
parameterizes an open subset of the neurovariety (i.e., due to the scaling ambiguity, almost any pair of ${\matr{V}}$ and ${\matr{W}}$ can be reduced to such a form).
As shown in the proof of Proposition~\ref{prop:identifiability_implies_dimension} (specialized to $(\matr{W}_1,\matr{W}_2) = (\matr{V},\matr{W})$), the reduced Jacobian  is full column rank:
\[
\rank{ \begin{bmatrix} J^{(\matr{V})}_{\varphi} &   J^{(\overline{\matr{W}})}_{\varphi},\end{bmatrix}} = md +(n-1)d,
\]
where $J^{(\overline{\matr{W}})}_{\varphi}$ denotes the Jacobian with respect to $\overline{\matr{W}}$.
Note tnis implies that  all the submatrices are full column rank and, as therefore
$J^{(\matr{V})}_{\varphi}$ is full column rank.

\end{proof}

\begin{remark}
The conditions in \Cref{lem:JacPNN_2layer} are satisfied, for example, if the Kruskal-based generic uniqueness conditions are satisfied (see \Cref{cor:gen_generic}).
\end{remark}

Before giving the elements of the main proof, we provide an example of explicit Jacobian computation for the  map $\hPNN{{\d,\r}}{\cdot}$ which will be the guiding example for the proof of identifiability.

\begin{example}[Simplest architecture]\label{ex:jacobian_2222}
Consider example $(m,d,n) = (2,2,2)$, $r = 2$, and denote the elements of $\matr{V}$ and $\matr{W}$ as 
\[
\matr{V} = 
\begin{bmatrix}
\alpha_1 &\beta_1 \\
\alpha_2 & \beta_2  \\
\end{bmatrix},  \quad \matr{W}= \bmx W_{1,1} &W_{1,2} \\ W_{2,1} & W_{2,2} \emx.
\]
so the hPNN map  $\varphi(\matr{V},\matr{W})= \hPNN{{\d,\r}}{\matr{V},\matr{W}}$ is given by
\[
\varphi(\matr{V},\matr{W}) =   \vect{w}_1(\alpha_1 x_1 + \beta_1 x_2)^2 + \vect{w}_2(\alpha_2 x_1 + \beta_2 x_2)^2,
\]
where $\vect{w}_1,\vect{w}_2$ denote the columns of the matrix $\matr{W}$:
\[
\vect{w}_1 = \bmx W_{1,1} \\ W_{2,1} \emx, \quad \vect{w}_2 = \bmx W_{1,2} \\ W_{2,2} \emx.
\]
The image of $\varphi$ lives in the space of vector polynomials $(\HSpace{2}{2})^{\times 2}$, therefore, the   blocks of the Jacobian  $J^{(\matr{V})}_{\varphi}$ and $ J^{({\matr{W}})}_{\varphi}$ are of sizes $6\times 4 $.
The matrix $J^{(\matr{V})}_{\varphi}$ has as its columns derivatives with respect to $\alpha_j$ and $\beta_j$, for $j\in \{1,2\}$ which are, respectively:
\begin{equation}\label{eq:example_deriv_V}
\frac{\partial \varphi}{\partial \alpha_j}  =  2 \vect{w}_j x_1 (\alpha_j x_1 + \beta_1 x_2),\quad
\frac{\partial \varphi}{\partial \beta_j}  =  2 \vect{w}_j x_2 (\alpha_j x_1 + \beta_1 x_2).
\end{equation}
Let us choose the canonical basis of $(\HSpace{2}{2})^{\times 2}$ as $\vect{e}_i x_1^{2-\ell} x_{2}^{\ell}$, $i\in\{1,2\}$, $\ell \in \{0,1,2\}$, where $\vect{e}_i$ are unit vectors.
Then the block $J^{(\matr{V})}_{\varphi}$ is represented in the matrix form as:
\[
J^{(\matr{V})}_{\varphi} = 
(2) \cdot \kbordermatrix{ &\frac{\partial\varphi}{\partial \alpha_1} & \frac{\partial\varphi}{\partial \beta_1} &\frac{\partial\varphi}{\partial \alpha_2} & \frac{\partial \varphi}{\partial \beta_2}  \\
\vect{e}_1 x_1^2    &W_{1,1} \alpha_1   &0                           &W_{1,2} \alpha_2   &0                           \\ 
\vect{e}_1x_1 x_2 & W_{1,1} \beta_1    &W_{1,1}\alpha_1   & W_{1,2} \beta_2    &W_{1,2}\alpha_2     \\ 
\vect{e}_1 x_2^2   &0                             &W_{1,1}\beta_1     &0                             &W_{1,2}\beta_2      \\ 
\vect{e}_2 x_1^2    &W_{2,1} \alpha_1   &0                           &W_{2,2} \alpha_2   &0                           \\ 
\vect{e}_2x_1 x_2 & W_{2,1} \beta_1    &W_{2,1}\alpha_1   & W_{2,2} \beta_2    &W_{2,2}\alpha_2     \\ 
\vect{e}_2 x_2^2   &0                             &W_{2,1}\beta_1     &0                             &W_{2,2}\beta_2      \\ 
}
\]
The block $J^{(\matr{W})}_{\varphi}$ contains the derivatives with respect to $W_{i,j}$, for $i,j\in \{1,2\}$, which are:
\begin{equation}\label{eq:example_deriv_W}
\frac{\partial\varphi}{\partial W_{i,j}} =  \vect{e}_i (\alpha_j x_1 + \beta_j x_2)^2.
\end{equation}
In the same monomial basis, the matrix can be expressed as
\[
J^{(\matr{W})}_{\varphi} = 
  \kbordermatrix{ &\frac{\partial\varphi}{\partial W_{1,1}} & \frac{\partial\varphi}{\partial  W_{2,1}} &\frac{\partial\varphi}{\partial \alpha_2} & \frac{\partial \varphi}{\partial \beta_2}  \\
\vect{e}_1 x_1^2    & \alpha^2_1             &0    & \alpha^2_2             &0 \\ 
\vect{e}_1x_1 x_2 &  2\alpha_1\beta_1   &0   &  2\alpha_1\beta_2   &0  \\ 
\vect{e}_1 x_2^2   & \beta^2_1                &0   & \beta^2_2                &0  \\ 
\vect{e}_2 x_1^2    &0  & \alpha^2_1             &0    & \alpha^2_2             \\ 
\vect{e}_2x_1 x_2 & 0  &  2\alpha_1\beta_1   &0   &  2\alpha_1\beta_2    \\ 
\vect{e}_2 x_2^2   &0   & \beta^2_1                &0   & \beta^2_2              \\ 
}
\]
\end{example}
\begin{remark}\label{rem:jacobian_2222_rank}
It is easy to show why \eqref{eq:JacPNN_2layer} and \eqref{eq:JacPNN_2layer_subblock} are satistfied for the architecture in \Cref{ex:jacobian_2222}.
For this example, we choose particular $\matr{V}$ and $\matr{W}$ to be identity matrices, which gives us
\[
\bmx J^{(\matr{V})}_{\varphi}  & J^{(\matr{W})}_{\varphi} \emx =
\left[\begin{array}{cccc|cccc}
2 &0 & 0&0 & 1 & 0 & 0 & 0\\
0 &2 & 0&0 & 0 & 0 & 0 & 0\\
0 &0 & 0&0 & 0 & 0 & 1 & 0\\
0 &0 & 0&0 & 0 & 1 & 0 & 0\\
0 &0 & 2&0 & 0 & 0 & 0 & 0\\
0 &0 & 0&2 & 0 & 0 & 0 & 1\\
\end{array}\right].
\]
It is easy to see that the  left block (matrix $J^{(\matr{V})}_{\varphi}$) has rank $4$, and the total matrix has  rank $6 = (2+2-1)2$ .
Therefore, by Corollaries~\ref{cor:max_rank_single_point}~and~\ref{cor:max_rank_single_point_hPNN}, \eqref{eq:JacPNN_2layer} and \eqref{eq:JacPNN_2layer_subblock} are satisfied generically.
\end{remark}

We will also need an explicit form of the Jacobian in the general case, which is a generalization of the expression in  \Cref{ex:jacobian_2222}.
\begin{remark}\label{rem:JacPNN_2layer_explicit}
Let $(m,d,n)$, $r$, $\matr{V}$ and $\matr{W}$ be as in \Cref{lem:JacPNN_2layer}.
With some abuse of notation we denote $\vect{v}_j \in \RR^{m}$ and  $\vect{w}_j \in \RR^{n}$ 
\[
\matr{V}^{\T} = \begin{bmatrix} \vect{v}_1 & \cdots & \vect{v}_{d}  \end{bmatrix},\quad \matr{W} = \begin{bmatrix} \vect{w}_1 & \cdots & \vect{w}_{d}  \end{bmatrix},
\]
and let $\vect{z} = \begin{bmatrix}z_1 & \cdots & z_m \end{bmatrix}^{\T}$ be the  input variables.
Then  the hPNN $\varphi[\cdot] = \hPNN{{\d,\r}}{\cdot}$ has the form
\begin{equation}\label{eq:PNN_shallow_additive}
\varphi[\matr{V}, \matr{W}] (\vect{z})  = \sum\limits_{j=1}^{d} \vect{w}_j  (\vect{v}_j^{\T} \vect{z})^{r}.
\end{equation}
Therefore, we have that derivatives with respect to  the elements of the matrix $\matr{W}$ can be expressed as
\begin{equation}\label{eq:Jac_explicit_W2}
\frac{\partial\varphi}{\partial W_{i,j}} = \frac{\partial\varphi}{\partial (\vect{w}_j)_{i}}   =  \vect{e}_i (\vect{v}_j^{\T} \vect{z})^{r},
\end{equation}
where $\vect{e}_i$ is the $i$-th unit vector in $\RR^{n}$,
and, with respect to elements of $\matr{V}$, we have
\begin{equation}\label{eq:Jac_explicit_W1}
\frac{\partial\varphi}{\partial V_{j,\ell}}  = \frac{\partial\varphi}{\partial (\vect{v}_j)_{\ell}}= r  z_{\ell}\cdot \vect{w}_j (\vect{v}_j^{\T} \vect{z})^{r-1}.
\end{equation}
Note that \Cref{lem:JacPNN_2layer} concerns the dimensions of linear spaces spanned by the sets of polynomials in  \eqref{eq:Jac_explicit_W2}--\eqref{eq:Jac_explicit_W1}. 
Also,  \eqref{eq:Jac_explicit_W1} and  \eqref{eq:Jac_explicit_W2} are generalizations of \eqref{eq:example_deriv_V} and \eqref{eq:example_deriv_W}, respectively.
\end{remark}

\subsection{Jacobian of composition of polynomial maps}
\label{sec:app:localizationproof:jacobian_of_composition}
The goal of this subsection, is to exhibit the structure of the composition of polynomial NN-like maps and their Jacobians.
Consider an outer layer of an hPNN, which is denoted as
\[
\matr{W}\rho_{r}(q_1,...,q_d).
\]
In order to see what happens when we substitute variables $q_1,\ldots,q_d$ by $d_0$-variate polynomials $\vect{q}(x_1,\ldots,x_{d_0})\in (\HSpace{d_0}{R})^{\times d}$,
 we introduce the following definition (which corresponds to \eqref{eq:last-layer-map-main}):
\begin{definition}[Last layer map]
Let  $\matr{W}\in \RR^{n \times d}$ be $n\times d$ matrix $r$  be a number.
We define the map $\psi$ that transforms a vector of $R$-degree $d_0$-variate polynomial as follows:
\begin{align*}
\psi:{} &  (\HSpace{d_0}{R})^{\times d} \times \RR^{n \times d}  \to (\HSpace{d_0}{Rr})^{\times n}  \\
        & ( \vect{q}(x_1,\ldots, x_{d_0}), \matr{W}) \mapsto \psi[\vect{q},\matr{W}] :=  \matr{W} \rho_r(\vect{q} (x_1,\ldots,x_{d_0}) ),
\end{align*}
and denote the Jacobian with respect to the parameters as
\[
J_{\psi} (\vect{q},\matr{W}) = 
\begin{bmatrix} J^{(\vect{q})}_{\psi} &   J^{(\matr{W})}_{\psi}\end{bmatrix},
\]
where $J^{(\vect{q})}_{\psi}$ has $d \binom{R + d_0 -1}{R}$ columns and  $J^{(\matr{W})}_{\psi}$ has $nd$ columns.
\end{definition}

\begin{example}\label[example]{ex:jacobian_2222_last_layer}
As in Example~\ref{ex:jacobian_2222}, we take the case $n=2$, $d =2$, $r=2$, and denote  $\matr{W} = \bmx \vect{w}_1 & \vect{w}_2  \emx$.
Then the last layer map becomes
\[
\psi(q_1,q_2) = \vect{w}_1 q^2_1 +  \vect{w}_2 q^2_2 \,.
\]
Consider a special case $d_0 =2$, $R=2$  so that $\psi$ maps $(q_1,q_2) \in (\HSpace{2}{2})^{\times 2}$ to a vector polynomial in $(\HSpace{2}{4})^{\times 2}$,
 and let the input polynomials be parameterized as 
\[
q_j(x_1,x_2) = q^{(2,0)}_j x_1^{2} + 2 q^{(1,1)}_j x_1x_2 +  q^{(0,2)}_j x_2^{2},\quad  j\in\{1,2\},
\]
where $q^{(i_1,i_2)}_j$, $(i_1,i_2) \in \{(2,0),(1,1),(0,2)\}$ is the coefficient of $q_j$ next to monomial $x_1^{i_1} x_2^{i_2}$.
Then the Jacobian $J_{\psi} (\vect{q},\matr{W})$ is a $10\times 10$ matrix\footnote{since $\dim(\HSpace{2}{4}) = 5$.}.

The block  $J^{(\matr{q})}_{\psi}$ is a $10\times 6$ matrix, whose columns are
the $6$ polynomials (similarly to \eqref{eq:example_deriv_V}):
\begin{equation}\label{eq:example_deriv_psi_q}
\frac{\partial \psi}{\partial q^{(i_1,i_2)}_j} =   2 \vect{w}_j x^{i_1}_1 x^{i_2}_2 \big(q_j(x_1,x_2)\big), \quad j\in \{1,2\}, (i_1,i_2) \in \{(2,0),(1,1),(0,2)\}.
\end{equation}
In the canonical basis  is given as  $J^{(\matr{q})}_{\psi}=$
\[
(2) \cdot 
 \kbordermatrix{ &\frac{\partial\psi}{\partial q^{(2,0)}_{1}} & \frac{\partial\psi}{\partial q^{(1,1)}_{1}} &\frac{\partial\psi}{\partial q^{(0,2)}_1}  &\frac{\partial\psi}{\partial q^{(2,0)}_{2}} & \frac{\partial\psi}{\partial q^{(1,1)}_{2}} &\frac{\partial\psi}{\partial q^{(0,2)}_{2}}  \\
\vect{e}_1 x_1^4         & W_{1,1} q^{(2,0)}_{1} & 0                                 & 0                                 & W_{1,2} q^{(2,0)}_{2} & 0                                 & 0                                \\
\vect{e}_1 x_1^3x_2   & W_{1,1} q^{(1,1)}_{1} & W_{1,1} q^{(2,0)}_{1} & 0                                  & W_{1,2} q^{(1,1)}_{2} & W_{1,2} q^{(2,0)}_{2} & 0                                \\
\vect{e}_1 x_1^2x_2^2& W_{1,1} q^{(0,2)}_{1} & W_{1,1} q^{(1,1)}_{1} & W_{1,1} q^{(2,0)}_{1}  & W_{1,2} q^{(0,2)}_{2} & W_{1,2} q^{(1,1)}_{2} & W_{1,2} q^{(2,0)}_{2}  \\
\vect{e}_1 x_1x_2^3   & 0                                 & W_{1,1} q^{(0,2)}_{1} & W_{1,1} q^{(1,1)}_{1}  & 0                                 & W_{1,2} q^{(0,2)}_{2} & W_{1,2} q^{(1,1)}_{2} \\
\vect{e}_1 x_2^4         & 0                                 & 0                                 & W_{1,1} q^{(0,2)}_{1}  & 0                                 & 0                                 & W_{1,2} q^{(0,2)}_{2} \\
\vect{e}_2 x_1^4         & W_{2,1} q^{(2,0)}_{1} & 0                                 & 0                                 & W_{2,2} q^{(2,0)}_{2} & 0                                 & 0                                \\
\vect{e}_2 x_1^3x_2   & W_{2,1} q^{(1,1)}_{1} & W_{2,1} q^{(2,0)}_{1} & 0                                  & W_{2,2} q^{(1,1)}_{2} & W_{2,2} q^{(2,0)}_{2} & 0                                \\
\vect{e}_2 x_1^2x_2^2& W_{2,1} q^{(0,2)}_{1} & W_{2,1} q^{(1,1)}_{1} & W_{2,1} q^{(2,0)}_{1}  & W_{2,2} q^{(0,2)}_{2} & W_{2,2} q^{(1,1)}_{2} & W_{2,2} q^{(2,0)}_{2}  \\
\vect{e}_2 x_1x_2^3   & 0                                 & W_{2,1} q^{(0,2)}_{1} & W_{2,1} q^{(1,1)}_{1}  & 0                                 & W_{2,2} q^{(0,2)}_{2} & W_{2,2} q^{(1,1)}_{2} \\
\vect{e}_2 x_2^4         & 0                                 & 0                                 & W_{2,1} q^{(0,2)}_{1}  & 0                                 & 0                                 & W_{2,2} q^{(0,2)}_{2} \\
}
\]
The second block, similarly to \eqref{eq:example_deriv_W}, is a $10\times 4$ matrix whose columns are 
\begin{equation}\label{eq:example_deriv_psi_W}
\frac{\partial \psi}{\partial W_{i,j}} = \vect{e}_i \big(q_j(x_1,x_2)\big)^2, \quad  i,j\in\{1,2\},
\end{equation} 
and has similar structure to $J^{(\matr{W})}_{\varphi}$ in \Cref{ex:jacobian_2222}.
\end{example}

\subsection{A certificate of maximal rank for the Jacobian of the last layer}
\label{sec:app:localizationproof:certificate}
The following proposition gives a condition for when the Jacobian of the last layer map has maximal rank.
\begin{proposition}\label[proposition]{prop:JacPNN_composition}
Let $m \le d_0$,$d,n$ and $r,R \ge 2$ be fixed, and the matrices  $\matr{V} \in \RR^{d \times m}$ and $\matr{W}\in \RR^{n \times d}$ be such that the equalities
\eqref{eq:JacPNN_2layer}--\eqref{eq:JacPNN_2layer_subblock} are satisfied.

Consider a particular polynomial vector 
 $\widehat{\vect{q}}(x_1,\ldots, x_m) \in (\HSpace{m}{R})^{\times d} \subseteq(\HSpace{d_0}{R})^{\times d}$ defined as
\begin{equation}\label{eq:q0_sums_pure_powers}
\widehat{\vect{q}}(\vect{x}) := \matr{V} \begin{bmatrix}x^R_1 \\ x^R_2 \\\vdots \\ x^R_{m}\end{bmatrix}.
\end{equation}
Then we have that the evaluation of the Jacobian of $\psi$ at the particular point $(\widehat{\vect{q}},\matr{W})$ is of maximal possible rank, and, in particular,
\begin{equation}\label{eq:rank_Jpsi_total}
\rank{J_{\psi} (\widehat{\vect{q}},\matr{W})} = d(n-1) + d \binom{R + d_0 -1}{R}
\end{equation}
and 
\begin{equation}\label{eq:rank_Jpsi_submatrix}
\rank{J^{(\vect{q})}_{\psi} (\widehat{\vect{q}},\matr{W})}=  d \binom{R + d_0 -1}{R}
\end{equation}
(i.e., the first block is full column rank).
\end{proposition}

Before proving \Cref{prop:JacPNN_composition}, we give an illustrative example of the last layer map.
\begin{example}[{\Cref{ex:jacobian_2222}, continued}]\label{ex:jacobian_2222_certificate}
We continue Examples~\ref{ex:jacobian_2222}~and~\ref{ex:jacobian_2222_last_layer}. In this case, the vector polynomial $\widehat{\vect{q}}$ from \Cref{prop:JacPNN_composition}  reads
\[
\widehat{\vect{q}} (x_1,x_2) =
\bmx
\widehat{q}_1(x_1,x_2) \\
\widehat{q}_2(x_1,x_2) \\
\emx
=
 \begin{bmatrix}
(\alpha_1 x_1^{2}+\beta_1 x_2^{2} )\\
(\alpha_2 x_1^{2}+\beta_2 x_2^{2}) \\
\end{bmatrix},
\]
i.e.,  using the notation of  \Cref{ex:jacobian_2222}, the coefficients of the polynomials are 
\begin{align*}
(\widehat{q}^{(2,0)}_{1},\widehat{q}^{(1,1)}_{1},\widehat{q}^{(0,2)}_{1}) & = (\alpha_1, 0, \beta_1),\\
(\widehat{q}^{(2,0)}_{2},\widehat{q}^{(1,1)}_{2},\widehat{q}^{(0,2)}_{2}) & = (\alpha_2, 0, \beta_2).\\
\end{align*}
Specializing Example~\ref{ex:jacobian_2222_last_layer} (and removing factor $2$ for simlicity), we get 
\[
\frac{1}{2} J^{(\matr{q})}_{\psi} (\widehat{\vect{q}},\matr{W}) = 
 \kbordermatrix{ &\frac{\partial\psi}{\partial q^{(2,0)}_{1}} & \frac{\partial\psi}{\partial q^{(1,1)}_{1}} &\frac{\partial\psi}{\partial q^{(0,2)}_1}  &\frac{\partial\psi}{\partial q^{(2,0)}_{2}} & \frac{\partial\psi}{\partial q^{(1,1)}_{2}} &\frac{\partial\psi}{\partial q^{(0,2)}_{2}}  \\
\vect{e}_1 x_1^4         & W_{1,1} \alpha_1        & 0                                 & 0                                 & W_{1,2} \alpha_{2} & 0                                 & 0                                \\
\vect{e}_1 x_1^3x_2   & 0                                 & W_{1,1} \alpha_{1} & 0                                  & 0 & W_{1,2} \alpha_{2} & 0                                \\
\vect{e}_1 x_1^2x_2^2& W_{1,1} \beta1_{1}    & 0                             & W_{1,1} \alpha_{1}  & W_{1,2} \beta_{2} & 0& W_{1,2} \alpha_{2}  \\
\vect{e}_1 x_1x_2^3   & 0                                 & W_{1,1}\beta_{1} & 0  & 0                                 & W_{1,2} \beta_{2} &0 \\
\vect{e}_1 x_2^4         & 0                                 & 0                                 & W_{1,1} \beta_{1}  & 0                                 & 0                                 & W_{1,2} \beta_{2} \\
\vect{e}_2 x_1^4         & W_{2,1}\alpha_{1} & 0                                 & 0                                 & W_{2,2} \alpha_{2} & 0                                 & 0                                \\
\vect{e}_2 x_1^3x_2   & 0 & W_{2,1} \alpha_{1} & 0                                  & 0 & W_{2,2} \alpha_{2} & 0                                \\
\vect{e}_2 x_1^2x_2^2& W_{2,1} \beta_{1} & 0& W_{2,1} \alpha_{1}  & W_{2,2} \beta_{2} & 0& W_{2,2} \alpha_{2}  \\
\vect{e}_2 x_1x_2^3   & 0                                 & W_{2,1} \beta_{1} & 0 & 0                                 & W_{2,2} \beta_{2} & 0 \\
\vect{e}_2 x_2^4         & 0                                 & 0                                 & W_{2,1} \beta_{1}  & 0                                 & 0                                 & W_{2,2} \beta_{2} \\
}.
\]
The matrix $J^{(\matr{W})}_{\psi}$ then, according to \eqref{eq:example_deriv_psi_W}, becomes
\[
J^{(\matr{W})}_{\psi}(\widehat{\vect{q}},\matr{W}) =  
 \kbordermatrix{ &\frac{\partial\psi}{\partial W_{1,1}} & \frac{\partial\psi}{\partial W_{2,1}} &\frac{\partial\psi}{\partial W_{1,2}}  &\frac{\partial\psi}{\partial W_{1,2}} \\
\vect{e}_1 x_1^4         & \alpha^2_1             & 0        & \alpha^2_2             & 0     \\                     
\vect{e}_1 x_1^3x_2   & 0                            &  0  &0 & 0 \\
\vect{e}_1 x_1^2x_2^2&2 \alpha_1\beta_1  & 0     &2 \alpha_2\beta_2& 0\\
\vect{e}_1 x_1x_2^3   & 0                            &  0  &0 & 0 \\
\vect{e}_1 x_2^4         & \beta^2_1              & 0     & \beta^2_2   & 0 \\ 
\vect{e}_1 x_1^4         & 0& \alpha^2_1             & 0        & \alpha^2_2                  \\                     
\vect{e}_1 x_1^3x_2   & 0& 0                            &  0  &0  \\
\vect{e}_1 x_1^2x_2^2& 0&2 \alpha_1\beta_1  & 0     &2 \alpha_2\beta_2\\
\vect{e}_1 x_1x_2^3   & 0& 0                            &  0  &0  \\
\vect{e}_1 x_2^4         & 0& \beta^2_1              & 0     & \beta^2_2    \\ 
}.
\]
The crux of the proof of \Cref{prop:JacPNN_composition} is the following observation. If we stack together matrices $\matr{J} = \bmx \frac{1}{2}J^{(\matr{q})}_{\psi} & J^{(\matr{W})}_{\psi}\emx$ and permute the columns as follows, we get the block-diagonal matrix
\[
\matr{J} = 
 \kbordermatrix{ & 
 \begin{smallmatrix}\frac{\partial\psi}{\partial q^{(2,0)}_{1}}  & \frac{\partial\psi}{\partial q^{(0,2)}_1} &\frac{\partial\psi}{\partial q^{(2,0)}_{2}} &\frac{\partial\psi}{\partial q^{(0,2)}_{2}}  
 \frac{\partial\psi}{\partial W_{1,1}} & \frac{\partial\psi}{\partial W_{2,1}} &\frac{\partial\psi}{\partial W_{1,2}}  &\frac{\partial\psi}{\partial W_{1,2}}
 \end{smallmatrix} &  
 \begin{smallmatrix} \frac{\partial\psi}{\partial q^{(1,1)}_1} &\frac{\partial\psi}{\partial q^{(1,1)}_{2}}  
 \end{smallmatrix} \\
 \begin{smallmatrix}
\vect{e}_1 x_1^4      \\ 
\vect{e}_1x_1^2 x_2^2  \\ 
\vect{e}_1 x_2^4     \\ 
\vect{e}_2 x_1^4      \\ 
\vect{e}_2x_1^2 x_2^2      \\ 
\vect{e}_2 x_2^4
\end{smallmatrix}
 & 
 \begin{smallmatrix}
 W_{1,1} \alpha_1   &0                           &W_{1,2} \alpha_2   &0                            &\alpha^2_1             &0    & \alpha^2_2             &0 \\ \\ 
 W_{1,1} \beta_1    &W_{1,1}\alpha_1   & W_{1,2} \beta_2    &W_{1,2}\alpha_2   &  2\alpha_1\beta_1   &0   &  2\alpha_1\beta_2   &0  \\ 
0                             &W_{1,1}\beta_1     &0                             &W_{1,2}\beta_2     & \beta^2_1                &0   & \beta^2_2                &0 \\ 
W_{2,1} \alpha_1   &0                           &W_{2,2} \alpha_2   &0                           &0  & \alpha^2_1             &0    & \alpha^2_2         \\ 
W_{2,1} \beta_1    &W_{2,1}\alpha_1   & W_{2,2} \beta_2    &W_{2,2}\alpha_2    & 0  &  2\alpha_1\beta_1   &0   &  2\alpha_1\beta_2 \\ 
0                             &W_{2,1}\beta_1     &0                             &W_{2,2}\beta_2     &0   & \beta^2_1                &0   & \beta^2_2          \\ 
\end{smallmatrix}
  & 0  \\\hline
 \begin{smallmatrix}
\vect{e}_1x_1^3 x_2  \\ 
\vect{e}_1x_1x_2^3  \\ 
\vect{e}_2x_1^3 x_2  \\ 
\vect{e}_2x_1x_2^3  \\ 
\end{smallmatrix}  & 0 &  
\begin{smallmatrix}
 W_{1,1} \alpha_1   &W_{1,2} \alpha_2   \\ 
 W_{1,1} \beta_1    & W_{1,2} \beta_2    \\ 
W_{2,1} \alpha_1   &W_{2,2} \alpha_2     \\ 
W_{2,1} \beta_1    & W_{2,2} \beta_2 
\end{smallmatrix}
}.
\]
We see that the top-left block of the matrix $\matr{J}$ is nothing but the matrix 
\[
 \bmx \frac{1}{2}J^{(\matr{V})}_{\varphi} & J^{(\matr{W})}_{\varphi}\emx,
 \]
 where $\varphi$ is as in Example~\ref{ex:jacobian_2222}, thus it has rank $6$.
 The bottom-right block can be viewed as submatrix $ \frac{1}{2}J^{(\matr{V})}_{\varphi}$ (taking first and third columns, for instance), and therefore has full column rank $2$.
 
 Thus matrix $J_{\psi}$ has rank $10$ and $J^{(\vect{q})}_{\psi}$ has rank $6 = 4+2$.
\end{example}

\subsection{Extra notation for the proof of the proposition}
\label{sec:app:localizationproof:extra_notation}
In order to prove \Cref{prop:JacPNN_composition} we introduce extra notation for the columns of $J_{\psi}$.
We first let $\matr{W} = \begin{bmatrix} \vect{w}_1 & \cdots & \vect{w}_{d}  \end{bmatrix}$ as in \Cref{rem:JacPNN_2layer_explicit}, so we can express
\[
\psi[\vect{q},\matr{W}] = \sum\limits_{j=1}^{d}  \vect{w}_j (q_j)^{r}.
\]
Already this, similarly to \eqref{eq:Jac_explicit_W2} gives us
\[
\frac{\partial}{\partial W_{i,j}} \psi = \frac{\partial}{\partial (\vect{w}_j)_{i}} \psi  =  \vect{e}_i (q_j)^{r},
\]
and we denote the linear  space spanned by these polynomials  (i.e., the range of $J^{(\vect{W})}_{\psi}$) as 
\[
\mathcal{L}^{(\matr{W})} = \sspan\Big\{\frac{\partial}{\partial W_{i,j}} \psi \Big\}_{i,j=1}^{n,d} = \range{J^{(\vect{W})}_{\psi}}.
\]
Now we look into details of the structure of the matrix $J^{(\vect{q})}_{\psi}$.
Let $\vect{i} = (i_1,\ldots,i_{d_0}) \in \mathcal{I}$ be a multi-index that runs over
\[
\mathcal{I} = \{ \vect{i} := (i_1,\ldots,i_{d_0}) : i_1,\ldots,i_{d_0} \ge 0 \text{ and } i_1+\cdots+i_{d_0} = R\}
\]
so that  the coefficients of a polynomial $q \in \HSpace{d_0}{r}$ can be numbered by the elements in $\mathcal{I}$ as
\[
q(x_1,\ldots,x_{d_0}) = \sum\limits_{\vect{i} \in \mathcal{I}}  q^{(\vect{i})} x_1^{i_1} \ldots x_{d_0}^{i_{d_0}}.
\]
Then, the columns of $J^{(\vect{q})}_{\psi}$ for $\vect{q}(\vect{x}) = \begin{bmatrix} q_1(\vect{x}) & \cdots & q_{d}(\vect{x}) \end{bmatrix}^{\T}$ are given by the polynomials 
\begin{equation}\label{eq:form_f_ij}
\vect{f}_{j,\vect{i}} (\vect{x}) := \frac{\partial\psi}{\partial{q}^{(\vect{i})}_j } (\vect{q},\matr{W}) =
(r x^{i_1}_1 \cdots x^{i_{d_0}}_{d_0}) \vect{w}_j  (q_{j})^{r-1}, \quad j=1,\ldots,d, \quad \vect{i} \in \mathcal{I},
\end{equation}
which are precisely generalizations of \eqref{eq:example_deriv_psi_q}.
We denote the spaces spanned by such polynomials as
\[
\mathcal{L}^{(\vect{q}, \vect{i})} := \sspan\{\vect{f}_{j,\vect{i}} (\vect{x})  \}_{j=1}^{d},
\] 
and their span (the range of $J^{(\vect{q})}_{\psi}$) as
\[
\mathcal{L}^{(\vect{q})} = \sspan\{\mathcal{L}^{(\vect{q}, \vect{i})}\}_{\vect{i} \in \mathcal{I}} = \range{J^{(\vect{q})}_{\psi}}.
\]
\begin{example}
In notation of \Cref{ex:jacobian_2222_last_layer}, $\mathcal{I} = \{(2,0),(1,1),(0,2)\}$.
In this case, we have
\begin{align*}
\mathcal{L}^{(\vect{q}, (2,0))} &= \sspan\left\{ \frac{\partial \psi}{\partial q^{(2,0)}_1}, \frac{\partial\psi}{\partial q^{(2,0)}_2} \right\},  \\
\mathcal{L}^{(\vect{q}, (1,1))} &= \sspan\left\{ \frac{\partial \psi}{\partial q^{(1,1)}_1}, \frac{\partial\psi}{\partial q^{(1,1)}_2} \right\},\\
\mathcal{L}^{(\vect{q}, (0,2))} &= \sspan\left\{ \frac{\partial \psi}{\partial q^{(0,2)}_1}, \frac{\partial\psi}{\partial q^{(0,2)}_2} \right\},
\end{align*}
which correspond to columns $\{1,4\}$, $\{2,5\}$, $\{3,6\}$, respectively, of the matrix $J^{(\vect{q})}_{\psi}$.
\end{example}

\begin{remark}
Proving \Cref{prop:JacPNN_composition} (i.e., proving that \eqref{eq:rank_Jpsi_total}--\eqref{eq:rank_Jpsi_submatrix} hold) is equivalent to showing that 
\begin{align}
&\dim \sspan\{ \mathcal{L}^{(\vect{q})}  , \mathcal{L}^{(\matr{W})} \} = d(n-1) + d \binom{R + d_0 -1}{R},\label{eq:rank_Jpsi_total_span} \\
&\dim \mathcal{L}^{(\vect{q})}   =  d \binom{R + d_0 -1}{R}, \label{eq:rank_Jpsi_submatrix_span}
\end{align}
respectively.

The strategy of proving that the dimensions of these subspaces are maximal is to show that the individual subspaces $\mathcal{L}^{(\vect{q}, \vect{i})}$ are orthogonal under some conditions (which is similar to bringing $\matr{J}$ into the block-diagonal form in 
\Cref{ex:jacobian_2222_certificate}).
\end{remark}

\subsection{Proof of the  proposition: case $m = d_0$}
\label{sec:app:localizationproof:key_proposition_m_d0}
We first prove the proposition for the case when the number of input variables $d_0$ is equal to the number of variables $m$ used in the certificate.
\begin{proof}[Proof of \Cref{prop:JacPNN_composition} (case $m=d_0)$]
Recall that  in the notation of the previous subsection we need to calculate
\[
\dim \sspan\left(\mathcal{L}^{(\matr{W})}, \sspan\{\mathcal{L}^{(\vect{q}, \vect{i})} \}_{\vect{i} \in \mathcal{I}} \right).
\]
Now let us consider these subspaces for a particular choice of $\vect{q} = \widehat{\vect{q}}$ of the form \eqref{eq:q0_sums_pure_powers}.
We have that $\vect{f}_{j,\vect{i}} $ from \eqref{eq:form_f_ij} have the form
\[
f_{j,(i_1,\ldots,i_{m})}(x_1,\ldots,x_{m}) 
=  \underbrace{(\quad \cdots\quad)}_{\text{polynomial in $x_1^{R},\ldots, x^R_{m}$}}  x_1^{i_1(\bmod{R})} \ldots x_{m}^{i_{m} (\bmod R)} .
\]
Therefore we get that $\mathcal{L}^{(\vect{q}, \vect{i})} \bot \mathcal{L}^{(\vect{q}, \vect{\ell})}$ unless one of the following conditions holds:
\[
\vect{i} = \vect{\ell} \quad \text{ or } \quad \{\vect{i},\vect{\ell}\} \subset\mathcal{I}_0
\]
with $\mathcal{I}_0:= \{ (R,0,\ldots,0), (0,R,0,\ldots,0),\ldots,(0,0,\ldots,R) \}$.
For the same reasons we get 
\[
\mathcal{L}^{(\matr{W})} \bot \mathcal{L}^{(\vect{q}, \vect{i})} \text{  for all } i \in \mathcal{I} \setminus \mathcal{I}_0 \,.
\]
Therefore, we get
\[
\rank{J_\psi} = 
\dim\sspan\left(\mathcal{L}^{(\matr{W})}, \sspan\{\mathcal{L}^{(\vect{q}, \vect{i})} \}_{\vect{i} \in \mathcal{I}_0} \right) +
\sum\limits_{\vect{i} \in \mathcal{I}\setminus \mathcal{I}_0} \dim(\mathcal{L}^{(\vect{q}, \vect{i})}).
\]
Let us look at those dimensions separately.
Denote $\vect{z} = \begin{bmatrix} z_1 & \cdots & z_{m} \end{bmatrix}^{\T}$, with
\[
z_1 = x_1^R,\quad \ldots, \quad z_{m} = x_{m}^R 
\]
so that for $\widehat{\vect{q}}$ of the form \eqref{eq:q0_sums_pure_powers} it holds
\[
\widehat{q}_{j} = \vect{v}^{\T}_j \vect{z} \,.
\]
Then, for $\vect{i} \in \mathcal{I}\setminus \mathcal{I}_0$ it is easy to see that
\[
\dim(\mathcal{L}^{(\vect{q}, \vect{i})}) = 
\dim\sspan\big(\{ \vect{w}_j  (\widehat{q}_{j})^{r-1}\}_{j=1}^{d}\big) = d,
\]
where the last equality follows from \Cref{lem:JacPNN_2layer}  and \eqref{eq:Jac_explicit_W1}.

By doing the same substitution, we obtain that
\[
\sspan\left(\mathcal{L}^{(\matr{W})}, \sspan\{\mathcal{L}^{(\vect{q}, \vect{i})} \}_{\vect{i} \in \mathcal{I}_0}\right)  =
\sspan\Big(\big\{ \vect{e}_i   (\vect{v}^{\T}_j \vect{z})^{r}\big\}_{i,j=1}^{n,d}, \big\{ \vect{w}_j  z_\ell (\vect{v}^{\T}_j \vect{z})^{r-1}\big\}_{j,\ell=1}^{d,m} \Big),
\]
which is exactly the set of vectors in \eqref{eq:Jac_explicit_W2}--\eqref{eq:Jac_explicit_W1}.
Therefore, by \Cref{lem:JacPNN_2layer}, we have
\begin{align}
& \dim \sspan\big(\mathcal{L}^{(\matr{W})}, \{\mathcal{L}^{(\vect{q}, \vect{\ell})} \}_{\vect{\ell} \in \mathcal{I}_0}\big)   =
(n-1)d + m d, \quad\text{and} \\
&  \dim \sspan \{\mathcal{L}^{(\vect{q}, \vect{\ell})} \}_{\vect{\ell} \in \mathcal{I}_0} = md \,.
\end{align}
Taking into account that 
\[
\#(\mathcal{I}_0) = m \quad \text{ and } \quad \#(\mathcal{I}) = \binom{R+m-1}{R},
\]
this proves \eqref{eq:rank_Jpsi_total_span} for $d_0 = m$. Equality \eqref{eq:rank_Jpsi_submatrix_span} (for $d_0 = m$) can be proved similarly using the fact that
\begin{align*}
\rank{J^{(\vect{q})}_{\psi} (\widehat{\vect{q}},\matr{W})} & = \dim{\sspan\{\mathcal{L}^{(\vect{q}, \vect{\ell})} \}_{\vect{\ell} \in \mathcal{I}_0}} + \sum\limits_{\vect{i} \in \mathcal{I}\setminus \mathcal{I}_0} \dim(\mathcal{L}^{(\vect{q}, \vect{i})})\\
& = md + d ( \#(\mathcal{I}) -  \#(\mathcal{I}_0)) = d (\#(\mathcal{I})).
\end{align*}
\end{proof}

\subsection{Proof of the  proposition: extending to the case of more variables}
\label{sec:app:localizationproof:key_proposition_extending_m_to_d0}
\begin{proof}[Proof of \Cref{prop:JacPNN_composition} (case $m<d_0)$]
We denote by $\mathcal{I}_{m}$ (with some abuse of notation) the multi-indices that correspond to the monomials that depend only on $x_1,\ldots, x_m$:
\[
\mathcal{I}_{m} = \{ \vect{i} \in \mathcal{I} : \\  i_{m + 1}, \ldots, i_{d_0}   = 0 \}
\]
 and we define
\[
\mathcal{L}^{(\vect{q})}_m  = \sspan\{\mathcal{L}^{(\vect{q}, \vect{i})}\}_{{\vect{i} \in \mathcal{I}_m}}, \quad 
\mathcal{L}_{ext} = \sspan\{\mathcal{L}^{(\vect{q}, \vect{i})}\}_{{\vect{i} \in \mathcal{I} \setminus \mathcal{I}_m}}.
\]
From the first part of the proof (case $m=d_0$), we have already proved that
\begin{equation}\label{eq:rank_Jpsi_total_m}
\dim \sspan\big( \mathcal{L}^{(\vect{q})}_m  , \mathcal{L}^{(\matr{W})} \big) = d(n-1) + d \binom{R + m -1}{R}.
\end{equation}
and 
\begin{equation}\label{eq:rank_Jpsi_submatrix_m}
\dim  \mathcal{L}^{(\vect{q})}_m=  d \binom{R + m -1}{R}.
\end{equation}
What is left to show is that adding $\mathcal{L}_{ext} $ to these subspaces does not drop the rank.

Since the particular choice of  $\vect{q} = \widehat{\vect{q}}(x_1,\ldots,x_m)$ depends only on the $m$ variables, thanks to \eqref{eq:form_f_ij} we have
\[
f_{j,(i_1,\ldots,i_{d_0})}(x_1,\ldots,x_{d_0}) 
= \underbrace{(\quad \cdots\quad)}_{\text{polynomial in $x_1,\ldots, x_{m}$}}   x^{i_{m + 1}}_{m+1} \ldots x^{i_{d_0}}_{d_0}.
\]
This immediately implies that $\mathcal{L}^{(\vect{q}, \vect{i})} \bot \mathcal{L}^{(\vect{q}, \vect{\ell})}$
if $(i_{m + 1},\ldots,i_{d_0}) \neq (\ell_{m + 1},\ldots,\ell_{d_0})$, as well as $\mathcal{L}^{(\vect{q}, \vect{i})} \bot  \mathcal{L}^{(\matr{W})}$ if $(i_{m + 1},\ldots,i_{d_0}) \neq \boldsymbol{0}$.
Therefore, we get
\[
\mathcal{L}^{(\vect{q})} = \mathcal{L}^{(\vect{q})}_m  \oplus  \mathcal{L}_{ext} \quad \text{and} \quad
 \sspan\big( \mathcal{L}^{(\vect{q})}  , \mathcal{L}^{(\matr{W})} \big) 
=
 \sspan\big( \mathcal{L}^{(\vect{q})}_m  , \mathcal{L}^{(\matr{W})} \big) \oplus  \mathcal{L}_{ext} \,, 
\] 
and, consequently, we just need to show that $\mathcal{L}_{ext}$ is of maximal dimension.
To show this, we split $\mathcal{I} \setminus \mathcal{I}_m$ into a direct sum according to the degrees of the last $d_0-m$ variables:
\[
\dim \mathcal{L}_{ext}  =\sum\limits_{\substack{i_{m + 1},\ldots,i_{d_0} \ge 0 \\ 1\le i_{m + 1}+\ldots+i_{d_0} \le R}}\dim \mathcal{L}^{(\vect{q}, (*, i_{m + 1},\ldots,i_{d_0}))} 
\]
where
\[
\mathcal{L}^{(\vect{q}, (*, i_{m + 1},\ldots,i_{d_0}))} = \sspan\{\mathcal{L}^{(\vect{q}, (i_1,\ldots,i_m, i_{m + 1},\ldots,i_{d_0}))}\}_{\substack{ (i_1,\ldots,i_m):  i_k \ge 0, \\  i_1 + \ldots + i_{d_0} = R}} \,. 
\]
But then, for a fixed $(i_{m + 1},\ldots,i_{d_0}) $  such $i_{m + 1} + \ldots + i_{d_0} = R_0 \le R$, the dimension of this subspace is equal to
\begin{align*}
\dim\mathcal{L}^{(\vect{q}, (*, i_{m + 1},\ldots,i_{d_0}))}  & = \dim\sspan\{ x^{i_1}_1 \cdots x^{i_{d_0}}_{d_0} \vect{w}_j  (\widehat{q}_{j})^{r-1} \}_{\substack{ j=1,\ldots,d, \\ i_1,\ldots,i_m \ge 0 \\ i_1 + \ldots + i_m = R-R_0}} \\
 & = \dim\sspan\{ x^{i_1}_1 \cdots x^{i_{m}}_{m} \vect{w}_j  (\widehat{q}_{j})^{r-1} \}_{\substack{ j=1,\ldots,d, \\ i_1,\ldots,i_m \ge 0 \\ i_1 + \ldots + i_m = R-R_0}} \\
& = \dim\sspan\{ x^{R_0+i_1}_1 \cdots x^{i_{m}}_{m} \vect{w}_j  (\widehat{q}_{j})^{r-1} \}_{\substack{ j=1,\ldots,d, \\ i_1,\ldots,i_m \ge 0 \\ i_1 + \ldots + i_m = R-R_0}},
\end{align*}
but the latter set of polynomials is linearly independent because it is a subset of the basis vectors of $\mathcal{L}^{(\vect{q})}_m $, which are linearly independent by \eqref{eq:rank_Jpsi_submatrix_m}.
Therefore we get $\mathcal{L}_{ext} $ is of maximal possible dimension (the spanning columns are linearly independent).
\end{proof}

\subsection{Localization theorem}
\label{sec:app:localication_thm_proof_final}
{\it
\textbf{\Cref{thm:localization_theorem} (Localization theorem)}
Let $((d_0,\ldots,d_L), (r_1,\ldots,r_{L-1}))$ be the hPNN format.  For $\ell = 0,\ldots,L-2$ denote $\widetilde{d}_{\ell} = \min \{d_0,\ldots,d_\ell\}$. Then the following holds true: if for all $\ell = 1,\ldots,L-1$ the two-layer architecture
$\hPNN{(\widetilde{d}_{\ell-1},d_{\ell},d_{\ell+1}),r_\ell}{\cdot}$ is finitely identifiable, then the $L$-layer architecture $\hPNN{{\d,\r}}{\cdot}$ is finitely identifiable as well.
}

\begin{proof} (Proof of \Cref{thm:localization_theorem})
We prove the theorem by induction.
\begin{itemize}
\item \fbox{Base: $L=2$}
The base of the induction is trivial since the case $L=2$ the full hPNN consists in a 2-layer network.  

\item \fbox{Induction step: $(L=k-1) \to (L=k)$}
Assume that the statement holds for $L=k-1$. Now consider the case $L=k$. 

With some abuse of notation, let $\vect{\theta}=(\matr{W}_1,\ldots,\matr{W}_{L-1})$, so that $\vect{w} = (\vect{\theta}, \matr{W}_L)$ and denote $R = r_1\cdots r_{L-2}$.

Let $\psi$ be as the one defined in \Cref{prop:JacPNN_composition}, but given for the last subnetwork, so that
$n = d_L$, $d= d_{L-1}$, $r = r_{L-1}$, $\matr{W} = \matr{W}_{L}$.
Then we have that
\[
\vect{p}_{\vect{w}} = \hPNN{(r_1,\dots,r_{L-1})}{(\vect{\theta}, \matr{W}_L)} = \psi[h(\vect{\theta}), \matr{W}_L]
\]
where $h(\vect{\theta}) = \hPNN{(r_1,\dots,r_{L-2})}{\vect{\theta}}$.

Therefore, by the chain rule
\begin{align*}
   \matr{J}_{\vect{p}_{\vect{w}}}(\vect{w}) = 
    \begin{bmatrix} \underbrace{ \left(\matr{J}^{(\vect{q})}_{\psi}\Big|_{\vect{q} = h(\vect{\theta})}\right) \cdot \matr{J}_{h} (\ttheta)}_{=\matr{J}_1(\vect{w})} &   \underbrace{\matr{J}^{(\matr{W})}_{\psi}\Big|_{\vect{q} = h(\vect{\theta})}}_{=\matr{J}_2(\vect{\theta})} \end{bmatrix},
  \end{align*}
Now we are going to show that the matrices have necessary rank for  generic $\ttheta$.
For this, note by the induction assumption, for generic $\ttheta$, we have
\[
\rank{\matr{J}_{h} (\ttheta)} = \sum_{\ell=0}^{L-2} d_\ell d_{\ell+1} -  \sum_{\ell=1}^{L-2} d_{\ell} \,.
\]
Now we show the ranks for other matrices. Observe that
\begin{equation}\label{eq:rank_Jq_recursion}
\rank{  \begin{bmatrix} \left(\matr{J}^{(\vect{q})}_{\psi}\Big|_{\vect{q} = h(\vect{\theta})}\right)  &   \matr{J}^{(\matr{W})}_{\psi}\Big|_{\vect{q} = h(\vect{\theta})} \end{bmatrix}} \le d_{L-1} \binom{R+d_0-1}{R} +  (d_L-1)d_{L-1}
\end{equation}
due to the essential ambiguities.
But then if we find a particular point $\widehat{\vect{\theta}}$, where rank is maximal for $\widehat{\vect{q}} =  h(\widehat{\vect{\theta}})$, then the rank in \eqref{eq:rank_Jq_recursion} will be maximal for generic $\ttheta$.

But then, let $m = \widetilde{d}_{L-1} = \min\{d_0,\ldots,d_{L-1}\}$ and consider the following matrices:
\begin{align*}
    \widehat{\matr{W}}_1 =
    \begin{bmatrix}
    \matr{I}_{m} & \matr{0} \\
    \matr{0} & \matr{0}
    \end{bmatrix}, \ldots,
    \widehat{\matr{W}}_{L-2} =
    \begin{bmatrix}
    \matr{I}_{m} & \matr{0} \\
    \matr{0} & \matr{0}
    \end{bmatrix}
\end{align*}
and
\[
\widehat{\matr{W}}_{L-1} = \begin{bmatrix} \matr{V} & \matr{0} \end{bmatrix},
\]
for $\matr{V} \in \RR^{d_{L-1} \times m}$ generic.
Then we get that for $\widehat{\ttheta} = (\widehat{\matr{W}}_1,\ldots, \widehat{\matr{W}}_{L-2})$
\[
h(\widehat{\vect{\ttheta}}) = \matr{V}  \begin{bmatrix}
        x_1^{R} \\
        \vdots \\
        x_m^{R} 
            \end{bmatrix},
\]
so exactly as in \Cref{prop:JacPNN_composition}.
Therefore rank in \eqref{eq:rank_Jq_recursion} will be maximal for generic $(\ttheta, \matr{W}_{L})$ and also
\[
\rank{   \left(J^{(\vect{q})}_{\psi}\Big|_{\vect{q} = h(\vect{\theta})}\right)}  =  d_{L-1} \binom{R+d_0-1}{R} 
\]
for generic $\ttheta$ (i.e., the matrix is full rank).

This leads to $\rank{\matr{J}_1(\vect{w})} =\matr{J}_{h} (\ttheta) $ for generic $\ttheta$.
Finally, we have that
\begin{align*}
\rank{  \matr{J}_{\vect{p}_{\vect{w}}}(\vect{w})} &= \rank{\matr{J}_1(\ttheta)} + \rank{\Pi_{\sspan{\matr{J}_1(\ttheta)}_{\bot} } \matr{J}_2(\ttheta)}\\
&\ge \rank{\matr{J}_1(\ttheta)} + \rank{\Pi_{\sspan{\left(\matr{J}^{(\vect{q})}_{\psi}\Big|_{\vect{q} = h(\vect{\ttheta})}\right) }_{\bot} } \matr{J}_2(\ttheta)} \\
& =   \sum_{\ell=0}^{L-2} d_\ell d_{\ell+1} -  \sum_{\ell=1}^{L-2} d_{\ell} + (d_L-1)d_{L-1} \\
& =   \sum_{\ell=0}^{L-1} d_\ell d_{\ell+1} -  \sum_{\ell=1}^{L-1} d_{\ell} \,,
\end{align*}
where $\Pi_{\mathcal{U}}$ denotes the orthogonal projection onto some subspace $\mathcal{U}$.
On the other hand, 
\[
\rank{  \matr{J}_{\vect{p}_{\vect{w}}}(\vect{w})} \le  \sum_{\ell=0}^{L-1} d_\ell d_{\ell+1} -  \sum_{\ell=1}^{L-1} d_{\ell} 
\]
due to presence of ambiguities.
Hence, an equality holds and therefore the neurovariety has expected dimension.
\end{itemize}
\end{proof}

\subsection{Implications of the localization theorem}
\label{sec:app:impolications_of_localication}

{\it
\textbf{\Cref{cor:identifiability_pyramidal_nets} (Pyramidal)}
    The hPNNs with architectures containing non-increasing layer widths $d_0 \ge d_{1} \ge  \cdots d_{L-1} \ge 2$, except possibly for $d_{L} \ge 1$ are finitely identifiable for any degrees satisfying (i) $r_1,\ldots,r_{L-1} \ge 2$ if $d_L\ge2$; or (ii) $r_1,\ldots,r_{L-2} \ge 2$, $r_{L-1}\ge 3$ if $d_L\ge1$.
}

\begin{proof} (Proof of \Cref{cor:identifiability_pyramidal_nets})
This follows from the following facts:
\begin{itemize}
\item For such a choice of $d_\ell$,  $\widetilde{d}_{\ell} = d_\ell$ for all $\ell = 0,\ldots,L-1$;
\item Network $(d_{\ell-1},d_{\ell},d_{\ell+1})$ with $d_{\ell-1} \ge d_{\ell}$ is identifiable for: 
\begin{itemize}
\item $r_\ell \ge 2$, in case $d_{\ell+1} \ge 2$;
\item $r_\ell \ge 3$, in case $d_{\ell+1} = 1$.
\end{itemize}
\end{itemize}
\end{proof}

{\it 
\textbf{\Cref{cor:activation_thresh_identifiability} (Activation thresholds for identifiability)}
    For fixed layer widths $\d = (d_0,\ldots,d_{L})$ with $d_{\ell}\ge 2$, $\ell =0,\ldots,L-1$, the hPNNs with architectures $(\d,(r_1,\dots,r_{L-1}))$ are identifiable for any degrees satisfying
    \[r_\ell \ge 2 d_\ell - 1 \,.\]
}
\begin{proof} (proof of \Cref{cor:activation_thresh_identifiability})
We observe that this guarantees that $\widetilde{d}_{\ell} \ge 2$.
But then the Kruskal bound for identifiability of $(\widetilde{d}_{\ell-1}, d_{\ell},d_{\ell+1})$ is
\[
 \frac{2 d_{\ell} - \min(d_{\ell}, d_{\ell+1} )}{\min(d_{\ell},  \widetilde{d}_{\ell-1} ) - 1} \le 2 d_{\ell} - 1.
\]
therefore, for $r_{\ell} \ge 2 d_{\ell} -1$ the hPNN $(\widetilde{d}_{\ell-1}, d_{\ell},d_{\ell+1}), r_{\ell}$ is identifiable. 
\end{proof}

{\it
\textbf{\Cref{cor:identifiability_bottleneck_nets} (Identifiability of bottleneck hPNNs)}
Consider the ``bottleneck'' architecture with
\[
d_0 \ge d_1 \ge \cdots \ge \,\, d_b \,\, \le d_{b+1} \le \ldots \le d_{L}
\]
and $d_b\ge2$. Suppose that $r_1,\ldots,r_b\ge 2$ and that the decoder part satisfies $\frac{d_{\ell}}{r_\ell} \le d_b-1$ for $\ell \in \{b+1,\ldots,L-1\}$. Then the bottleneck hPNN is finitely identifiable.
}

\begin{proof}(proof of \Cref{cor:identifiability_bottleneck_nets})
This follows from \Cref{thm:localization_theorem} and the following facts:
\begin{itemize}
    \item For layers $\ell\in\{1,\ldots,b\}$ (the encoder part), we have $\widetilde{d}_{\ell} = d_\ell$ and thus identifiability of $(\widetilde{d}_{\ell-1},d_{\ell},d_{\ell+1})$ holds for $r_\ell\ge2$ (the same argument as in the pyramidal case).
    \item For layers $\ell\in\{b+1,\ldots,L\}$ (the decoder part), we have $\widetilde{d}_{\ell} = d_b$ and thus identifiability of $(\widetilde{d}_{\ell-1},d_{\ell},d_{\ell+1})$ holds for \[r_\ell \ge \frac{d_{\ell}}{d_b-1},\]
    rearranging gives the desired result.
\end{itemize}
\end{proof}

\section{Analyzing case of PNNs with biases}\label{app:sec:proofs_PNNs_with_bias}
This appendix contains the proofs and supporting technical results for the identifiability results of PNNs with bias terms presented in \Cref{ssec:PNNs_with_bias} of the main paper. We start by establishing the relationship between PNNs and hPNNs and their uniqueness by means of homogeneization. We then prove our main finite identifiability results showing that finite identifiability of 2-layer subnetworks of the homogeneized PNNs is sufficient to guarantee the finite  identifiability of the original PNN.

\smallskip
\fbox{\textbf{Results from the main paper}: {\small \Cref{lem:homogeneization_base}, Propositions \ref{prop:PNN_homogenization}, \ref{pro:unique_hPNN_to_PNN},   \ref{pro:PNN_localization}, \Cref{lem:ident_hPNN_to_PNN}, and \Cref{cor:homog_generic_architecture}}.}

\subsection{The homogeneization procedure: the hPNN associated to a PNN}
Our homogeneization procedure is based on the following lemma:

{\it
\textbf{\Cref{lem:homogeneization_base}.}
There is a one-to-one mapping between (possibly inhomogeneous) polynomials in $d$ variables of degree $r$  and homogeneous polynomials of the same degree in $d+1$ variables.
We denote this mapping $\PSpace{d}{r} \to \HSpace{d+1}{r}$ by $\homog(\cdot)$, and it acts as follows:
for every polynomial $p \in\PSpace{d}{r}$,  $\widetilde{p} =\homog(p)\in\HSpace{d+1}{r}$ is the unique homogeneous polynomial in $d+1$ variables such that
\[
\widetilde{p}(x_1,\ldots,x_d,1) =  {p}(x_1,\ldots,x_d). 
\]
}
\begin{proof}[Proof of \Cref{lem:homogeneization_base}]
Let $p$ be a possibly inhomogeneous polynomial in $d$ variables, which reads
\[
 {p}(x_1,\ldots,x_d)=\sum_{\vect{\alpha},\, |\vect{\alpha}|\leq r} b_{\vect{\alpha}} \, x_1^{\alpha_1}\cdots x_d^{\alpha_d} \,,
\]
for $\vect{\alpha}=(\alpha_1,\ldots,\alpha_d)$. One sets 
\[
\widetilde{p}(x_1,\ldots,x_d,z)=\sum_{\vect{\alpha},\, |\vect{\alpha}|\leq r} b_{\vect{\alpha}} x_1^{\alpha_1}\cdots x_d^{\alpha_d}z^{r-\alpha_1-\cdots-\alpha_d}
\]
which satisfies the required properties.
\end{proof}

\textbf{Associating an hPNN to a given PNN:} Now we prove that for each polynomial $\vect{p}$ admitting a PNN representation, its associated homogeneous polynomial admits an hPNN representation. This is formalized in the following result.

{\it
\textbf{Proposition~\ref{prop:PNN_homogenization}.}
Fix the architecture $\r =  (r_1,\ldots,r_L)$ and $\d=(d_0,\ldots,d_L)$. Then a polynomial vector $\vect{p} \in (\PSpace{d_0}{\rall})^{\times d_L}$ admits a PNN representation $\vect{p} = \PNN{\d,\r}{\vect{w}}{\vect{b}}$ with $({\vect{w}},{\vect{b}})$  as in \eqref{eq:ttheta}
if and only if its homogeneization $\widetilde{\vect{p}} = \homog(\vect{p})$ admits an hPNN decomposition for the same activation degrees $\r$ and extended $\widetilde{\d} = (d_0+1,\ldots, d_{L-1}+1,d_L)$, $\widetilde{\vect{p}} = \hPNN{\widetilde{\d},\r}{\widetilde{\vect{w}}}$, $\widetilde{\w}=(\widetilde{\matr{W}}_1,\ldots,\widetilde{\matr{W}}_{L-1},\begin{bmatrix} \matr{W}_{L}  & \vect{b}_{L} \end{bmatrix})$, 
with matrices $\widetilde{\matr{W}}_{\ell}$ for $\ell=1,\ldots,L-1$ given as
\[
\widetilde{\matr{W}}_{\ell} = 
\begin{bmatrix} \matr{W}_{\ell}  & \vect{b}_{\ell} \\ 0 & 1 \end{bmatrix} \in \RR^{(d_{\ell}+1) \times (d_{\ell-1}+1)}\,.
\]
}

\begin{proof}[Proof of \Cref{prop:PNN_homogenization}]
    Denote $\vect{p}_1(\vect{x})=\rho_{r_1}(\matr{W}_1\vect{x}+\vect{b}_1)$. Let $\widetilde{\vect{x}}=\begin{bmatrix}\vect{x}\\z\end{bmatrix}\in\RR^{d_0+1}$.
    Observe first that
    \[\rho_{r_1}(\widetilde{\matr{W}}_1\widetilde{\vect{x}})=\begin{bmatrix}\widetilde{\vect{p}}_1(\widetilde{\vect{x}})\\z^{r_1}\end{bmatrix}.\]
    We proceed then by induction on $L\ge1$.

    The case $L=1$ is trivial. Assume that $L=2$. Then 
    \[\widetilde{\matr{W}}_2\rho_{r_1}(\widetilde{\matr{W}}_1\widetilde{\vect{x}})=\widetilde{\matr{W}}_2\begin{bmatrix}\widetilde{\vect{p}}_1(\widetilde{\vect{x}})\\z^{r_1}\end{bmatrix}=\matr{W}_2\widetilde{\vect{p}}_1(\widetilde{\vect{x}})+z^{r_1}\vect{b}_2.\]
    Specializing at $z=1$, we recover
    \[\matr{W}_2 \vect{p}_1(\vect{x})+\vect{b}_2 = \vect{p}(\vect{x})=\widetilde{\vect{p}}(\vect{x},1)\,,\]
    hence 
    \[\widetilde{\matr{W}}_2\rho_{r_1}(\widetilde{\matr{W}}_1\widetilde{\vect{x}})=\widetilde{\vect{p}}(\widetilde{\vect{x}}).\]
    
    For the induction step, assume that $\tilde{\vect{q}}=\hPNN{(d_1+1,\dots,d_{L-1}+1,d_L),\r}{(\widetilde{\matr{W}}_2,\dots,\widetilde{\matr{W}}_L)}$ is the homogeneization of 
    $\vect{q}=\PNN{(d_1,\dots,d_L),\r}{(\matr{W}_2,\dots,\matr{W}_L)}{(\vect{b}_2,\dots,\vect{b}_L)}$.
    By assumption, 
    \[\widetilde{\vect{p}}(\vect{x},1)=\widetilde{\vect{q}}\left(\begin{bmatrix}\widetilde{\vect{p}}_1(\vect{x},1)\\1\end{bmatrix}\right)=\vect{q}(\widetilde{\vect{p}}_1(\vect{x},1))=\vect{q}(\vect{p}_1(\vect{x}))=\vect{p}(\vect{x})\,.\]
\end{proof}

{\it 
\textbf{Proposition~\ref{pro:unique_hPNN_to_PNN}.}
If $\hPNN{\r}{\widetilde{\w}}$  from \Cref{prop:PNN_homogenization} is unique as an hPNN (without taking into account the structure), then the original PNN representation $\PNN{\r}{\w}{\vect{b}}$ is unique.
}

\begin{proof}[Proof of \Cref{pro:unique_hPNN_to_PNN}]
Suppose $\hPNN{\r}{\widetilde{\w}}$ is unique (or finite-to-one), where $\widetilde{\w}$ is structured as in \Cref{prop:PNN_homogenization}. 
Note that any equivalent (in the sense of  \Cref{lem:multi_hom} specialized for $\hPNN{\r}{\widetilde{\w}}$) parameter vector  $\widetilde{\w}'=(\widetilde{\matr{W}}_1',\dots,\widetilde{\matr{W}}_L')$
realizing the same hPNN must satisfy
\begin{align}
    \widetilde{\matr{W}}_{\ell}'
    & = \begin{cases}
    \widetilde{\matr{P}}_{\ell}\widetilde{\matr{D}}_{\ell}
    \widetilde{\matr{W}}_{\ell} \widetilde{\matr{D}}_{\ell-1}^{-r_{\ell-1}}\widetilde{\matr{P}}_{\ell-1}^{\T} , & \ell < L, 
    \\
    \widetilde{\matr{W}}_{L} \widetilde{\matr{D}}_{L-1}^{-r_{L-1}}\widetilde{\matr{P}}_{L-1}^{\T} , & \ell = L.
    \end{cases}
\end{align}
for permutation matrices $\widetilde{\matr{P}}_{\ell}$ and invertible diagonal matrices $\widetilde{\matr{D}}_{\ell}$, with $\widetilde{\matr{P}}_0=\widetilde{\matr{D}}_0=\matr{I}$.
We are going to show that  bringing $\widetilde{\matr{W}}_{\ell}'$  to the form
\begin{equation}\label{eq:structure_Wbias}
 \widetilde{\matr{W}}_{\ell}'
    = \begin{cases}
    \begin{bmatrix} \matr{W}_{\ell}'  & \vect{b}_{\ell}' \\ 0 & 1 \end{bmatrix}  , & \ell < L, 
    \\
    \begin{bmatrix} \matr{W}_{L}'  & \vect{b}_{L}' \end{bmatrix}  , & \ell = L.
    \end{cases}
\end{equation}
that does not introduce extra ambiguities besides the ones for PNN (given in \Cref{lem:multi_hom}).
 
Next, by \Cref{cor:KR},
for $\ell =1,\ldots,L-1$ the matrices satisfy $\krank{(\widetilde{\matr{W}}_{\ell})^{\T}} \ge 2$  (as well as for any equivalent $\krank{(\widetilde{\matr{W}}'_{\ell})^{\T}} \ge 2$).
This implies that the matrix 
$\widetilde{\matr{W}}_{\ell}$ contains only a single row of the form $[0\, \cdots 0\, \alpha]$ (which is its last row).
Therefore in order for $\widetilde{\matr{W}}'_1$  to be of the form \eqref{eq:structure_Wbias}, 
the matrices $\widetilde{\matr{P}}_{1}$
$\widetilde{\matr{D}}_{1}$ must be of the form 
\[
\widetilde{\matr{P}}_{1}
= \begin{bmatrix} * &0 \\0 & 1\end{bmatrix}, \qquad 
\widetilde{\matr{D}}_{1}
= \begin{bmatrix} * &0 \\0 & 1\end{bmatrix}.
\]
Iterating this process for $\ell =2,\ldots,L-1$,
we impose constraints of the form
\[
\widetilde{\matr{P}}_{\ell}
= \begin{bmatrix} * &0 \\0 & 1\end{bmatrix}, \qquad 
\widetilde{\matr{D}}_{\ell}
= \begin{bmatrix} * &0 \\0 & 1\end{bmatrix}.
\]
This implies that  $(\matr{W}'_\ell, \matr{b}'_\ell)$ and $(\matr{W}_\ell, \matr{b}_\ell)$ must be linked as in \Cref{lem:multi_hom}.

Now suppose that $\hPNN{\r}{\widetilde{\w}}$ is  finite-to-one.
Then the same reasoning applies to all alternative (non-equivalent) parameters $\widetilde{\w}$ that are realized by a PNN, because \Cref{prop:PNN_homogenization} holds for every solution.
Since there are finitely many equivalence classes, the corresponding PNN representation is also finite-to-one.
\end{proof}

\subsection{Generic identifiability conditions for PNNs with bias terms}

{\it
\textbf{\Cref{lem:ident_hPNN_to_PNN}}
Let the $2$-layer hPNN architecture $((d_0+1,d_1+1,d_2), (r_1))$ be finitely (resp. globally) identifiable. Then the PNN architecture with widths
$(d_0,d_1,d_2)$ and degree $r_1$ is also finitely (resp. globally) identifiable.
}

\begin{proof}[Proof of \Cref{lem:ident_hPNN_to_PNN}.]
By \Cref{pro:unique_hPNN_to_PNN}  we just need to show that for general $(\matr{W}_{2}, \matr{b}_{2}, \matr{W}_{1}, \matr{b}_{1})$, the following hPNN is unique (finite-to-one)
\begin{equation}\label{eq:PNN_hPNN}
\vect{p} (\widetilde{\vect{x}}) = \begin{bmatrix} \matr{W}_{2}  & \vect{b}_{2} \end{bmatrix} \rho_{r_{1}} ( \widetilde{\matr{W}}_{1}  \widetilde{\vect{x}})
\end{equation}
with $\widetilde{\matr{W}}_{1}$  given as 
\[
\widetilde{\matr{W}}_{1} = 
\begin{bmatrix} 
\matr{W}_{1}  & \vect{b}_{1} \\
0 & 1 \end{bmatrix}.
\]
We see that $\widetilde{\matr{W}}_{1}$ lies in a subspace of $(d_1+1) \times (d_0+1)$ matrices.

We use the following fact: by multilinearity, both uniqueness and finite-to-one properties  of an hPNN are invariant under multiplication of $\widetilde{\matr{W}}_{1}$ on the right by any nonsingular $(d_0+1) \times (d_0+1)$ matrix $\matr{Q}$.
We note that the image of the polynomial map
\begin{align*}
\RR^{(d_0+1) \times (d_0+1)} \times \RR^{d_1\times d_0}  \times \RR^{d_0} & \to \RR^{(d_1 + 1) \times (d_0+1)} \\
(\matr{Q},\matr{W}_{1} , \vect{b}_{1})  & \mapsto  \widetilde{\matr{W}}_{1}\matr{Q},
\end{align*}
which is surjective, and its image is dense.

Therefore, identifiability (resp. finite identifiability) holds except some set of measure zero in $\RR^{(d_1 + 1) \times (d_0+1)}$, then it also hold for $\widetilde{\matr{W}}_{1}$  constructed from almost all $(\matr{W}_{1}, \vect{b}_{1})$ pairs.
For example, in terms of finite identifiability this is explained by the fact that there is a smooth point of the hPNN neurovariety corresponding to the parameters $(\begin{bmatrix} \matr{W}_{2}  & \vect{b}_{2} \end{bmatrix}, \widetilde{\matr{W}}_{1})$.
\end{proof}

{\it
\textbf{\Cref{pro:PNN_localization}}
Let $((d_0,\ldots,d_L), (r_1,\ldots,r_{L-1}))$ be the PNN format.  For $\ell = 0,\ldots,L-2$ denote $\widetilde{d}_{\ell} = \min \{d_0,\ldots,d_\ell\}$. Then the following holds true: If for all $\ell = 1,\ldots,L-1$ the two layer architecture
$\hPNN{(\widetilde{d}_{\ell-1}+1,d_{\ell}+1,d_{\ell+1}),r_\ell}{\cdot}$ is finitely identifiable, then the $L$-layer PNN with architecture $(\d,\r)$ is finitely identifiable as well.}

For the proof of the main proposition, we need the following lemma.
\begin{lemma}\label[lemma]{lem:shallow_ident_extension}
 Global (resp. finite) identifiability of an hPNN of format $((m,d,n),r)$ implies (resp. finite) identifiability of the hPNN in format $((m,d,n+k),r)$ for any $k > 0$. 
\end{lemma}
\begin{proof}
Let the parameters be such that 
\[
\matr{W}_2 = \begin{bmatrix}
\matr{A} \\
\matr{B}
 \end{bmatrix}, \, \matr{A} \in \RR^{n\times d}, \, \matr{B} \in \RR^{k \times d}, \,
 \matr{W}_{1},
\]
so that
\[
\vect{p}_{(\matr{W}_1,\matr{W}_2)} = 
\begin{bmatrix}
\vect{p}_{(\matr{W}_1,\matr{A})} \\
\vect{p}_{(\matr{W}_1,\matr{B})} 
\end{bmatrix} = 
\begin{bmatrix}
\matr{A}\sigma_r(\matr{W}_1 \vect{x}) \\
\matr{B}\sigma_r(\matr{W}_1 \vect{x})
\end{bmatrix}.
\]
But then assume that $\vect{p}^{(A)}$ is finite-to-one at $(\matr{W}_1,\matr{A})$ a given parameter. Then by \Cref{pro:lin_indep} we have that the elements of $\vect{q}_1 = \sigma_r(\matr{W}_1 \vect{x})$ are linearly independent, hence $\matr{B}$ has a unique solution from the linear system
\[
\vect{p}_{(\matr{W}_1,\matr{B})} = \matr{B}\sigma_r(\matr{W}_1 \vect{x}).
\]
Note that for $(\matr{W}_1,\matr{W}_2)$, the subset of parameters $(\matr{W}_1,\matr{A})$ is also generic,
hence global (resp. finite) identifiability for widths $(m,d,n)$
implies global (resp. finite) identifiability for widths $(m,d,n+k)$.
\end{proof}

\begin{proof}[Proof of \Cref{pro:PNN_localization}]
We are going to prove that under the condition of the theorem, two hPNN architectures for degrees $\r$ and widths
\[
(d_0+1,\ldots,d_{L-1}+1, d_L) \quad \text{and} \quad (d_0+1,\ldots,d_{L-1}+1, d_L+1)
\]
are finitely identifiable.

We proceed by induction, similarly as in \Cref{thm:localization_theorem}.
\begin{itemize}
\item \fbox{Base: $L=2$}
The base of the induction follows is trivial since it is the 2-layer network, and from \Cref{lem:shallow_ident_extension} for the architecture $(d_{\ell-1}+1,d_{\ell}+1,d_{\ell+1}+1)$.

\item \fbox{Induction step: $(L=k-1) \to (L=k)$}
Assume that the statement holds for $L=k-1$. Now consider the case $L=k$. 
As in the proof of  \Cref{thm:localization_theorem}, we set  $\widetilde{\vect{\theta}}=(\widetilde{\matr{W}}_1,\ldots,\widetilde{\matr{W}}_{L-1})$, so that $\widetilde{\vect{w}} = (\widetilde{\vect{\theta}}, \widetilde{\matr{W}}_L)$ and denote $R = r_1\cdots r_{L-2}$.
The difference is that the parameters are now $\widetilde{{\ttheta}} := \widetilde{{\theta}}(\vect{\theta})$, where
\[
\vect{\theta} = (\matr{W}_1,\ldots,\matr{W}_{L-1}, \matr{b}_1,\ldots,\matr{b}_{L-1}) \,.
\]

Let $\psi$ be as the one defined in \Cref{prop:JacPNN_composition}, but given for the last subnetwork, so that
$n = d_L$, $d= d_{L-1}+1$, $r = r_{L-1}$, $\matr{W} = \widetilde{\matr{W}}_{L}$.
Then we have that
\[
\vect{p}_{\widetilde{\vect{w}}} (\widetilde{{\theta}}(\vect{\theta}),\matr{W})  = \psi[h(\widetilde{{\theta}}(\vect{\theta})), \matr{W}_L] \,,
\]
where $h(\widetilde{\vect{\theta}}) = \hPNN{(r_1,\dots,r_{L-2})}{\widetilde{\vect{\theta}}}$.

Again, by the chain rule
\begin{align*}
   \matr{J}_{\vect{p}_{\widetilde{\vect{w}}}}(\vect{w}) = 
    \begin{bmatrix} \underbrace{ \left(\matr{J}^{(\vect{q})}_{\psi}\Big|_{\vect{q} = h(\widetilde{\theta}(\vect{\theta}))}\right) \cdot \matr{J}_{h} (\widetilde{\theta}(\vect{\theta}))}_{=\matr{J}_1(\widetilde{\vect{w}})}  &   \underbrace{\matr{J}^{(\matr{W})}_{\psi}\Big|_{\vect{q} = h(\widetilde{\theta}(\vect{\theta}))}}_{=\matr{J}_2(\vect{\theta})} \end{bmatrix} \begin{bmatrix}\matr{J}_{ \widetilde{\vect{\theta}}} (\vect{\theta})  & & \matr{I}\\
    \end{bmatrix},
  \end{align*}
where the matrix in the right hand side is full column rank.
Therefore, we just need to show that the left hand side matrix is full column rank for a particular 
$\widetilde{\vect{\theta}} = \widetilde{\theta} (\vect{\theta})$.
But for this remark that we can use almost the same construction example \Cref{prop:JacPNN_composition}, but choosing slightly different matrices:
$\widetilde{\vect{\theta}}' = (\widehat{\matr{W}}'_1,\ldots, \widehat{\matr{W}}'_{L-1})$
with
\begin{align*}
    \widehat{\matr{W}}'_1 =
    \begin{bmatrix}
    \matr{0} & \matr{0}\\
    \matr{0} &\matr{I}_{m} 
    \end{bmatrix}, \ldots,
    \widehat{\matr{W}}_{L-2} =
       \begin{bmatrix}
    \matr{0} & \matr{0}\\
    \matr{0} &\matr{I}_{m} 
    \end{bmatrix}
\end{align*}
and
\[
\widehat{\matr{W}}'_{L-1} = \begin{bmatrix} \matr{0} &\matr{V}' \end{bmatrix},
\]
where in \Cref{lem:JacPNN_2layer}  we can choose generic $\matr{V}'$ structured as
\[
\matr{V}' = \begin{bmatrix}
\matr{W}^{(V')} & \matr{b}^{(V')} \\
0 & 1
\end{bmatrix}.
\]
Indeed, we  need  this to be a smooth point (i.e., full rank Jacobian of $\matr{W}\rho_{r_{L-1}}(\matr{V}'\vect{x})$), which is full rank for generic $\matr{W}^{(V')}, \matr{b}^{(V')}$, by the same argument as in the proof of \Cref{lem:ident_hPNN_to_PNN}.

But such $\widetilde{\vect{\theta}}'$  indeed belongs to the image of $\widetilde{\theta}(\vect{\theta})$ as they share the needed structure, which completes the proof.
\end{itemize}
\end{proof}

{\it
\textbf{\Cref{cor:homog_generic_architecture}.}
Let $((d_0,\ldots,d_L), (r_1,\ldots,r_{L-1}))$
be such that $d_{\ell} \ge 1$, and $r_{\ell}\ge 2$ satisfy
    \[
    r_{\ell} \ge 
    \frac{2 (d_{\ell} +1)  - \min(d_\ell+1, d_{\ell+1})}{\min(d_\ell, \widetilde{d}_{\ell-1})},
    \]
    then the  $L$-layer PNN with architecture $(\d,\r)$ is finitely identifiable (and globally identifiable when $L=2$).
}

\begin{proof}[Proof of \Cref{cor:homog_generic_architecture}.]
This directly follows from combining \Cref{lem:ident_hPNN_to_PNN}, \Cref{cor:gen_generic} and \Cref{pro:PNN_localization}.
\end{proof}

\subsection{Truncation of PNNs with bias terms}\label{sec:app_bias}
In this appendix, we describe an alternative (to homogenization) approach to prove the identifiability of the weights $\matr{W}_{\ell}$ of  $\PNN{\d,\r}{\w}{\vect{b}}$  based on \textit{truncation}. The key idea is that the truncation of a PNN is an hPNN, which allow one to leverage the uniqueness results for hPNNs.
However, we note that unlike homogeneization, truncation does not by itself guarantees the identifiability of the bias terms~$\vect{b}_{\ell}$. 

For truncation, we use leading terms of polynomials, i.e. for $p \in \PSpace{d}{r}$ we define  $\leadterm{p} \in  \HSpace{d}{r}$ the homogeneous polynomial consisting of degree-$r$ terms of $p$:
\begin{example}\label[example]{ex:poly_truncation}
For  a bivariate polynomial $p \in \PSpace{2}{2}$ given by
\[
p(x_1,x_2) = a x_1^2 + bx_1 x_2 + c x_2^2 + e x_1 + f x_2 + g,  \,.
\]
its truncation $q = \leadterm{p} \in  \HSpace{2}{2}$  becomes
\[
q(x_1,x_2)  =  a x_1^2 + bx_1 x_2 + c x_2^2 \,.
\]
\end{example}
In fact $\leadterm{\cdot}$ is an orthogonal projection $\PSpace{d}{r} \to  \HSpace{d}{r}$; we also apply $\leadterm{\cdot}$ to vector polynomials coordinate-wise.
Then, PNNs with biases can be treated using the following lemma.
\begin{lemma}\label[lemma]{lem:truncation}
Let $\vect{p} = \PNN{\d,\r}{\w}{\vect{b}}$ be a PNN with bias terms. Then its truncation is the hPNN with the same weight matrices
\[
\leadterm{\vect{p}} = \hPNN{\d,\r}{\w}.
\]
\end{lemma}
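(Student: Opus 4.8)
The plan is to prove the identity by propagating the top‑degree homogeneous component through the network layer by layer, using that both affine maps and monomial activations act on leading terms exactly as their biasless analogues. Recall from \Cref{def:PNN} that $\vect{p} = f_L\circ\rho_{r_{L-1}}\circ\cdots\circ\rho_{r_1}\circ f_1$ with affine maps $f_\ell(\x)=\matr{W}_\ell\x+\vect{b}_\ell$. First I would record two elementary facts about the orthogonal projection $\leadterm{\cdot}$ onto the top‑degree homogeneous part. (i) If $\vect{q}$ is a polynomial vector of total degree $\le m$ with $m\ge 1$, then for any $\matr{W},\vect{b}$ the vector $\matr{W}\vect{q}+\vect{b}$ again has total degree $\le m$, and its degree‑$m$ homogeneous part equals $\matr{W}$ applied to the degree‑$m$ part of $\vect{q}$; indeed the bias $\vect{b}$ is a constant, hence of degree $0<m$, and is killed by the projection, while $\matr{W}(\cdot)$ is linear and preserves homogeneous degree. (ii) If $\vect{q}$ has total degree $\le m$, then $\rho_r(\vect{q})$ has total degree $\le rm$, and its degree‑$rm$ part is $\rho_r$ of the degree‑$m$ part of $\vect{q}$; writing each coordinate $q_i = q_i^{(m)}+(\text{terms of degree}<m)$ and expanding $q_i^{\,r}$ by the multinomial theorem, every term other than $(q_i^{(m)})^{r}$ uses at least one factor of degree $<m$ and thus has total degree $<rm$.

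With these facts the proof is a straightforward induction. Let $\vect{z}_1=f_1(\x)$ and $\vect{z}_{\ell+1}=f_{\ell+1}(\rho_{r_\ell}(\vect{z}_\ell))$ for $\ell=1,\dots,L-1$, so that $\vect{p}=\vect{z}_L$; set $m_1=1$ and $m_{\ell+1}=r_\ell m_\ell$, so $\vect{z}_\ell$ has total degree $\le m_\ell$ and $m_L=\rall$. Let $\widetilde{\vect{z}}_\ell$ denote the corresponding pre‑activations of $\hPNN{\r}{\w}$, namely $\widetilde{\vect{z}}_1=\matr{W}_1\x$ and $\widetilde{\vect{z}}_{\ell+1}=\matr{W}_{\ell+1}\rho_{r_\ell}(\widetilde{\vect{z}}_\ell)$, which are homogeneous of degree $m_\ell$; note $\hPNN{\r}{\w}=\widetilde{\vect{z}}_L$. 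I claim, by induction on $\ell$, that the degree‑$m_\ell$ part of $\vect{z}_\ell$ equals $\widetilde{\vect{z}}_\ell$. The base case $\ell=1$ follows from fact (i) applied to $\matr{W}_1\x+\vect{b}_1$ (with $m_1=1$ and the degree‑$1$ part of $\x$ being $\x$ itself). For the inductive step, fact (ii) applied to $\vect{z}_\ell$ gives that the degree‑$m_{\ell+1}$ part of $\rho_{r_\ell}(\vect{z}_\ell)$ is $\rho_{r_\ell}(\widetilde{\vect{z}}_\ell)$, and then fact (i) applied to $\matr{W}_{\ell+1}\rho_{r_\ell}(\vect{z}_\ell)+\vect{b}_{\ell+1}$ (with $m_{\ell+1}=r_\ell m_\ell\ge 2\ge 1$) shows the degree‑$m_{\ell+1}$ part of $\vect{z}_{\ell+1}$ is $\matr{W}_{\ell+1}\rho_{r_\ell}(\widetilde{\vect{z}}_\ell)=\widetilde{\vect{z}}_{\ell+1}$. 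Taking $\ell=L$ yields $\leadterm{\vect{p}}=\widetilde{\vect{z}}_L=\hPNN{\r}{\w}$.

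There is no genuine obstacle here: the statement is pure degree bookkeeping, and the biases disappear automatically because at every stage the relevant degree is $\ge 1$. The only point warranting care is fact (ii)---verifying that the multinomial expansion of a coordinate power $q_i^{\,r}$ contains a single term of the maximal degree $rm$---but this is immediate once one observes that any mixed term must involve a factor of degree strictly below $m$. Note also that the argument is valid even when the relevant homogeneous parts vanish (e.g.\ if some $\matr{W}_\ell=\matr{0}$), since $\leadterm{\cdot}$ is defined as a projection and the claimed identity then holds trivially.
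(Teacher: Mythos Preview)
Your proof is correct and follows essentially the same approach as the paper: the paper's proof is the one-line observation $\leadterm{q^{r}}=(\leadterm{q})^{r}$ applied recursively through the layers, and your facts (i)--(ii) together with the layerwise induction are precisely a fully spelled-out version of that recursion. The only difference is that you make the treatment of the affine steps (fact (i)) explicit, whereas the paper leaves this implicit.
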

\begin{proof}
The statement follows from the fact that $\leadterm{(q(\vect{x}))^r} = \leadterm{(q(\vect{x}))}^r$.
Indeed, this implies
$\leadterm{(\langle\vect{v},\vect{x}\rangle +\vect{c})^r} = (\langle\vect{v},\vect{x}\rangle)^r$, which can be applied recursively to $\PNN{\d,\r}{\w}{\vect{b}}$.
\end{proof}

\begin{example}\label[example]{ex:PNN2_truncation}
Consider a $2$-layer PNN
\begin{equation}\label{eq:PNN2_bias}
f(\vect{x}) =  \matr{W}_{2} \rho_{r_1}(\matr{W}_1 \vect{x} + \vect{b}_1) + \vect{b}_2.
\end{equation}
Then its truncation is given by
\[
\leadterm{f}(\vect{x}) =  \matr{W}_{2} \rho_{r_1}(\matr{W}_1 \vect{x} ).
\]
\end{example}

This idea is well-known and in fact was used in \cite{ComoQU17-xrank} to analyze identifiability of a 2-layer network with arbitrary polynomial activations.

\begin{remark}
Thanks to \Cref{lem:truncation}, the identifiability results obtained for hPNNs can be directly applied. Indeed, we obtain identifiability of weights, under the same assumptions as listed for the hPNN case.
However, this does not guarantee identifiability of biases, which was achieved using homogeneization.
\end{remark}

\section{Localization theorem: necessary and sufficient conditions for identifiability}
\label{sec:app:localization-old-and-new-changes}

\fbox{
\begin{minipage}{0.97\linewidth}
    This appendix has been added to the camera ready version on the request of the program committee. It explains the changes between the original submission and the camera-ready version.
\end{minipage}
}

Our main technical result in \Cref{thm:localization_theorem} gives sufficient conditions for finite identifiability of deep hPNNs  based on identifiability of two-layer subnetworks. 
In an earlier (submitted) version of the paper, the following results were claimed. 

\begin{claim*}[A, specific uniqueness]
    Let $\hPNN{\r}{\vect{w}}$, $\vect{w} = (\matr{W}_1,\ldots,\matr{W}_{L})$ be an $L$-layer hPNN with architecture $(\d,\r)$ satisfying $d_0,..., d_{L-1} \ge 2$, $d_L \ge 1$ and $r_1,\ldots,r_L \ge 2$. Then, $\hPNN{\r}{\vect{w}}$ is unique according to \Cref{def:PNN_unique} if and only if for every $\ell=1,\ldots,L-1$ the 2-layer subnetwork   {$\hPNN{(r_\ell)}{(\matr{W}_\ell,\matr{W}_{\ell+1})}$} 
 is unique as well.
\end{claim*}
This strong claim implied another claim on identifiability of hPNN architectures, which can be seen as a counterpart of the current \Cref{thm:localization_theorem}.
\begin{claim*}[B, identifiability]
The $L$-layer hPNN with architecture $(\d,\r)$ satisfying $d_0,\ldots, d_{L-1} \ge 2$, $d_L \ge 1$ and $r_1,\ldots,r_L \ge 2$ is identifiable according to \Cref{def:PNN_identifiable} if and only if for every $\ell=1,\ldots,L-1$ the 2-layer subnetwork with architecture $((d_{\ell-1},d_{\ell},d_{\ell+1}),(r_\ell))$ is identifiable as well.
\end{claim*}
We note that the ``only if'' part always holds, as non-uniqueness of any 2-layer subnetwork implies non-uniqueness of the overall networks. 
The relation between necessary and sufficient conditions for identifiability is illustrated in Fig.~\ref{fig:necessary_sufficient}.

\begin{figure}[hbt!]
\[
\includegraphics[height=4cm]{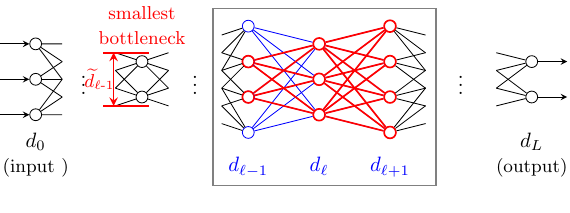}
\]
\caption{(from NeurIPS poster) Necessary and sufficient conditions for identifiability of an $L$-layer PNN. \textcolor{blue}{Blue}: necessary conditions, i.e., ``only if'' part of claim B (identifiability of the $((d_{\ell-1},d_{\ell},d_{\ell+1}),(r_\ell))$ subnetwork). \textcolor{red}{Red}: sufficient condition as given by \Cref{thm:localization_theorem} (identifiability of the $((\widetilde{d}_{\ell-1},d_{\ell},d_{\ell+1}),(r_\ell))$ subnetwork).}\label{fig:necessary_sufficient}
\end{figure}

Thus, both claims (A) and (B) were in fact aiming to answer the following questions:
\begin{itemize}
\item[(A)] \emph{Does uniqueness of all 2-layer subnetworks $\hPNN{(r_\ell)}{(\matr{W}_\ell,\matr{W}_{\ell+1})}$ imply uniqueness of the overall network $\hPNN{(r_1,\ldots,r_{L-1})}{(\matr{W}_1,\ldots,\matr{W}_{L})}$?}

\item[(B)] \emph{Does identifiability of all $((d_{\ell-1},d_{\ell},d_{\ell+1}),(r_\ell))$ 2-layer architectures imply the identifiability of the overall architecture $((d_{0},\ldots,d_{L}),(r_1,\ldots,r_{L-1}))$?}
\end{itemize}

We show below that the answer to this question is negative, both for specific uniqueness (uniqueness of a particular choice of parameters) and generic uniqueness (identifiability of a given architecture), which motivated the update of the paper.

\subsection{Supporting examples}
\paragraph{Absence of specific uniqueness (counterexample to claim (A)).}
Consider the simplest architecture with $\d = (2,2,2)$, $\r = (2,2)$, for which the conditions of \Cref{thm:localization_theorem} are verified due to \Cref{cor:gen_generic}. \Cref{ex:no_localization_uniqueness}  from the last section of the paper provides an example of specific network  of the format $(\d,\r)$ violating claim (A).
We provide below an expanded version of this example.

{\textbf{Example}~\ref{ex:no_localization_uniqueness} (No specific uniqueness).
Consider two polynomials:
\[
\vect{p}(x_1,x_2) = 
\begin{bmatrix}
(x_1^2 + x_2^2)^2 \\
(x_1^2 - x_2^2)^2 
\end{bmatrix}.
\]
Note that $\begin{bmatrix} x_1^2 & x_2^2\end{bmatrix}^{\T} = \rho_2(x_1,x_2)$,
therefore this polynomial vector can be written as
\[
\vect{p}(\vect{x}) = \rho_2\big(\matr{W}_2 \rho_2(\vect{x})\big)
\]
for the following choice of weight matrix:
\[
\matr{W}_2 = 
\begin{bmatrix}
1 & 1 \\
1 & -1
\end{bmatrix},
\]
so that we have
\begin{align*}
    \vect{p}(\vect{x})  &=  \matr{I}_2 \rho_2\big(\matr{W}_2  \matr{I}_2 \rho_2(\vect{x})\big) 
    \\
    &= \hPNN{(2,2)}{(\matr{I}_2,\matr{W}_{2},\matr{I}_2)}\,,
\end{align*}
where $\matr{I}_2$ is the identity matrix. On the other hand, we can use the expansions
\begin{align*}
 x_1^2 + x_2^2 &= \frac{(x_1 + x_2)^2 + (x_1 - x_2)^2}{2} \\ 
 2x_1x_2 &= \frac{(x_1 + x_2)^2 - (x_1 - x_2)^2}{2} 
\end{align*}
and the fact that
\[
(x_1^2 - x_2^2) = (x_1^2 + x_2^2)^2 - (2 x_1 x_2)^2
\]
to show that there exists an alternative hPNN expansion of $\vect{p}(\vect{x})$, summarized as
\[
\vect{p}(\vect{x}) = \matr{W}_3\rho_2\left(\frac{1}{2}\matr{W}_2 \rho_2(\matr{W}_2 \vect{x})\right) = \hPNN{(2,2)}{(\matr{W}_2,\frac12\matr{W}_{2},\matr{W}_3)},
\]
where
\[
\matr{W}_3 = \begin{bmatrix} 1 & 0 \\ 1 & -1\end{bmatrix}.
\]
We see that the two representations are not equivalent: $(\matr{I}_2,\matr{W}_{2},\matr{I}_2) \not\sim (\matr{W}_2,\frac12\matr{W}_{2},\matr{W}_3)$,
as $\matr{W}_2$ cannot be obtained from scaling and permutations  of rows of $\matr{I}_2$.

On the other hand, all the matrices in the expansions ($\matr{I}_2,\matr{W}_2,\matr{W}_3$) are $2\times 2$ invertible and thus, for example, the networks $\hPNN{(2)}{(\matr{I}_2,\matr{W}_{2})}$ and $\hPNN{(2)}{(\matr{W}_{2},\matr{I}_2)}$ have unique representations (similarly to \Cref{ex:running_uniqueness}).
More precisely, all the matrices  ($\matr{I}_2,\matr{W}_2,\matr{W}_3$)  as well as their transposes have their rank and Kruskal rank both equal to $2$, and therefore the conditions of 
\Cref{lem:kruskal-th} are satisfied.}

\paragraph{Absense of generic uniqueness (counterexample to claim (B)).} \Cref{ex:no_localization_uniqueness} are not just an isolated example that can be circumvented by looking at a generic parameter set, as shown in the following example.

\begin{example}[No generic identifiability without further assumptions]
We provide a counter example to the conjecture that localization holds in full generality in the generic sense based on the count of dimension.
Consider the following architecture:
\[
\vect{d} = (2,3,3,1) \quad \text{and}\quad \vect{r} = (3,3).
\]
It is easy to see that the subnetworks $((d_0,d_1,d_2), r_1)$ and $((d_1,d_2,d_3), r_2)$ both satisfy the Kruskal-based criterion in \Cref{cor:gen_generic} as
\[
3 \ge \frac{2d_1 - \min(d_2,d_1)}{\min(d_1,d_0) - 1} = 3, \quad\quad 3 \ge \frac{2d_2 - \min(d_3,d_2)}{\min(d_2,d_1) - 1} = \frac{5}{2},
\]
so both subnetworks are identifiable. However, due to \Cref{prop:identifiability_implies_dimension}, for the global network $(\vect{d},\vect{r})$ to be identifiable the dimension of its associated neurovariety must be equal to
\[
d_0d_1 +d_1d_2 + d_2d_3 - d_1-d_2  =12.
\]
However, the image of $\hPNN{(\d,\r)}{\cdot}$ is in the space degreee-$9$ homogeneous bivariate polynomials, and therefore the neuromanifold (and the neurovariety) lie in $\HSpace{2}{9}$.
But $\HSpace{2}{9}$ has dimension $10$, thus we arrive at a contradition with the identifiability of the $3$-layer network.
\end{example}

\subsection{Statement of changes}
In the camera-ready version, the claims (A) and (B) have been replaced with \Cref{thm:localization_theorem} which uses a stricter condition.
This replacement preserves the main conclusions and {contributions} of the original paper, notably:
\begin{enumerate}
\item \emph{The localization of identifiability}: identifiability of 2-layer subnetworks (composed by two consecutive layers) is sufficient to guarantee identifiability of a deep $L$-layer polynomial network;
\item As a consequence, \emph{uniqueness theorems for tensors can be leveraged} to prove identifiability of deep PNNs;
for example,  well-known Kruskal theorems imply:
\begin{itemize}
\item[a)] that pyramidal networks (and their generalizations) are identifiable in degrees $\ge 2$;
\item[b)] linear bounds on the so-called \textit{activation degree thresholds} (i.e., identifiability holds for degrees linear in the layer widths);
\end{itemize}
\item \emph{Identifiability of networks with biases is implied by identifiability of} (augmented) \emph{bias-free PNN} architectures.
\end{enumerate}

\paragraph{Drawbacks:} {Despite the fact that our main conclusions still hold, the amended version of the localization theorem lead to the following changes:}
\begin{itemize}
\item The theorem and the corresponding corollaries for deep architectures concern  generic properties (and not specific) and finite identifiability  (instead of global identifiability). 

\item \Cref{thm:localization_theorem}  requires a stronger assumption on 2-layer subnetworks: not only each 2-layer block needs to be  identifiable, but also with a possibly smaller number of inputs.

\item This stronger condition weakened the result for networks with a bottleneck layer, but keeps the same conclusion (that is, a decoder network needs to have higher degrees compared to the encoder in order to allow for increasing the layer widths).
\end{itemize}

In the following, we explain the mistake in the original proof of Theorem 11 and discuss the current challenges to extending the amended proof to the localization of global identifiability. 

\subsection{Remark on the mistake in the original proof and related problems}
The mistake in the original proof of Theorem 11 concerned equations (11)--(13) in the original paper (Section A.2.2 of the original supplementary materials), in the induction step of the theorem (going from $L-1$ to $L$ layers). The original argument is based on constructing the polynomial vector $\vect{p}_w'(\vect{z})$ using the
flattening operation $\vect{x}^{\otimes r'}\mapsto\vect{z}$, $\HSpace{d_0}{r'}\to \RR^{(d_0)^{r'}}$. The issue is that the equation (12, original paper) is only valid on a subset of $\vect{z}$ ($\vect{z}$ structured as a tensor power) and thus does not imply (13, original paper) as we originally claimed. 
The absence of this implication broke the inductive argument, requiring the proof to be amended.

An interpretation of this issue is that the flattening destroys the structure from lower layers. In fact, the flattening mapping corresponds to a projection appearing in the computation of the decomposition of polynomials as sums of powers of forms (see the commutative diagram in \cite[Section 4]{lundqvist2019generic}), which makes such a computation (decomposition as a sum of powers of polynomials) very difficult and currently an open problem in general, unless additional knowledge can be used \cite{blomenhofer2025identifiability}.

Our new proof still proceeds similarly by induction (going from $L-1$ to $L$ layers), where the induction step is related to showing non-defectivity (finite identifiability) of a subvariety of variety of powers of forms, thus connecting to subtle questions in algebraic geometry, such as Fr\"{o}berg's conjecture \cite{lundqvist2019generic} {(the latter} not solved in full generality, see \cite{froberg2018algebraic} for an account of recent progress).
Extending finite identifiability to identifiability seems challenging, at least with the techniques we are aware of;
very recent work in algebraic geometry \cite{casarotti2022non,massarenti2024bronowski} shows that this transition (i.e., \emph{finite identifiability implies global identifiability}) is possible for the so-called X-rank decompositions, but this result is only applicable to shallow polynomial networks. We are not aware of any systematic progress in the direction of non-additive structures, of which deep PNNs is a special case. {Thus, the transition from finite to global identifiability of deep PNNs was left as an open conjecture (Conjecture~\ref{conj:global_identifiability}) in the camera-ready version of the paper. We hope that future progress in the field of algebraic geometry will provide the adequate tools to settle this challenging problem\footnote{While preparing the update of the camera-ready version of the paper, we became aware of a recent preprint \cite{massarenti2025alexander} that claims to prove a much stronger (in many cases) result than Theorem 11  and claims global identifiability as well.}.}

\end{document}